\documentclass{article}
\usepackage{fullpage}
\usepackage{microtype}
\usepackage{graphicx}
\usepackage{subfigure}
\usepackage{booktabs} 

\usepackage{etoolbox}

\usepackage[utf8]{inputenc} 
\usepackage[T1]{fontenc}    
\usepackage{hyperref}       
\usepackage{url}            
\usepackage{booktabs}       
\usepackage{amsfonts}       
\usepackage{nicefrac}   
\usepackage[square,sort,comma,numbers]{natbib}
\usepackage{microtype}      
\usepackage{todonotes}
\usepackage{microtype,enumitem}
\usepackage{mathtools} 
\usepackage{amsmath}
\usepackage{bm}
\usepackage{algorithm2e}
\usepackage{algorithm}
\usepackage{amsthm}
\usepackage{hyperref}
\usepackage{amsthm,amsmath}
\usepackage{amssymb}
\usepackage{bm}
\usepackage{bbm}
\usepackage{url}
\usepackage{comment}
\usepackage{multirow}
\usepackage{graphicx}
\usepackage{color}
\usepackage{tikz}
\usepackage{caption}
\usepackage{float}
\usepackage{mathtools}
\usepackage{pifont}
\usepackage[normalem]{ulem}

\newtheorem{assump}{Assumptions}[section]
\newtheorem{coro}{Corollary}[section]

\newtheorem{defn}{Definition}[section]
\newtheorem{prop}{Proposition}[section]
\newtheorem{thm}{Theorem}[section]

\newtheorem{rmq}{Remark}

\newtheorem{lemma}{Lemma}

\DeclareMathOperator*{\argminB}{argmin}   
 
\DeclareMathOperator*{\degreef}{\text{df}} 

\usepackage{wrapfig, framed, caption}
\usepackage{multirow}

\title{A Spectral Analysis of Dot-product Kernels}

\author{
	Meyer Scetbon$^{1}$\qquad
	Zaid Harchaoui$^{2}$\\$^{1}$CREST, ENSAE\\
	$^{2}$ University of Washington
	}
\date{}

\begin{document}

\maketitle

\begin{abstract}
We present eigenvalue decay estimates of integral operators associated with compositional dot-product kernels. The estimates improve on previous ones established for power series kernels on spheres. This allows us to obtain the volumes of balls in the corresponding reproducing kernel Hilbert spaces. We discuss the consequences on statistical estimation with compositional dot product kernels and highlight interesting trade-offs between the approximation error and the statistical error depending on the number of compositions and the smoothness of the kernels.
\end{abstract}

\section*{Introduction}
Dot product kernels are important tools to tackle signal or image data in machine learning, statistical estimation, and computational mathematics~\citep{steinwart2008support,MR1837879,MR2131724,zbMATH01602327}. Normalizing signal and image data to lie on a sphere is common in signal processing and computer vision~\citep{DBLP:journals/ftcgv/MairalBP14}. The shape and the volume of the reproducing kernel Hilbert space is reflected through the decay of the eigenvalues of the associated integral operator. 

The spectrum of eigenvalues of an integral operator associated with the Gaussian radial basis function kernel was first presented by~\citet{smola2001regularization}. The subject was further explored in several papers~\citep{NIPS2009_3628,zwicknagl2009power,azevedo2014sharp}. Recently, dot product kernels have been considered in relation to the theoretical analysis of deep networks~\citep{daniely2016toward, bach2017breaking, song2018mean} and in relation to the design of new kernel-based methods~\citep{mairal2014convolutional,NIPS2015_6602294b}. 

%
%
We present in this paper general estimates of eigenvalue decay of integral operators associated with dot product kernels of the form
\begin{align}
\label{eq:kernel-gene-form}
    K(x,x):= f(\langle x,x'\rangle_{\mathbb{R}^d})
\end{align}
when the function $f$ satisfies regularity conditions on $[-1,1]$. The eigenvalue decay estimates we obtain generalize previous fundamental results on Mercer decompositions and eigenvalue estimates of dot product kernels~\citep{zwicknagl2009power,azevedo2014sharp}. 

%
%
Spherical harmonic functions are central to our analysis. These special functions arise as the eigenfunctions of the integral operator associated with the simple dot product kernel. We highlight the \emph{relationship between $f$, the smoothness properties of the dot product kernel $K$ and the rate of decay of the eigenvalues of the associated integral operator}. The conditions we provide are concrete and verifiable, boiling down to conditions related to a Taylor series expansion of the kernel. This allows us to characterize the reproducing kernel Hilbert space, obtain estimates of the effective dimension in statistical estimation of eigendecay and show the learning rates of the regularized least-squares algorithm in all regimes.

%
%
The results we present here can potentially be used in a number of contexts. We illustrate them on three examples related to the theoretical analysis of deep networks.
These examples allow us to relate the nonlinear activation functions involved in the construction of a deep network to the spectrum of eigenvalues of an integral operator. 
In particular, we show that, as one iterates the composition of a nonlinear function, the effect on the spectrum is different if the nonlinearity is smooth, as in the case of the exponential or the Swish activation~\citep{DBLP:journals/corr/abs-1710-05941}, or non-smooth, as in the special case of the ReLU activation~\citep{dlbook2016}. 

Furthermore our results also establish sufficient conditions for this family of kernels in~(\ref{eq:kernel-gene-form}) to be universal. The universality of a kernel is a key property which guarantees Bayes-consistency~\citep{steinwart2008support}. We show that the universality can be related to smoothness properties of the function $f$. 
%
%


We start in Sec. 1 with a refresher on spherical harmonics and eigenspectra of integral operators associated with dot product kernels. In Sec. 2, we present our main results on eigendecay estimates for these integral operators. Table~\ref{table:2} summarizes our results. In Sec. 3, we explore the statistical implications for regularized least-squares. Finally, in Sec. 4, we discuss examples related to deep networks.  

\paragraph{Related work}
We give here a brief overview of the related works. The variety of the related works shows the versatility of dot product kernels in machine learning and related fields and the importance of general results on the eigendecay of integral operators.

\textit{Dot product kernels.}~\citet{smola2001regularization} provided in a seminal work estimates of eigenvalue decay for simple dot product kernels. Eigendecay estimates for power series kernels were obtained by~\citet{zwicknagl2009power,azevedo2014sharp} in a particular eigendecay regime. We obtain tight eigendecay estimates in a broad range of regimes.
The results we present can also be potentially applied to recent kernel-based alternatives to deep networks~\citep{shankar2020neural}. 

\textit{Regularized least-squares.}~\citet{caponnetto2007optimal} studied regularized least-square in a reproducing kernel Hilbert spaces in the polynomial regime of eigendecay of the spectrum of the integral operator. The polynomial regime is also common in asymptotic statistical results; see also~\citep{gu2013smoothing} for a review. We extend this line of work by delineating and studying the geometric regime and the super-geometric regime. The analysis requires a careful control of the eigendecay. The tools we develop for this purpose can be of independent interest.  

\textit{Deep networks in kernel regime.} Recent work has shown that a fully connected network, \textit{i.e.}, a multi-layer perceptron, trained with gradient descent may behave like a (tractable) kernel method in a certain over-parameterized regime. See~\citep{NEURIPS2020_9afe487d} for instance. The framework we develop here can be applied to such a tangent kernel and obtain the rate of decay of the eigenvalues of the integral operator associated with the kernel. While our theoretical results cover a broad class of activation functions,
very recent work has considered the special case of the ReLU activation and developed a tailored analysis for that case.~\citet{bietti2020deep} argue that in that case the RKHS remains unchanged regardless of the depth of the neural network; see also~\citep{chen2020deep}. In this paper, we relate the behavior of the coefficients in the Taylor expansion of $f$ to the decay of the eigenvalues of the integral operator. 

Moreover, we cover all regimes of eigendecay, including the regime corresponding to the ReLU activation.

\textit{Kernels on spheres and shallow networks.}~\citet{bach2017breaking} used reproducing kernel Hilbert spaces of dot product kernels to analyze single-hidden layer neural networks with input data normalized on the sphere.
We extend that work in that we analyze neural networks with more than one hidden layer in various eigenvalue decay regimes including the geometric and super-geometric ones which were not previously considered~\citep{bach2017breaking}. Moreover, in contrast to~\citep{bach2017breaking} in which the learning problem is assumed to be realizable,~\textit{i.e.}, the target function is assumed to live in the function space, we work with source conditions which allow us to obtain statistical convergence rates under more general assumptions. 

\textit{Hilbertian envelopes of deep networks.}~\citet{zhang2016l1} used reproducing kernel Hilbert spaces to analyze multi-layer neural networks with smooth activation functions. The family of kernels we consider in~Prop.~\ref{prop:RKHS-MLP} generalizes the one studied in that work
as our kernels are adaptive to the nonlinear functions involved in the construction of the network.~\citet{suzuki2018fast} obtained excess risk bounds for multi-layer perceptrons. 
The eigendecay estimates we obtain result in estimates of effective dimension or degrees-of-freedom of multi-layer perceptrons.

%
\section{Dot Product Kernels and their Spectral Decompositions}


Kernels on spheres are ubiquitous in machine learning, statistical estimation, and computational mathematics~\citep{smola2001regularization,steinwart2008support,MR1837879,MR2131724,zbMATH01602327}.
Simple kernels on spheres date back to the seminal works on reproducing kernels~\citep{MR5922}. Examples of simple kernels on spheres include homogeneous polynomial kernels, inhomogeneous polynomial kernels, and Vovk's polynomial kernels~\citep{smola2001regularization,steinwart2008support}.
The analysis of a dot product kernel on the sphere hinges upon a Taylor-like expansion which gives, on the one hand, a spectral decomposition, and on the other hand, a Mercer decomposition. 

\begin{table*}[t!]
\centering
\begin{tabular}{lcccc}
Kernel & $b_m$ &  Space & $\mu$ & $\lambda_m$ \\ 
\hline
\vspace{0.25cm}
$\exp(-c\Vert x-y\Vert_2)$ & $b_m\in\mathcal{O}(m^{-3/2})$  & $S^{d-1}$ & $d\sigma_{d-1}$ &$m^{-d/2}$  \\
\vspace{0.25cm}
$\pi - \arccos(\langle x,x'\rangle)$ & $b_m\in\mathcal{O}(m^{-3/2})$  & $S^{d-1}$ & $d\sigma_{d-1}$ &$m^{-d/2}$  \\
\vspace{0.25cm}
$(2-\langle x,x'\rangle)^{-1}$ & $b_m\in\mathcal{O}(2^{-m})$ & $S^{d-1}$ & $d\sigma_{d-1}$ & $ 
2^{- m}$  \\
\vspace{0.25cm}
$\exp(-b\Vert x-x'\Vert_2^2 )$ & $|b_{m}/b_{m-1}|\in\mathcal{O}(m^{-1})$ & $S^{d-1}$ & $d\sigma_{d-1}$ & $(eb)^m m^{-m+(d-1)/2}$ \\
\hline\\
\vspace{0.25cm}
$\exp(-b(x-x')^2)$ & $|b_{m}/b_{m-1}|\in\mathcal{O}(m^{-1})$  & $[0,1]$ & $\propto\exp(-2ax^2)$ & $(b/(a+b))^{m}$\\
\vspace{0.25cm}
$1+\frac{(-1)^{s-1}(2\pi)^{2s}}{(2s)!}$$B_{2s}(\{x-y\})$ & $/$ & $[0,1]$ & $dx$ & $m^{-2s}$
\end{tabular}
\caption{Eigendecay rates for different kernels. The kernels above the horizontal line are dot-product kernels on the sphere.\label{table:1}}
\end{table*}

\paragraph{Dot product kernel on the sphere.} 
Let $d\geq 2$ and $S^{d-1}$ be the unit sphere of $\mathbb{R}^d$. A kernel of the form
\begin{align}
\label{eq:dot-product-kernel}
    K(x,y)=\sum_{m\geq 0}  b_m (\langle x,y\rangle)^{m}\text{,\quad} x,y\in S^{d-1}
\end{align}
where $(b_m)_{m\geq 0}$ is an absolutely summable sequence is called a \emph{dot product kernel on the sphere} $S^{d-1}$. 

Note that the construction we describe below could be extended to dot product kernels in Hilbert spaces~\citep{MR5922}. If $b_m\geq 0$ for every $m\geq 0$, then $K$ is a \emph{continuous positive semi-definite kernel} on the sphere $S^{d-1}$~\citep{pinkus2004strictly,zwicknagl2009power}. 

\paragraph{Integral operator.}
Let $L_2^{d\sigma_{d-1}}(S^{d-1})$ be the space of real square-integrable functions on the sphere $S^{d-1}$ endowed with its induced Lebesgue measure $d\sigma_{d-1}$ and $|S^{d-1}|$ the surface area of $S^{d-1}$.
Given a positive semi-definite dot product kernel $K$, we define the integral operator on $L_2^{d\sigma_{d-1}}(S^{d-1})$ associated 
$$\begin{array}{ccccc}
T_{K} &: L_2^{ d\sigma_{d-1}}(S^{d-1}) &\to&   L_2^{ d\sigma_{d-1}}(S^{d-1})\\
  	  & f &\to&  \int_{S^{d-1}} K(x,\cdot)f(x)d\sigma_{d-1}(x)
\end{array}$$
By continuity of $K$, $\int_{S^{d-1}} K(x,x)d\sigma_{d-1}(x)$ is finite and $T_K$ is well defined, self-adjoint, positive semi-definite and trace-class~\citep{smale2007learning,steinwart2008support}. 

Denote $H$ the Reproducing Kernel Hilbert Space (RKHS) associated to $K$. The spectral theorem for compact operators~\citep{MR1335452} tells us that for $M\in\mathbb{N}\cup\{+\infty\}$, we have a positive, non-increasing summable sequence $(\eta_m)_{0 \leq m\leq M}$ and a family $(e_m)_{0\leq m\leq M }\subset H$, such that $(\eta_m^{1/2} e_m)_{0\leq m\leq M }$ is an orthonormal system in $H$ while $(e_m)_{0\leq m\leq M }$ is an orthonormal system in $L_2^{d\sigma_{d-1}}(S^{d-1})$ with
\begin{align*}
T_{K} &=\sum_{m=0}^M\eta_m \langle ., e_m\rangle e_m\; .
\end{align*}
where $\langle \cdot,\cdot \rangle$ is in $L_2^{d\sigma_{d-1}}(S^{d-1})$. The system of eigenfunctions of $T_{K}$ is particularly interesting, yet often unknown analytically, except for special classes of kernels. Our class of kernels is one of them.
\medbreak

\paragraph{Spherical harmonics.}
Let $P_{m}(d)$ be the space of homogeneous polynomials of degree $m$ in $d$ variables with real coefficients and $\mathcal{H}_{m}(d)$ be the space of harmonics polynomials defined by 
 \begin{align*}\mathcal{H}_m(d):=\{P\in P_{m}(d)| \Delta P=0\}
  \end{align*}
   where 
   $\Delta \cdot = \sum\limits_{i=1}^d\frac{\partial^2\cdot}{\partial x_i^2} $ 
   is the Laplace operator on $\mathbb{R}^d$~\citep{MR2131724}. Define
 $H_{m}(S^{d-1})$ the space of real spherical harmonics of degree $m$ defined as the set of restrictions of harmonic polynomials in $\mathcal{H}_{m}(d)$ to $S^{d-1}$. Let also $L_2^{d\sigma_{d-1}}(S^{d-1})$ be the space of (real) square-integrable functions on the sphere $S^{d-1}$ endowed with its induced Lebesgue measure $d\sigma_{d-1}$ and $|S^{d-1}|$ the surface area of $S^{d-1}$. $L_2^{d\sigma_{d-1}}(S^{d-1})$ endowed with its natural inner product is a separable Hilbert space and the family of spaces $(H_m(S^{d-1}))_{m\geq 0}$, yields a direct sum decomposition~\citep{frye2012spherical} that reads as
 \begin{align}
 L_2^{d\sigma_{d-1}}(S^{d-1})=\bigoplus_{m\geq 0}H_m(S^{d-1})
 \end{align}
 which means that the summands are closed and pairwise orthogonal.
 Moreover, each $H_m(S^{d-1})$ has a finite dimension $\alpha_{m,d}$ with $\alpha_{0,d}=1$, $\alpha_{1,d}=d$ and for $m\geq 2$
 \begin{align*}
 \alpha_{m,d}=\dbinom{d-1+m}{m}-\dbinom{d-1+m-2}{m-2}
 \end{align*}
 Therefore for all $m\geq 0$, given any orthonormal basis of $H_m(S^{d-1})$, $(Y_{m}^1,...,Y_{m}^{\alpha_{m,d}})$, we can build an Hilbertian basis of $L_2^{d\sigma_{d-1}}(S^{d-1})$ by concatenating these orthonormal bases. Let us denote in the following $(Y_{m}^{l_m})_{m,l_m}$ such an Hilbertian basis of $L_2^{d\sigma_{d-1}}(S^{d-1})$.

\citet{azevedo2014sharp} give a \emph{Mercer decomposition} for a dot product kernel on the sphere of the form~(\ref{eq:dot-product-kernel}). Indeed each spherical harmonics of degree $m$, $Y_{m}\in H_{m}(S^{d-1})$, is an eigenfunction of $T_{K}$ with associated eigenvalue given by the formula
\begin{equation}
\label{eq:formula_eigen-notions}
\begin{aligned}
\lambda_{m}&=\frac{|S^{d-2}|\Gamma((d-1)/2)}{2^{m+1}}\\
&\sum_{s\geq 0}b_{2s+m}\frac{(2s+m)!}{(2s)!}\frac{\Gamma(s+1/2)}{\Gamma(s+m+d/2)}\; . 
\end{aligned}
\end{equation}
Mercer’s theorem then states that the RKHS $H$ associated to the kernel $K$ is the set of functions $f\in L_2(S^{d-1})$ satisfying
\begin{equation}
\begin{aligned}
\label{def:RKHS-Mercer}
     f = &\sum_{\substack{m \geq 0 \\ \lambda_m>0}}\sum_{l_m=1}^{\alpha_{m,d}} a_{m,l_m} Y_m^{l_m}
     \text{\quad s.t.\quad} \\
     &\sum_{\substack{m \geq 0 \\ \lambda_m>0}} \sum_{l_m=1}^{\alpha_{m,d}} \frac{a_{m,l_m}^2}{\lambda_m}<+\infty \; . 
\end{aligned}
\end{equation}
From this definition, we see immediately that as the eigenvalues of the integral operator decreases slower, the volume of the  RKHS becomes larger. More generally, the eigendecay of the integral operator is central to the understanding of a kernel. Note that, in general, the rate of convergence of a sub-sequence of positive $(\lambda_{m})_{m\geq 0}$, ranked in the non-increasing order, is different from the one of $(\eta_{m})_{0\leq m\leq M}$. Indeed we need to take into account the \emph{eigenvalue multiplicities} in order to control the eigendecay. A control of eigendecay is usually out of reach, except for specific kernels on specific domains; see~\citep{steinwart2008support} for a survey and an extended discussion. Indeed upper bounding or lower bounding can quickly result in such loose bounds that they are trivial bounds. A careful control of eigenvalues and their multiplicities is essential.

\paragraph{Eigendecay regimes.}
We distinguish three regimes of decay of eigenvalues: polynomial, geometric, and super-geometric. A polynomial decay corresponds to a rate proportional to $m^{-q}$ with $q>1$; geometric decay to one proportional to $\exp(-\alpha m^q)$ with $\alpha>0$ and $q>0$; super-geometric decay to one faster to geometric decay. We shall see that, depending on the behavior of the coefficients $(b_m)_{m\geq 0}$, dot product kernels relate to one of the above three regimes. In Table \ref{table:1}, we give an overview of dot product kernels on the sphere~\citep{blanchard2008finite,zhang2016l1,bach2017breaking,bach2017equivalence} and give the rates of the sequence $(\lambda_m)_{m\geq 0}$ defined in Eq.~(\ref{eq:formula_eigen-notions}). We also recall the eigendecay for classical kernels from the nonparametric statistics literature~\citep{gu2013smoothing}.

\begin{table*}[t!]
\centering
\begin{tabular}{cccc}
$b_m$ & $\mu_m^{\nu}$  & $\text{df}_{\nu}(\lambda)$ & Rates ($2 \geq \beta>1$)  \\ 
\hline
\vspace{0.25cm}
$b_m\in\mathcal{O}(m^{-\alpha})$, $\alpha>1$ & $m^{-\left(\frac{d/2+\alpha -3/2}{d-1}\right)}$ & $\lambda^{-\frac{d-1}{d/2+\alpha -3/2}} $&  $\ell^{-\frac{\beta}{\beta+q(\alpha,d)}}$, $q(\alpha,d):=\frac{d-1}{d/2+\alpha -3/2}$\\
\vspace{0.25cm}
$b_m\in\mathcal{O}(r^{- m})$, $1>r>0$ &  $e^{- \frac{(d-1)!}{Q_1}\log(1/r)m^{\frac{1}{d-1}}}$ & $\log(\lambda^{-1})^{d-1}$ & $\frac{\log(\ell)^{d-1}}{\ell}$   \\
\vspace{0.25cm}
$\left|b_{m}/b_{m-1}\right|\in O(m^{-\delta})$, $\delta>0$ &  $m^{-\frac{\delta}{s} m^{\frac{1}{d-1}}}$ & $\frac{\log(\lambda^{-1})^{d-1}}{\left(\log(\log(\lambda^{-1}))\right)^{d-1}}$& $\frac{\log(\ell)^{d-1}}{[\log(\log(\ell))]^{d-1}\ell}$ 
\end{tabular}
\caption{Comparison of the convergence rate of regularized least-squares with a dot product kernel on the sphere.\label{table:2}}
\end{table*}

\section{Eigenvalue Decay of Dot Product Kernels on the Sphere}
\label{sec:eigen-dot-product}
We show now how to control the eigenvalue decay of an integral operator associated with a dot product kernel $K$ on the sphere introduced in Eq.~\ref{eq:dot-product-kernel}. We exhibit three regimes: polynomial, geometric and super-geometric. We can be in one or the other regime, depending on the coefficients $(b_m)_{m\geq 0}$ involved.
Recall that for such kernels we have an explicit formulation of the eigenvalues $(\lambda_m)_{m\geq 0}$ associated to the integral operator $T_{K}$ given by (\ref{eq:formula_eigen-notions}). In the following we denote $(\eta_m)_{0\leq m\leq M}$ the positive eigenvalues of the integral operator $T_{K}$ associated to the kernel $K$ ranked in a non-increasing order with their multiplicities, where $M\in\mathbb{N}\cup\{+\infty\}$.

\textbf{Super-Geometric Decay.} A first case of interest is the one studied by~\citet{azevedo2014sharp}. There tight estimates for eigenvalues $(\lambda_m)_{m\geq 0}$ are obtained, under the assumption that $|b_{m}/b_{m-1}|\in O(m^{-\delta})$ when $\delta$ is assumed to be strictly bigger than $1/2$. We present here a more general result, holding for any $\delta>0$. See Appendix~\ref{proof:menegato} for proof.
\begin{prop}
\label{prop:menegato}
If there exists $\delta>0$ such that
\begin{align}
\label{eq:b_n-final-f-2}
\left|\frac{b_{m}}{b_{m-1}}\right|\in O(m^{-\delta})
\end{align}
then, denoting $\alpha =1/(1-2\delta)$, we have
\begin{equation*}
\lambda_m \in \begin{cases}
              \mathcal{O}\left( \frac{b_m}{2^{m}m^{({d-2})/2}} \right) & \text{if $\delta\geq 1/2$} \\
              \\
              \mathcal{O}\left( \frac{m^{\frac{m\delta}{2\alpha}+\frac{1}{\alpha}} b_m}{2^{m+1}m^{({d-2})/2}}\right)  & \text{if $0<\delta < 1/2$}
       \end{cases} \quad
\end{equation*}
\end{prop}
To control the eigenvalue decay associated with such dot product kernels, one needs to take into account the eigenvalue multiplicities. From the above control, we  obtain a tight control of the eigenvalue decay of $T_{K}$ ranked a non-increasing order with their multiplicities. See Appendix~\ref{proof:spectrum_anal_gene} for proof.
\begin{prop}
\label{prop:spectrum_anal_gene}
Under the same assumption as Prop. \ref{prop:menegato},
$M=+\infty$ and there exists a universal constant $c>0$ such that
\begin{align*}
\eta_m&\in \mathcal{O}\left(m^{-\frac{\delta}{s} m^{\frac{1}{d-1}}}\right) \; 
\text{where $s=\frac{4c}{(d-2)!}$}\; .
\end{align*}
\end{prop}

    

\textbf{Geometric Decay.} Another case of interest is when the coefficients $(b_m)_{m\geq 0}$  decrease almost geometrically. Indeed we also obtain a tight control of the sequence $(\lambda_m)_{m\geq 0}$ associated and the eigenvalue decay with their multiplicities of the integral operator $T_{K}$. See Appendix~\ref{proof:eta-control} for proof.
\begin{prop}
\label{prop:eta-control}
If there exist 
$0<r<1$ and $0<c_2\leq c_1$ constants such that for all $m\geq 0$
\begin{align}
\label{eq:assump_geometric-spect}
c_2 r^{m} \leq b_m\leq c_1 r^m \;,
\end{align}
then there exists constants $C_1,C_2>0$ such that
\begin{align*}
C_2 \left(\frac{r}{4}\right)^m \leq \lambda_{m}\leq C_1 r^m\; .
\end{align*}
Moreover,  $M=+\infty$ and there exists universal constants $Q_1>Q_2>0$ such that for all $m\geq 0$
\begin{align*}
C_2 e^{-\frac{(d-1)!}{Q_2}\log(4/r)m^{\frac{1}{d-1}}} \leq \eta_{m}\leq C_1 e^{- \frac{(d-1)!}{Q_1}\log(1/r)m^{\frac{1}{d-1}}} \; .
\end{align*}
\end{prop}
\paragraph{Polynomial Decay.} When $(b_m)_{m\geq 0}$ admits a polynomial decay, we manage to control the rate of the sequence $(\lambda_m)_{m\geq 0}$ associated and the eigenvalue decay with their multiplicities of the integral operator $T_{K}$. 

\begin{prop}
\label{prop:poly-control}
If there exists $\alpha>1$ such that 
\begin{align}
\label{eq:assump_geometric-spect}
 b_m \in\mathcal{O}( m^{-\alpha}) \;,
\end{align}
then we have
\begin{align*}
\lambda_{m}\in\mathcal{O} (m^{-d/2 -\alpha + 3/2 })\; ,
\end{align*}
and
\begin{align*}
 \eta_{m} \in\mathcal{O} ( m^{-d/(2d-2) -\alpha/(d-1) + 3/(2d-2)})\; .
\end{align*}
\end{prop}

\paragraph{Approximation of the RKHS.} 
The eigenvalue decay of the integral operator gives here a concrete notion of the complexity of the function space considered. Roughly speaking, if the $(\eta_m)_{m\geq 0}$ decay rapidly, the kernel $K$ can be well approximated with a small number of terms in the Mercer decomposition. More formally, let $(S,d)$ a metric space, $M\subset S$ and $\epsilon>0$. The $\epsilon$-covering number of $M$ with respect to the metric $d$ denoted $\mathbf{N}(\epsilon,M,d)$ is the smallest number of elements of an $\epsilon$-cover for $M$ using the metric $d$. The $n$-th entropy number of a set $M$ for $n\in\mathbb{N}$ is defined as 
$$\varepsilon_n(M):=\inf\{\epsilon\text{:  } \mathbf{N}(\epsilon,M,d)\leq n\}\; .$$
Let $\mathcal{L}(E,F)$ be the set of all bounded linear operators $T$ between the normed spaces $(E,\Vert \cdot \Vert_E)$ and $(F,\Vert \cdot \Vert_F)$. The entropy numbers of an operator $T\in\mathcal{L}(E,F)$ are defined as 
\begin{align*}
    \varepsilon_n(T):=\varepsilon_n(T(B_E))
\end{align*}
where $B_E$ is the closed unit ball of $E$. Obtaining a control of $\varepsilon_n(T_K)$ leads to a control of the generalization error of the kernel-based method using the kernel $K$~\citep{smola2001regularization}. \citet{10.1007/3-540-49097-3_17} obtained a control of such quantities when the integral operator associated with the kernel has a polynomial or geometric eigendecay regime. Combining this with our results, we can obtain a control of the entropy numbers associated with dot product kernels. 

\begin{coro}
\label{coro-entropy-number}
Let $1>r>0$ and $\alpha>1$. We have
\begin{align*}
b_m\in\mathcal{O}(m^{-\alpha})&\implies \varepsilon_n(T_K)\in\mathcal{O}(\log^{-p(\alpha,d)/2}(n))\\
\text{where } p(\alpha,d)&=\frac{d/2+\alpha-3/2}{d-1}\; ,
\end{align*}
Furthermore we have
\begin{align}
\label{coro:entropy-geo}
b_m\in\mathcal{O}(r^m)\implies& |\log(\varepsilon_n(T_K))|\in\mathcal{O}(\log^{1/d}(n))
\end{align}
\end{coro}
Recall that for a compact set $M$ in finite dimensional space of dimension $d$ the entropy number is $\varepsilon_n(M) \in\mathcal{O}(n^{-1/d})$. What~(\ref{coro:entropy-geo}) tells us is that a nonparametric estimator with that function class basically behaves like an estimator defined on a finite-dimensional space. 
To obtain statistical bounds, all that is left is to substitute the above control into the classical uniform convergence results~\citep{boucheron2005theory,steinwart2008support}. 
In the next section, we focus on regularized least-squares (RLS) with dot product kernels, and, leveraging the eigendecay estimates we obtained in the previous section, we parameterize the statistical bounds in terms of the~\emph{effective dimension}. 

\section{Statistical Bounds for RLS with Dot-product Kernels}
\label{sec:RLS-dot-product}

We present here general statistical bounds on the performance of regularized least-squares estimator of dot product kernels in all the regimes. These statistical bounds can be used to describe the statistical performance of a regularized least-squares estimator when this estimator can be computed exactly in practice. This applies for instance to the kernel-based deep networks developed by~\citet{shankar2020neural} and to kernel-based methods with kernels on spheres~\citep{steinwart2008support}.
We focus on the approximation error (our results do not assume realizability) and statistical prediction (our results match minimax rates) of regularized least-squares (RLS). 

\paragraph{Learning from data.} 
\medbreak
Given a dataset $\mathbf{z} = {(x_i,y_i)}_{i=1}^{\ell}$ independently sampled from an unknown distribution $\rho(x, y)$ on $Z:=\mathcal{X}\times \mathcal{Y}$ where $\mathcal{Y}\subset\mathbb{R}$, the goal of the least-squares regression is to estimate the conditional mean function $f_{\rho}:\mathcal{X}\rightarrow \mathbb{R}$ given by $f_{\rho}(x):= \mathbb{E}(Y |X = x)$. The joint distribution $\rho(x, y)$, the marginal distribution $\nu$, and the conditional distribution $\rho(.|x)$, are related through $\rho(x, y) =\nu(x)\rho(y|x)$.
Consider as hypothesis space a Hilbert space $H$ of functions $f:\mathcal{X}\rightarrow \mathcal{Y}$. For any regularization parameter $\lambda>0$ and training set $\mathbf{z}\in Z^{\ell}$, the regularized least-squares estimator $f_{H,\mathbf{z},\lambda}$ is the solution of
\begin{align}
\label{eq:rls-obj}
\argminB_{f\in H} \left\{ \frac{1}{\ell}\sum_{i=1}^{\ell}( f(x_i)-y_i)^2+\lambda \Vert f\Vert_{H}^2\right\} \; .
\end{align}
In the following, the input space $\mathcal{X}$ is the sphere $S^{d-1}$ and
the hypothesis space considered is the Hilbert space $H$ associated with the dot product kernel $K$ with coefficients $(b_m)_{m\geq 0}$. Define the integral operator on $L_2^{d{\nu}}(S^{d-1})$ associated as 
$T_{\nu}(f)(y) = \int_{S^{d-1}} K(x,y)f(x) d{\nu}(x)$ and denote $(\mu_m^{\nu})_{0\leq m\leq M}$ its positive eigenvalues ranked in a non-increasing order with their multiplicities, where $M\in\mathbb{N}\cup\{+\infty\}$.
The analysis of the convergence rates of RLS relies on the control of the effective dimension defined as
\begin{align*}
    \text{df}_{\nu}(\lambda):=\text{Tr}\left((T_{\nu}+\lambda)^{-1}T_{\nu}\right)=\sum_{m=0}^M \frac{\mu_m^{\nu}}{\mu_m^{\nu}+\lambda}\; .
\end{align*}
In the following, we manage to obtain tight estimates of the $\text{df}_{\nu}$ when $M=+\infty$ and $(\mu_m^{\nu})_{0\leq m\leq M}$ has a geometric decay or a super-geometric one. Note that~\citet{caponnetto2007optimal} previously obtained such a control in the polynomial decay regime. Applying these controls to the results obtained in the previous section allows us to deduce the convergence rates of RLS for dot product kernels in all the regimes. Table~\ref{table:2} summarizes the control of the quantities of interest as well as the convergence rates obtained for RLS associated to dot product kernels in the different regimes.

We work here under general assumptions on the set of probability measures $\rho$ on $S^{d-1}\times\mathcal{Y}$.
\begin{assump}\textbf{ [Probability measures on $S^{d-1}\times \mathcal{Y}$].} Let $\mathcal{P}$ a set of probability measures on $S^{d-1}$. Furthermore, let $B, B_{\infty}, L, \sigma> 0$ be some constants and $0< \beta \leq 2$ a parameter. Then we denote by $\mathcal{F}_{B, B_{\infty}, L, \sigma,\beta}(\mathcal{P})$ the set of all probability measures $\rho$ on $S^{d-1} \times\mathcal{Y}$ with the following properties. \\
(i) $\nu\in\mathcal{P}$\\
(ii)$\int_{S^{d-1}\times\mathcal{Y}}y^2d\rho(x,y) < \infty$, $\Vert f_{\rho}\Vert_{L_{\infty}^{d\nu}}^2 \leq  B_{\infty}$\\
(iii) There exists $g\in L_2^{d\nu}(S^{d-1})$ such that $f_\rho=T_{\nu}^{\beta/2}g$ and $\Vert g\Vert_{\rho}^2\leq B$\\
(iv)  there exist $\sigma>0$ and $L>0$ such that  $\int_{\mathcal{Y}} |y-f_{\rho}(x)|^m d\rho(y|x)\leq \frac{1}{2}m!\sigma^2 L^{m-2}$.
\end{assump}

For $\omega\geq 1$, we denote by $\mathcal{W}_{\omega}$ the set of all probability measures $\nu$ on $S^{d-1}$ which satisfying $d\nu/d\sigma_{d-1}< \omega$.
Furthermore, we introduce for a constant $\omega \geq 1 > h > 0$, $\mathcal{W}_{\omega,h}\subset\mathcal{W}_{\omega}$ the set of probability measures $\nu$ on $S^{d-1}$ which additionally satisfy 
$d\nu/d\sigma_{d-1}> h$.
In the following we denote $\mathcal{G}_{\omega, \beta}:=\mathcal{F}_{H_N,B, B_{\infty}, L, \sigma,\beta}(\mathcal{W}_{\omega})$
and $\mathcal{G}_{\omega,h,\beta}:=\mathcal{F}_{H_N,B, B_{\infty}, L, \sigma,\beta}(\mathcal{W}_{\omega,h})$. 


\paragraph{Geometric Case} We consider the case corresponding to a geometric eigendecay. Here the  coefficients $(b_m)_{m\geq 0}$ in the Taylor decomposition decrease almost geometrically. The first goal is to obtain a control the of the effective dimension associated with the integral operator $T_{\nu}$. 
\begin{prop}
Let $\omega>0$ and $\nu\in\mathcal{W}_{\omega}$. If there exists 
$0<r<1$ 
such that
\begin{align}
b_m\in\mathcal{O}(r^m) \;,
\end{align}
Then there exists a constant $Q>0$ such that all $0<\lambda\leq e^{-1}$ we have
\begin{align*}
\textsl{df}_{\nu}(\lambda)&\leq Q \log(\lambda^{-1})^{d-1}
\end{align*}
\end{prop}
From the above control, we are now able to show the convergence rates for nonparametric regression in the geometric regime. See Appendix~\ref{proof:rls-super-geo} for the proofs.
\begin{thm}
\label{thm:rls-geo}
Let us assume that there exists 
$0<r<1$ 
such that the sequence $(b_m)_{m\geq 0}$ satisfies:
\begin{align}
 b_m \in\mathcal{O}(r^m)
\end{align}
Let also $w\geq 1$ and $0<\beta\leq 2$.
Then there exists a constant $C>0$ independent of $\beta$ such that for any $\rho\in \mathcal{G}_{\omega, \beta}$ and $\tau\geq 1$ we have:
\begin{itemize}
\item  If $\beta> 1$, then there exists $\ell_{\tau,\beta}>0$ such that for all $\ell\geq \ell_\tau$ and $\lambda_{\ell}=\frac{1}{\ell^{1/\beta}}$, with a $\rho^{\ell}$-probability $\geq 1-e^{-4\tau}$ it holds
    \begin{align*}
\Vert f_{H_N,\mathbf{z},\lambda}-f_{\rho}\Vert_{\rho}^2 &\leq 3C\tau^2\frac{\log(\ell)^{d-1}}{\ell}
    \end{align*}
\item If $\beta = 1$, then there exists $\ell_\tau>0$ such that for all $\ell\geq \ell_\tau$ and $\lambda_{\ell}=\frac{\log(\ell)^{\mu}}{\ell}$, $\mu>d-1>0$, with a $\rho^{\ell}$-probability $\geq 1-e^{-4\tau}$ it holds
    \begin{align*}
    \Vert f_{H_N,\mathbf{z},\lambda_{\ell}}-f_{\rho}\Vert_{\rho}^2 &\leq 3C\tau^2\frac{\log(\ell)^{\mu}}{\ell}
    \end{align*}
\item If $\beta < 1$, then there exists $\ell_{\tau,\beta}>0$ such that for all $\ell\geq \ell_\tau$ and $\lambda_\ell=\frac{\log(\ell)^{\frac{d-1}{\beta}}}{\ell}$, with a $\rho^{\ell}$-probability $\geq 1-e^{-4\tau}$ it holds
    \begin{align*}
    \Vert f_{H_N,\mathbf{z},\lambda_{\ell}}-f_{\rho}\Vert_{\rho}^2 &\leq 3C\tau^2\frac{\log(\ell)^{d-1}}{\ell^{\beta}}
    \end{align*}
\end{itemize}
\end{thm}
Note that we have for any $0<\beta\leq 2$ an explicit formulation of $\ell_{\tau,\beta}$ which depend to the constants of the problem, $\tau$ and $\beta$ but we decide to hide them to simplify the exposition of the results. Moreover the rates obtained for RLS are optimal in the minimax sense and therefore no better rate can be obtained within this nonparametric learning framework. See Appendix~\ref{proof:rls-geo} for the proof.

\paragraph{Super-Geometric Case.} Let us now consider the case corresponding to a super-geometric eigendecay. As in the geometric case, we start by obtaining a control of $\text{df}_{\nu}(\lambda)$ associated with $T_{\nu}$ in this regime. See Appendix~\ref{proof:rls-super-geo} for the proof.
\begin{prop}
Let $\omega>0$ and $\nu\in\mathcal{W}_{\omega}$. If there exist 
$0<\delta<1$
such that
\begin{align*}
\left|\frac{b_{m}}{b_{m-1}}\right|\in \mathcal{O}(m^{-\delta})
\end{align*}
Then there exists a constant $Q>0$ such that all $0<\lambda\leq e^{-1}$ we have:
\begin{align*}
    \textsl{df}_{\nu}(\lambda)\leq Q \frac{\log(\lambda^{-1})^{d-1}}{\left(\log(\log(\lambda^{-1}))\right)^{d-1}}
\end{align*}
\end{prop}
From the above control, we also obtain the convergence rates for nonparametric regression in the super-geometric regime. Table~\ref{table:2} shows the rates obtained in that regime when 
$1<\beta \leq 2$. See Appendix~\ref{proof:rls-super-geo} for the full statement of the convergence rates obtained and the proofs.

\paragraph{Polynomial Case.}~\citet{caponnetto2007optimal} obtained the optimal convergence rates of RLS under the \emph{assumption} that the eigenvalue of the integral operator $T_{\nu}$ admits a polynomial decay for a given kernel $K$. Combining their results and the one obtained in Prop.~\ref{prop:poly-control} gives the convergence rates of RLS with dot product kernels in the polynomial regime. See Table~\ref{table:2} for rates obtained.
As expected, the convergence rate becomes faster as the complexity of the model shrinks,~\textit{i.e.}, the convergence rate of the super-geometric regime is faster than the one obtained in the geometric regime; the latter rate is therefore faster than the one in the polynomial regime.

\subsection{Numerical illustrations}

\begin{figure*}[!ht]
\centering
\begin{tabular}{ c  c  c}
\includegraphics[width=0.3\textwidth, height=0.15\textwidth]{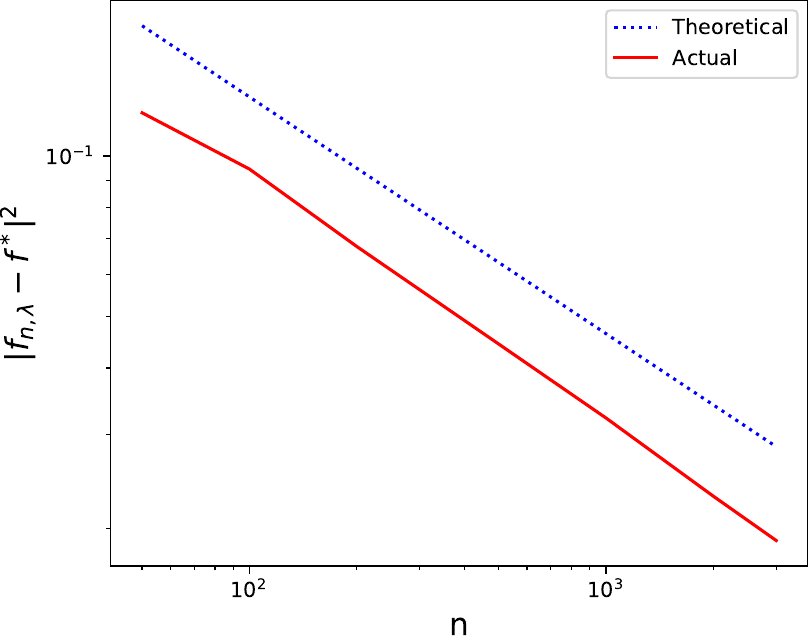} &
\includegraphics[width=0.3\textwidth, height=0.15\textwidth]{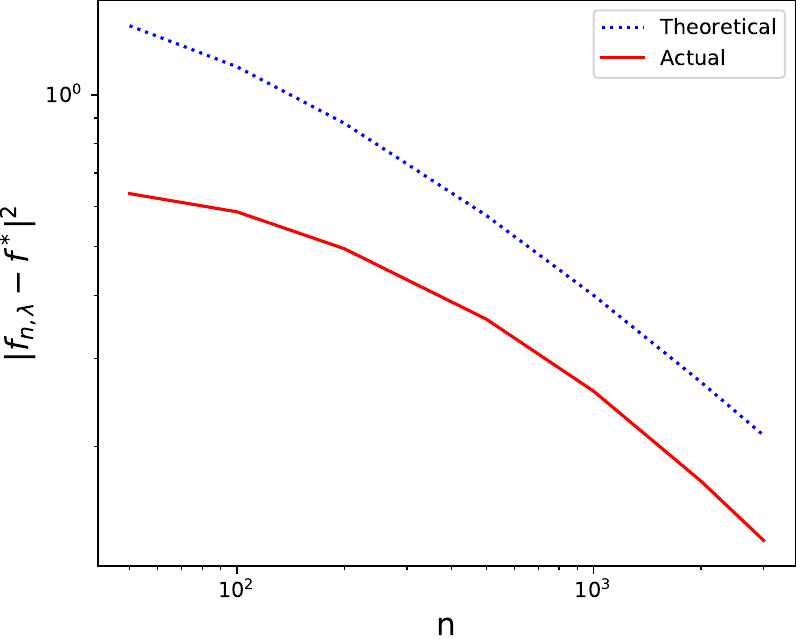} &
\includegraphics[width=0.3\textwidth, height=0.15\textwidth]{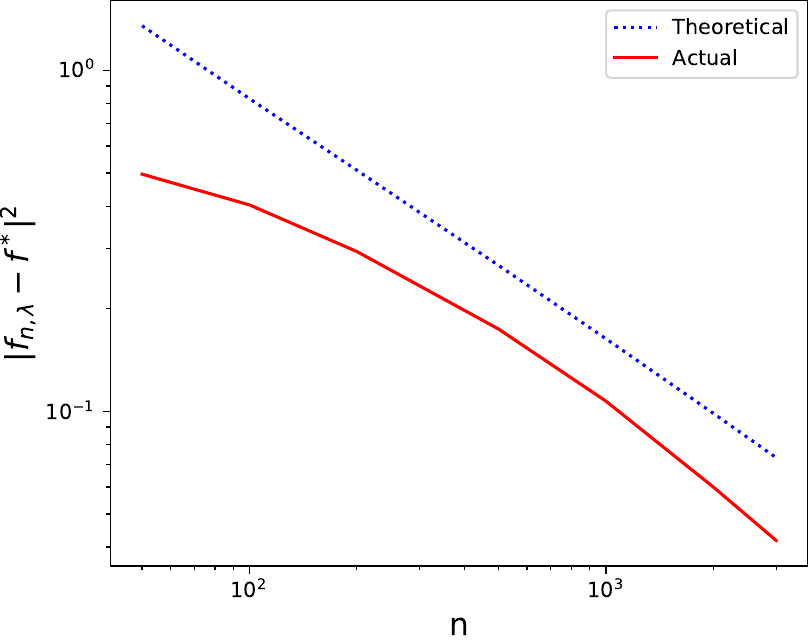} 
\end{tabular}
\caption{Comparison of the actual rates with the theoretical ones of the regularized least-squares estimator in the three different regimes. \emph{Left:} Polynomial case. \emph{Middle:} Geometric case. \emph{Right:} Super-geometric case. \label{fig-exp}}
\end{figure*}

In Figure~\ref{fig-exp}, we compare the theoretical rates of RLS estimator with the actual ones in the different regimes. We use here a similar setup to the one of~\citet[Sec. 4]{bietti2020deep}. In each regime, we consider a specific dot-product kernel. More precisely, for the polynomial, geometric and super-geometric regimes, the kernels considered are respectively, $k(x,y)=\exp(-c\Vert x-y\Vert)$,  $k(x,y)=(2-\langle x,y\rangle)^{-1}$ and $k(x,y)=\exp(-c\Vert x-y\Vert^2)$. To compare the rates, we consider randomly sampled inputs on the unit sphere $S^3$ in 4 dimensions, and generate outputs according to a target function living in the associated RKHS. The regularization parameter of RLS is chosen according the theoretical rules given in the paper.
The actual performance (red curve) is computed on $10,000$ test datapoints. The $x$-axis corresponds to the number of training datapoints. The blue curve corresponds to the theoretical upper rates obtained in our paper. We see that the theoretical rates we obtain match (up to a constant factor) the actual rates of RLS when the number of datapoints is sufficiently large.
\section{Examples related to deep nets}

We give two other applications of the theoretical results from the previous sections related to multi-layer perceptrons (MLP). Before introducing the applications, Let us first recall the definition of an MLP.

\paragraph{Multi-layer perceptrons.} We refer to here as a multi-layer perceptron a fully-connected deep neural network~\citep{shalevshwartz2014}. Let $\mathcal{X}$ the input space be a subset of $\mathbb{R}^d$, $N$ the number of hidden layers, $\bm{\sigma}:=(\sigma_k)_{k=1}^N$ a sequence of nonlinear activation functions and $\mathbf{m}:=(m_{k})_{k=1}^{N}$ a sequence of integers corresponding to the width of the hidden layers. Let us also introduce the width $m_0$ of the input layer which is just the dimension of the input, and $m_{N+1}$ which is the width of the ouput layer supposed to be $1$ here. Then any function defined by a MLP is parameterized by weight matrices $\mathbf{W}:=(W^k)_{k=1}^{N+1}$ where $W^k\in\mathbb{R}^{m_{k-1}\times m_{k}}$ and can be recovered as follows. Let $x\in\mathcal{X}$, define $\mathcal{N}^0(x):=x$ and for $k\in\{1,\dots,N\}$,  denote
$W^k:=(w_1^k,...,w_{m_{k}}^k)$ where for all  $j\in\{1,\dots,m_k\}$ $w_j^k\in\mathbb{R}^{m_{k-1}}$. Then, for all $k\in\{1,...,N\}$, define the $k^{\text{th}}$ layer as
$$\mathcal{N}^{k}(x):=(\sigma_k(\langle \mathcal{N}^{k-1}(x),w_1^k \rangle), ...,\sigma_k(\langle \mathcal{N}^{k-1}(x),w_{m_k}^k \rangle))$$
Finally the function associated to the MLP with weights $\mathbf{W}$ is defined as $\mathcal{N}(x,\mathbf{W}):=\langle \mathcal{N}^N(x),W^{N+1}\rangle_{\mathbb{R}^{m_N}}$.
We shall denote $\mathcal{F}_{\mathcal{X},\bm{\sigma},\mathbf{m}}$ the function space defined by all functions $\mathcal{N}(\cdot,{\mathbf{W}})$ defined as above on $\mathcal{X}$ for any choice of $\mathbf{W}$. We shall also consider the union space
$$\mathcal{F}_{\mathcal{X},\bm{\sigma}}:=\bigcup_{\mathbf{m}\in\mathbb{N_{*}}^N} \mathcal{F}_{\mathcal{X},\bm{\sigma},\mathbf{m}} .$$

We assume in the following that the input data is on the unit sphere ($\mathcal{X}=S^{d-1}$) which is a common assumption in the literature~\citep{elad:2010}.

\paragraph{Neural Tangent Kernels.} Learning the weights of a network using gradient methods results in a non-convex problem. However, in a specific over-parameterized regime, it may be shown that gradient descent can reach a global minimum while keeping weights very close to random initialization. More precisely, for a network $\mathcal{N}(x,\mathbf{W})$ initialized with $\mathbf{W}_0$, learning in the infinitely width regime is then equivalent to a kernel method with a specific kernel referred to as a neural tangent kernel~\citep{NEURIPS2020_9afe487d}~and defined as 
\begin{align*}
    K_{\text{NTK}}(x,x'):=\lim_{\mathbf{m}\rightarrow +\infty}\langle \nabla_{\mathbf{W}}\mathcal{N}(x,\mathbf{W}_0), \nabla_{\mathbf{W}}\mathcal{N}(x',\mathbf{W}_0)\rangle.
\end{align*}
\citet{bietti2019inductive} show that, when the input space is the unit sphere, the neural tangent kernel associated to an MLP is a dot product kernel. More precisely,  consider the case where for all $i\neq j$, $\sigma_i=\sigma_j$ for simplicity and denote $\sigma$ the nonlinear activation considered. Moreover let $(a^{(1)}_i)_{i\geq 0}$ the coefficients in the decomposition of $\sigma$ in the basis of Hermite polynomials, $(a^{(0)}_i)_{i\geq 0}$ the coefficients in the decomposition of  the first-order derivative $\sigma^{\prime}$ of $\sigma$ (assuming that $\sigma$ is differentiable) in the basis of Hermite polynomials, and define
$f_1(x):=\sum_{i\geq 0} (a^{(1)}_i)^2 x^i$ and $f_2(x):=\sum_{i\geq 0} (a^{(0)}_i)^2 x^i$. Then by defining $K_1^{\text{NTK}}(x) = K_1(x)=x$ and for all $i=2,\dots,N$,
\begin{align*}
K_i(x)&=f_1(K_{i-1}(x))\\
K_i^{\text{NTK}}(x) &= K_{i-1}^{\text{NTK}}(x) f_0(K_{i-1}(x)) + K_i(x)\: ,
\end{align*}
we obtain that $K_{\text{NTK}}(x,x')=K_N^{\text{NTK}}(\langle x,x'\rangle)$. Therefore $K_{\text{NTK}}(x,x')$ is a dot product kernel and our results from Sec.~\ref{sec:eigen-dot-product}-\ref{sec:RLS-dot-product} can be applied. In particular, we can obtain estimates of the eigendecay of the integral operator associated with that kernel in all possible regimes of eigendecay. Such results can be applied for example to control the convergence the idealized gradient descent algorithm for a two-layer MLP. The convergence analysis of~\citet[Th. 4.2]{cao2020understanding} can be used.
The convergence results suggests that the magnitude of the projected residuals is driven by the magnitude of the $p_k$-th eigenvalue of the integral operator associated with the NTK. Therefore, during training by gradient descent, a two-layer MLP with a large enough width learns the target function along the eigenfunctions of the integral operator associated with the NTK corresponding to the larger eigenvalues. Moreover this convergence is faster in the polynomial regime than in the geometric regime; and faster in the geometric regime than in the super-geometric regime. 

\paragraph{Hilbertian Envelope of Smooth Multi-layer Perceptrons} The mapping defined by a multi-layer perceptron can be embedded into an appropriate reproducing kernel Hilbert space with respect to the nonlinear activations involved in the network architecture. Moreover the \emph{kernel induced by an MLP is a dot product kernel} of the form~(\ref{eq:dot-product-kernel})
where $(b_m)_{m\geq 0}$ is completely determined by the non linear activation functions $(\sigma_i)_{i=1}^N$ involved in the network. When the input space is the unit sphere $S^{d-1}$ of $\mathbb{R}^d$ with $d\geq 2$, our results from Sec.~\ref{sec:eigen-dot-product}-\ref{sec:RLS-dot-product} can be again applied now to the specific RHKS related to this network. Note that this RKHS is a different object than the one associated with a neural tangent kernel. 

We show that there exists an RKHS containing the function space $\mathcal{F}_{\mathcal{X},\bm{\sigma}}$ for any smooth activation functions $\bm{\sigma}:=(\sigma_i)_{i=1}^N$. Moreover, for well chosen activation maps, the kernel is a \emph{universal kernel} on $\mathcal{X}$ in the sense of~\citet{sriperumbudur2011universality}. 
 The universality property endows a kernel with interesting theoretical properties. Recall that a continuous positive semi-definite  kernel $k$ on a Hausdorff space $\mathcal{X}$ is said to be $cc$-universal if the RKHS, $H$ induced by $k$ is dense in $\mathcal{C}(\mathcal{X})$ endowed with the topology of compact convergence. Proofs are provided in Appendix~\ref{proof:lem-rkhs}. 

\begin{prop}
\label{prop:RKHS-MLP}
Let $\mathcal{X}$ be any subspace of $\mathbb{R}^d$, $N\geq 1$, $(\sigma_i)_{i=1}^N$ functions which admits a Taylor decomposition on $\mathbb{R}$. Moreover let $(f_i)_{i=1}^N$ be the sequence of functions such that for every $i\in\{1,...,N\}$:
\begin{align}
\label{eq:diff-sig-f}
f_i(x)=\sum_{n\geq 0}\frac{|\sigma_i^{(n)}(0)|}{n!}x^n
\end{align}
Then the RKHS $H_N$ of the kernel, $K_{N}$ defined on $\mathcal{X}\times\mathcal{X}$ by
\begin{align}
\label{eq:kernel-MLP}
K_{N}(x,x'):=f_N\circ...\circ f_1(\langle x,x'\rangle_{\mathbb{R}^d})
\end{align}
contains the function space $\mathcal{F}_{\mathcal{X},\bm{\sigma}}$. 
If we assume in addition that for every $i\in\{1,...,N\}$ and $n\in\mathbb{N}$, $\sigma_i^{(n)}(0)\neq 0$, then the kernel $K_N$ is $cc$-universal.
\end{prop}

The RKHS $H_N$ can be seen as an \emph{Hilbertian envelope} of the function space $\mathcal{F}_{\mathcal{X},\bm{\sigma}}$. Note that the RKHS we define above does \emph{not} require that networks are infinitely wide \textit{i.e.} that all layers of the network are infinitely large, as in some previous works~\citep{daniely2016toward, DBLP:journals/corr/abs-1712-00779}. Indeed, for any number of weights $\mathbf{m}:=(m_i)_{i=1}^N$, the function space $\mathcal{F}_{\mathcal{X},\bm{\sigma},\mathbf{m}}$ lies inside the RKHS we have just defined. This is an important difference with previous works where RKHS constructions were used to approach the function spaces related to deep networks.

There are several consequences to the Proposition above. A direct consequence fact is that $\inf_{f\in H_N} \mathbb{E}[(f(X)-Y)^2]  \leq  \inf_{f\in \mathcal{F}_{\mathcal{X},\bm{\sigma}}} \mathbb{E}[(f(X)-Y)^2]$. In other words, the minimum expected risk in $H_N$ is a straightforward lower bound on the minimum expected risk in $F$. A second consequence is that the kernel $K_N$ associated with $H_N$ defined above is universal. Therefore, Bayes-consistency holds for common loss functions and the Hilbert space embedding of probability distributions is injective under general assumptions~\citep{steinwart2008support}.
%
%
%
%

We would like to underscore that, contrary to a common misconception, many kinds of activations functions other than ReLU activation functions have been used with great success by practitioners in a number of applications; see~\citep{DBLP:conf/emnlp/EgerYG18} for a recent account.

\paragraph{Eigendecay and depth.} 
Thanks to our results, when the input lies on the unit sphere, obtaining the eigendecay of the integral operator associated with a kernel, hence the shape and the volume of the RKHS enveloping the MLP function space, boils down to finding the rate of decay of the coefficients in the Taylor decomposition of the kernel. However, as one overlays layers over layers, iterating compositions of nonlinear functions on top of the dot product, the rate of decay of the coefficients $(b_m)_{m\geq 0}$ changes. For example, as one performs the composition of the exponential function $f_1:=\exp(x)$ with the square function $f_2:=x^2$ (yielding a two-layer network), we get $b^{(2)}_m=2^m/m!$, while if we had considered only the exponential function (yielding a single-layer network) we would have got simply $b^{(1)}_m=1/m!$. Generally, as one performs compositions of functions, each  coefficient $b_m$ increases hence $\lambda_m$ increases, resulting in a growth of the RKHS. 

\paragraph{Convergence Rates and Network Depth.} In the geometric and super-geometric case, we can show that increasing the depth of the network does not affect the statistical rates as soon as the resulting kernel obeys the same regime. Indeed, in the geometric regime (resp. super-geometric), the statistical rates obtained in Sec.~\ref{sec:RLS-dot-product} do not depend on the parameter $0<r<1$ (resp. $\delta>0$). Therefore while the resulting composed kernel still obeys the same regime, the statistical rates remain the same. For example, in the previous example we obtain that $b^{(2)}_m=2^m/m!$, therefore $b^{(2)}_{m+1}/b^{(2)}_{m}=2m$. Moreover we also have that $b^{(1)}_{m+1}/b^{(1)}_{m}=m$, therefore both are still in the same regime and the statistical rates for both networks are the same. The two observations above suggest that, from this viewpoint, increasing the depth of a network can increase the size of the target space, \textit{i.e.}, the set of realizable functions, while the statistical rates appear to remain the same at least in the geometric and super-geometric regime. 

\paragraph{What about ReLUs?}
As shown in~\citep{daniely2016toward}, a ReLU network with $N$ layers can be approximated by the kernel $K_N$ introduced in Prop.~\ref{prop:RKHS-MLP} where for all $i=1,\dots,N$ $f_i(x)=g(x):=\frac{1}{\pi}(\pi - \arccos(x))$. To clearly make the distinction, note that here the function space generated by the ReLU network \emph{is not included} in the RKHS built associated to $K_N$, whereas the function space generated by the smooth MLP defined with instead the activation function $g$ at each layer is included. As $\arccos$ admits a Taylor decomposition and the coefficients admits a polynomial decay.

\citet{bietti2020deep} argue in a very recent work that, in that specific case of deep neural networks with ReLU activation functions, increasing the depth does not change the eigendecay of the associated integral operator. Our theoretical results encompass the polynomial regime of decay of eigenvalues that is characteristic of deep neural networks with ReLU activation functions. We can then obtain estimates of the effective dimension and statistical rates for regularized least-squares. Contrasting various viewpoints on ReLU networks is an interesting venue for future work~\citep{DBLP:conf/iclr/OngieWSS20}.


\paragraph{Conclusion.} We have analyzed the eigenvalue decay of integral operators associated with dot product kernels on Euclidean spheres. Depending on the behavior of the coefficients in the Taylor series expansion of the kernel, we have distinguished three regimes of decay of eigenvalues: polynomial, geometric, and super-geometric. In each eigendecay regime, we have provided tight effective dimension estimates as well as learning rates for regularized least-squares. We have further illustrated our results through examples inspired from recent theoretical analyses of deep neural networks.

\paragraph{Acknowledgments.} The authors would like to thank A. Rudi and L. Rosasco for pointing out relevant literature. We acknowledge support from NSF DMS 2023166, CCF 2019844, CIFAR ``Learning in Machines and Brains'' progam, and ``Chaire d’excellence de l’IDEX Paris Saclay''.

\newpage
\bibliography{biblio}
\bibliographystyle{plainnat}
%
\clearpage
\appendix
\onecolumn
\section*{Appendix}



\paragraph{Outline.} 
We provide all the proofs of the mathematical statements in the main text. In Sec.~\ref{sec:RLS}
we provide a general statistical bound on the performance of regularized least-squares under the assumption of a geometric eigenvalue decay. This statistical bound is established in a regime which was not considered in~\citep{caponnetto2007optimal}. We show that the convergence rates we obtain are optimal in the minimax sense from a nonparametric learning viewpoint. Note that we do not consider the polynomial decay regime because it was already studied in~\citep{caponnetto2007optimal} and in~\citep{fischer2017sobolev}. In Sec.~\ref{section:eigendecay} we show how to control the eigendecay of an integral operator associated to a kernel in three different regimes: polynomial, geometric and super-geometric. Combining these two sets of results, in Sec.~\ref{sec:RLS-dot-product-kernel}, we obtain the statistical performance of regularized least-squares with a dot product kernels in each of the three different regimes. In Sec.~\ref{section:MLP-def}, we build a reproducing kernel Hilbert space (RKHS) that contains the function space generated by multi-layer perceptrons. We also provide sufficient conditions for the universal consistency of a dot product kernel. From Cor. 5.29~\citep{steinwart2008support}, Bayes-consistency for common loss functions then immediately follows. We explain how to control the RKHS norm associated with a multi-layer perceptron from our viewpoint. In Sec.~\ref{sec:background}-\ref{sec:useful-thm}-\ref{sec:tech-lem}, we collect useful technical results and basic notions. 

\section{Regularized Least-Squares: General Setting}
\label{sec:RLS}


We consider the standard nonparametric learning framework, where the goal is to learn, from independent and identically distributed examples $\mathbf{z} = \{(x_1, y_1), \dots , (x_\ell, y_\ell)\}$ from an unknown distribution $\rho$, a functional dependency $f_\mathbf{z} : \mathcal{X} \rightarrow  \mathcal{Y} $ between input $x\in \mathcal{X}$ and output $y \in \mathcal{Y}$~\cite{gkkw:2002,steinwart2008support}. The joint distribution $\rho(x, y)$, the marginal distribution $\rho_\mathcal{X}$, and the conditional distribution $\rho(.|x)$, are related through $\rho(x, y) =\rho_{\mathcal{X}}(x)\rho(y|x)$. We call the $f_\mathbf{z}$ the learning method or the estimator and the learning algorithm is the procedure that, for any sample size $\ell \in \mathbb{N}$ and training set $\mathbf{z} \in Z^{\ell}$ yields the learned function or estimator $f_\mathbf{z}$. If the output space $\mathcal{Y}\subset \mathbb{R}$, given a function $f : \mathcal{X} \rightarrow \mathcal{Y}$ , the ability of $f$ to describe the distribution $\rho$ is measured by its expected risk
\begin{align}
R(f):=\int_{\mathcal{X}\times \mathcal{Y}}(f(x)-y)^{2}\,d\rho(x,y) \; .
\end{align}
The minimizer over the space of measurable  $\mathcal{Y}$-valued functions on $\mathcal{X}$ is the function
\begin{align}
f_{\rho}(x):=\int_{\mathcal{Y}} y d\rho(y|x) \; .
\end{align}
The final aim of learning theory is to find an algorithm such that $R(f_\mathbf{z})$ is close to $R(f_\rho)$ with high probability.
Let us now introduce the regularized least-squares algorithm.
Consider as hypothesis space a Hilbert space $H$ of functions $f:\mathcal{X}\rightarrow \mathcal{Y}$. For any regularization parameter $\lambda>0$ and training set $\mathbf{z}\in Z^{\ell}$, the Regularized Least-Square (RLS) estimator $f_{H,\mathbf{z},\lambda}$ is the solution of
\begin{align}
\min_{f\in H} \left\{ \frac{1}{\ell}\sum_{i=1}^{\ell}( f(x_i)-y_i)^2+\lambda \Vert f\Vert_{H}^2\right\} \; .
\end{align}

Let us recall basic definitions yet important for the exposition of the convergence rates. The goal is to establish bounds, either in expectation or in probability, on
$\Vert f_{H,\mathbf{z},\lambda}-f_{\rho} \Vert_{\rho}$ where the norm  $\Vert .\Vert_{\rho}$ is the $L_2^{\rho_{\mathcal{X}}}$-norm and to obtain the convergence rate. Let $\mathcal{F}$ a class of Borel probability distributions on $\mathcal{X}\times\mathcal{Y}$ satisfying general assumptions.
\begin{defn}(Upper rate of convergence)
\label{defn:upper-rate}
A sequence $(a_{\ell})_{\ell\geq 1}$ of
positive numbers is called upper rate of convergence in $L_2^{d\rho_{\mathcal{X}}}$ norm over the model $\mathcal{F}$, for the sequence of estimated solutions $(f_{\mathbf{z},\lambda_{\ell}})_{\ell\geq 1}$ using regularization parameters $(\lambda_{\ell})_{\ell\geq 0}$ if
\begin{equation*}
\lim_{\tau\rightarrow +\infty}\lim \sup_{\ell\rightarrow\infty}\sup_{\rho\in \mathcal{F}} \rho^{\ell}\left(\mathbf{z}:\Vert f_{\mathbf{z},\lambda_{\ell}}-f_{\rho}\Vert_{\rho}^2>\tau a_{\ell}\right)=0 \, .
\end{equation*}
\end{defn}
\begin{defn}(Minimax Lower Rate of Convergence)
A sequence $(w_{\ell})_{\ell\geq 1}$ of
positive numbers is called minimax lower rate of convergence in $L_2^{d\rho_{\mathcal{X}}}$ norm over the model $\mathcal{F}$ if
\begin{align*}
 \lim_{\tau\rightarrow 0^{+}}\lim\inf_{\ell\rightarrow\infty}\inf_{f_{\mathbf{z}}}\sup_{\rho\in\mathcal{F}} \rho^{\ell}\left(\mathbf{z}:\Vert f_{\mathbf{z}}-f_{\rho}\Vert_{\rho}^2>\tau w_{\ell}\right)=1
\end{align*}
where the infimum is taken over all measurable learning methods with respect to $\mathcal{F}$.
\end{defn}
Such sequences $(w_{\ell})_{\ell\geq 1}$ are called minimax lower rates. Every sequence $( \hat{w}_{\ell})_{\ell\geq 1}$ decreasing at least with the same rate as $(w_{\ell})_{\ell\geq 1}$ is a lower rate for this set of probability measures and at least the same lower rate holds on any larger set of probability measures. When the rate of the sequence of learned functions coincides with the minimax lower rates, it is then said to be optimal in the minimax sense.

\medbreak
\textbf{Setting.} 
Let $(\mathcal{X},\mathcal{B})$ a measurable space, $\mathcal{Y}=\mathbb{R}$ and $\rho(x,y)=\rho_{\mathcal{X}}(x)\rho(y|x)$ an unknown distribution on $Z:=\mathcal{X}\times \mathcal{Y}$. We assume that $(\mathcal{X},\mathcal{B})$ is $\rho_{\mathcal{X}}$-complete. Let $H$ be separable reproducing kernel Hilbert space on $\mathcal{X}$ with respect to a measurable and bounded kernel $k$. Define the integral operator on $L_2^{d{\rho_{\mathcal{X}}}}(\mathcal{X})$ associated
$$\begin{array}{ccccc}
T_{\rho} & : &  L_2^{d{\rho_{\mathcal{X}}}}(\mathcal{X}) & \to &  L_2^{d{\rho_{\mathcal{X}}}}(\mathcal{X})\\
 & & f &\to & \int_{\mathcal{X}} k(x,.)f(x){d\rho_{\mathcal{X}}}(x)
\end{array}$$
Since $k$ is bounded, $T_{\rho}$ is self-adjoint, positive semi-definite and trace-class~\cite{blanchard2008finite}. Let us denote $(\mu_i)_{i\in I} \in \ell_1(I)$ its eigenvalues ranked in a non decreasing order. We shall assume in the following that $I=\mathbb{N}$. When $I$ is finite, results can be found for example in~\cite{caponnetto2007optimal}. We work here under general assumptions on the set of probability measures $\rho$ on $\mathcal{X}\times\mathcal{Y}$.
\begin{assump}\textbf{[Probability measures on $\mathcal{X}$].}
Let H be an infinite dimensional separable reproducing kernel Hilbert space (RKHS) on $\mathcal{X}$ with respect to a bounded and measurable kernel $k$. Furthermore, let $C_0,\gamma>0$ be some constants and $\alpha>0$ be a parameter. By $\mathcal{P}_{H,C_0,\gamma,\alpha}$ we denote the set of all probability measures $\nu$ on $\mathcal{X}$ with the following.
\begin{itemize}
    \item The measurable space $(\mathcal{X}, \mathcal{B})$ is $\nu$-complete.
    \item  The eigenvalues fulfill the following upper bound $\mu_i \leq C_0 e^{-\gamma i^{1/\alpha}}$ for all $i \in I$.
\end{itemize}
Let us introduce for a constant $c > 0$ and a parameter $q\geq \gamma>0$ the subset $\mathcal{P}_{H,C_0,\gamma,\alpha,c,q}\subset\mathcal{P}_{H,C_0,\gamma,\alpha}$ of probability measures $\mu$ on $\mathcal{X}$ which additionally have the following property.
\begin{itemize}
    \item The eigenvalues fulfill the following lower bound $\mu_i \geq c e^{-q i^{1/\alpha}}$ for all $i \in I$.
\end{itemize}
We shall denote $\mathcal{P}_{H,\alpha}:= \mathcal{P}_{H,C_0,\gamma,\alpha}$ and  $\mathcal{P}_{H,\alpha,q}:=\mathcal{P}_{H,C_0,\gamma,\alpha,c,q}$.\\
\end{assump}

\begin{assump}\textbf{ [Probability measures on $\mathcal{X} \times \mathcal{Y}$].} Let $H$ be a separable RKHS on $\mathcal{X}$ with respect to a bounded and measurable kernel $k$ and $\mathcal{P}$ a set of probability measures on $\mathcal{X}$. Furthermore, let $B, B_{\infty}, L, \sigma> 0$ be some constants and $0< \beta \leq 2$ a parameter. Then we denote by $\mathcal{F}_{H,B, B_{\infty}, L, \sigma,\beta}(\mathcal{P})$ the set of all probability measures $\rho$ on $\mathcal{X}\times\mathcal{Y}$ with the following properties:
\begin{itemize}
    \item $\rho_{\mathcal{X}}\in\mathcal{P}$, $\int_{\mathcal{X}\times\mathcal{Y}}y^2d\rho(x,y) < \infty$, $\Vert f_{\rho}\Vert_{L_{\infty}^{d\rho_{\mathcal{X}}}}^2 \leq  B_{\infty}$
    \item There exists $g\in L_2^{d\rho_{\mathcal{X}}}(\mathcal{X})$ such that $f_\rho=T_{\rho}^{\beta/2}g$ and $\Vert g\Vert_{\rho}^2\leq B$
    \item there exist $\sigma>0$ and $L>0$ such that
 $\int_{\mathcal{Y}} |y-f_{\rho}(x)|^m d\rho(y|x)\leq \frac{1}{2}m!\sigma^2 L^{m-2}$
\end{itemize}
\end{assump}
A sufficient condition  for the last assumption is that $\rho$ is concentrated on $\mathcal{X} \times [-M,M]$ for some constant $M>0$. 
We shall denote $\mathcal{F}_{H,\alpha,\beta}:=\mathcal{F}_{H,B, B_{\infty}, L, \sigma,\beta}(\mathcal{P}_{H,\alpha})$
and $\mathcal{F}_{H,\alpha,q,\beta}:=\mathcal{F}_{H,B, B_{\infty}, L, \sigma,\beta}(\mathcal{P}_{H,\alpha,q})$.

\subsection{Upper rate of convergence}
We state a theorem establishing $L_2^{d\rho_{\mathcal{X}}}$-convergence rates. 

\begin{thm}
\label{thm:upper-rate}
Let $H$ be a separable RKHS on $\mathcal{X}$ with respect to a bounded and measurable kernel $k$, $\alpha>0$ and $0<\beta\leq 2$. Then for any $\rho\in \mathcal{F}_{H,\alpha,\beta}$ and $\tau\geq 1$ we have:
\begin{itemize}
\item  If $\beta> 1$, then for $\lambda_{\ell}=\frac{1}{\ell^{1/\beta}}$ and $\ell\geq \text{max}\left(e^{\beta},\left(\frac{N}{\beta^{\alpha}}\right)^{\frac{\beta}{\beta-1}}\tau^{\frac{2\beta}{\beta-1}} \log(\ell)^{\frac{\alpha\beta}{\beta-1}}\right)$, with a $\rho^{\ell}$-probability $\geq 1-e^{-4\tau}$ it holds
    \begin{align*}
\Vert f_{H,\mathbf{z},\lambda_{\ell}}-f_{\rho}\Vert_{\rho}^2 &\leq 3C\tau^2\frac{\log(\ell)^{\alpha}}{\ell}
    \end{align*}
    \item  If $\beta = 1$, then for $\lambda_{\ell}=\frac{\log(\ell)^{\mu}}{\ell}$, $\mu>\alpha>0$ and  $\ell\geq \text{max}\left(\exp\left((N\tau)^{\frac{1}{\mu-\alpha}}\right),e^{1}\log(\ell)^{\mu}\right)$, with a $\rho^{\ell}$-probability $\geq 1-e^{-4\tau}$ it holds
    \begin{align*}
    \Vert f_{H,\mathbf{z},\lambda_{\ell}}-f_{\rho}\Vert_{\rho}^2 &\leq 3C\tau^2\frac{\log(\ell)^{\mu}}{\ell^{\beta}}
    \end{align*}
    \item  If $\beta < 1$, then for $\lambda_\ell=\frac{\log(\ell)^{\frac{\alpha}{\beta}}}{\ell}$ and $\ell\geq \text{max}\left(\exp\left((N\tau)^{\frac{\beta}{\alpha(1-\beta)}}\right),e^{1}\log(\ell)^{\frac{\alpha}{\beta}}\right)$, with a $\rho^{\ell}$-probability $\geq 1-e^{-4\tau}$ it holds
    \begin{align*}
    \Vert f_{H,\mathbf{z},\lambda_{\ell}}-f_{\rho}\Vert_{\rho}^2 &\leq 3C\tau^2\frac{\log(\ell)^{\alpha}}{\ell^{\beta}}
    \end{align*}
\end{itemize}
where $N$ and $C$ are constants independent of $\alpha$ and $\beta$. 
\end{thm}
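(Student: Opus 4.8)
The plan is to split $f_{H,\mathbf{z},\lambda}-f_\rho$ into a deterministic approximation (bias) part and a stochastic (variance) part, to bound each of them in terms of $\lambda$ and of the effective dimension $\mathcal{N}(\lambda):=\sum_{i}\mu_i/(\mu_i+\lambda)$ using the operator-theoretic machinery for regularized least-squares, and finally to insert the sharp estimate for $\mathcal{N}(\lambda)$ available in the near-geometric regime $\mu_i\le C_0 e^{-\gamma i^{1/\alpha}}$ and optimize over $\lambda=\lambda_\ell$. Concretely, I would introduce the population regularized solution $f_{H,\lambda}=(C_\rho+\lambda)^{-1}\!\int_{\mathcal{X}\times\mathcal{Y}}y\,k(x,\cdot)\,d\rho$, whose image in $L_2^{d\rho_{\mathcal{X}}}$ equals $T_\rho(T_\rho+\lambda)^{-1}f_\rho$, and use the triangle inequality
\begin{align*}
\Vert f_{H,\mathbf{z},\lambda}-f_\rho\Vert_\rho\le\Vert f_{H,\mathbf{z},\lambda}-f_{H,\lambda}\Vert_\rho+\Vert f_{H,\lambda}-f_\rho\Vert_\rho .
\end{align*}

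For the bias term $\Vert f_{H,\lambda}-f_\rho\Vert_\rho=\Vert\lambda(T_\rho+\lambda)^{-1}T_\rho^{\beta/2}g\Vert_\rho$, the source condition $\Vert g\Vert_\rho^2\le B$ together with the elementary inequality $\sup_{t\ge 0}\lambda\,t^{\beta/2}/(t+\lambda)\le\lambda^{\beta/2}$ for $0<\beta\le 2$ gives $\Vert f_{H,\lambda}-f_\rho\Vert_\rho^2\le B\,\lambda^{\beta}$. For the variance term, I would use the classical decomposition of $f_{H,\mathbf{z},\lambda}-f_{H,\lambda}$ through the empirical covariance operator $C_{\mathbf{z}}=\tfrac1\ell\sum_i k(x_i,\cdot)\otimes k(x_i,\cdot)$ and the empirical vector $\tfrac1\ell\sum_i y_i\,k(x_i,\cdot)$, and then control separately (i) $\Vert(C_\rho+\lambda)^{1/2}(C_{\mathbf{z}}+\lambda)^{-1/2}\Vert$ via a Bernstein inequality for self-adjoint Hilbert--Schmidt operators, and (ii) the centered noise contribution $\tfrac1\ell\sum_i(y_i-f_\rho(x_i))k(x_i,\cdot)$ via a Hilbert-space Bernstein inequality, where the moment assumption $\int|y-f_\rho(x)|^m\,d\rho(y|x)\le\tfrac12 m!\,L^{m-2}$ and $\Vert f_\rho\Vert_\infty^2\le B_\infty$ supply the required moment control. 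On an event of probability $\ge 1-e^{-4\tau}$, and provided $\lambda$ lies in the admissible window $\ell\lambda\gtrsim\tau^2\,\mathcal{N}(\lambda)$ (up to an absolute constant), this yields
\begin{align*}
\Vert f_{H,\mathbf{z},\lambda}-f_{H,\lambda}\Vert_\rho^2\lesssim\tau^2\Big(\frac{\mathcal{N}(\lambda)}{\ell}+\frac{1}{\ell^2\lambda}\Big),
\end{align*}
with constants depending only on $B,B_\infty,L,\sigma$ and the kernel bound. Summing the two contributions gives a bound $\le 3C\tau^2\big(\lambda^{\beta}+\mathcal{N}(\lambda)/\ell+1/(\ell^2\lambda)\big)$, which is the source of the factor $3C\tau^2$.

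The input specific to the present regime is the bound $\mathcal{N}(\lambda)\le c'\,(\log(1/\lambda))^{\alpha}$, valid for $\lambda$ small enough with $c'$ depending only on $C_0,\gamma$. To obtain it I would split $\mathcal{N}(\lambda)=\sum_i\mu_i/(\mu_i+\lambda)$ at the threshold index $i_\lambda$ for which $C_0 e^{-\gamma i^{1/\alpha}}\approx\lambda$, i.e. $i_\lambda\approx(\gamma^{-1}\log(C_0/\lambda))^{\alpha}$: the low-index block contributes at most $i_\lambda$ terms each $\le 1$, while the tail $\sum_{i>i_\lambda}\mu_i/\lambda$ is comparable to a convergent geometric-type series and is $O(1)$. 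This is the step that parallels, and sharpens, the spectral estimates for power-series kernels on spheres. Inserting this and optimizing: for $\beta>1$, $\lambda_\ell=\ell^{-1/\beta}$ makes $\lambda_\ell^{\beta}=\ell^{-1}$ and $1/(\ell^2\lambda_\ell)=\ell^{1/\beta-2}$ both dominated by $\mathcal{N}(\lambda_\ell)/\ell\lesssim\beta^{-\alpha}(\log\ell)^{\alpha}/\ell$, the rate is $(\log\ell)^{\alpha}/\ell$, the admissibility $\ell\lambda_\ell\gtrsim\tau^2\mathcal{N}(\lambda_\ell)$ becomes $\ell\gtrsim(N/\beta^{\alpha})^{\beta/(\beta-1)}\tau^{2\beta/(\beta-1)}(\log\ell)^{\alpha\beta/(\beta-1)}$, and $\ell\ge e^{\beta}$ just ensures $\log(1/\lambda_\ell)\ge 1$; for $\beta=1$, $\lambda_\ell=(\log\ell)^{\mu}/\ell$ with $\mu>\alpha$ makes the bias $(\log\ell)^{\mu}/\ell$ dominate (since $\mathcal{N}(\lambda_\ell)/\ell\lesssim(\log\ell)^{\alpha}/\ell$), with admissibility $(\log\ell)^{\mu-\alpha}\gtrsim N\tau$, i.e. $\ell\ge\exp((N\tau)^{1/(\mu-\alpha)})$; for $\beta<1$, $\lambda_\ell=(\log\ell)^{\alpha/\beta}/\ell$ makes the bias $\lambda_\ell^{\beta}=(\log\ell)^{\alpha}/\ell^{\beta}$ dominate $\mathcal{N}(\lambda_\ell)/\ell$ and $1/(\ell^2\lambda_\ell)$, giving $(\log\ell)^{\alpha}/\ell^{\beta}$, with admissibility $(\log\ell)^{\alpha(1-\beta)/\beta}\gtrsim N\tau$, i.e. $\ell\ge\exp((N\tau)^{\beta/(\alpha(1-\beta))})$, and $\ell\ge e\log(\ell)^{\alpha/\beta}$ ensures $\lambda_\ell\le 1/e$.

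The main obstacle is the variance bound: deriving the high-probability inequality with absolute constants and, more importantly, pinning down the exact admissibility range of $\lambda$. The operator Bernstein step requires $\Vert(C_\rho+\lambda)^{1/2}(C_{\mathbf{z}}+\lambda)^{-1/2}\Vert$ to stay bounded, which forces a constraint of the form $\ell\lambda\gtrsim\tau\,\mathcal{N}(\lambda)$, and it is exactly this constraint, combined with the noise term, that produces the various lower bounds on $\ell$ in the statement; carrying the $\beta$-dependence through this bookkeeping (e.g. the $\beta^{-\alpha}$ factor coming from $\log(1/\lambda_\ell)=\beta^{-1}\log\ell$ when $\lambda_\ell=\ell^{-1/\beta}$) is the delicate point. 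The effective-dimension estimate itself is elementary but must be done carefully to get the clean power $(\log(1/\lambda))^{\alpha}$ rather than a weaker polylogarithmic bound.
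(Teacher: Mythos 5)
Your overall architecture is the same as the paper's: split the error into the approximation term $\Vert f_{H,\rho,\lambda}-f_{\rho}\Vert_{\rho}^2\leq B\lambda^{\beta}$ (identical to the paper's Lemma on the approximation error), bound the degrees-of-freedom $\text{df}(\lambda)\lesssim\log(\lambda^{-1})^{\alpha}$ under the decay $\mu_i\leq C_0e^{-\gamma i^{1/\alpha}}$, and then tune $\lambda_{\ell}$ in the three regimes with exactly the admissibility constraint $\ell\lambda\gtrsim\tau^{2}\text{df}(\lambda)$ that produces the stated lower bounds on $\ell$. The one structural difference is that the paper does not re-derive the variance bound: it invokes the oracle inequality of Fischer--Steinwart (Theorem \ref{thm:stein-oracle-1}), which gives $\Vert f_{\mathbf{z},\lambda}-f_{\rho,\lambda}\Vert_{\rho}^2\leq 128\frac{\tau^2}{\ell}(5\,\text{df}(\lambda)\sigma_{\lambda}^2+K\frac{L_{\lambda}}{\ell\lambda})$ for $\ell\geq N_{\lambda,\tau}$, and then proves a separate lemma showing $N_{\lambda,\tau}\leq N\tau^{2}\log(\lambda^{-1})^{\alpha}/\lambda$ and $\sigma_{\lambda}^2,L_{\lambda}^2\leq V\lambda^{-\max(1-\beta,0)}$. (Your effective-dimension argument also differs in a harmless detail: the tail $\sum_{i>i_{\lambda}}\mu_i/\lambda$ is not $O(1)$ for $\alpha>1$ but of order $\log(\lambda^{-1})^{\alpha-1}$, which is still dominated by the main term, exactly as in the paper's integral computation.)

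The genuine gap is in your claimed variance inequality $\Vert f_{H,\mathbf{z},\lambda}-f_{H,\lambda}\Vert_{\rho}^2\lesssim\tau^2\bigl(\text{df}(\lambda)/\ell+1/(\ell^{2}\lambda)\bigr)$ ``with constants depending only on $B,B_{\infty},L,\sigma$ and the kernel bound.'' In the standard decomposition the centered random element is $(y_i-f_{H,\rho,\lambda}(x_i))k(x_i,\cdot)$, not $(y_i-f_{\rho}(x_i))k(x_i,\cdot)$; equivalently, if you center the noise at $f_{\rho}$ as you propose, you must also control the centered empirical term built from $(f_{\rho}-f_{H,\rho,\lambda})(x_i)k(x_i,\cdot)$, which your sketch omits. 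Its Bernstein control requires $\Vert f_{\rho}-f_{H,\rho,\lambda}\Vert_{L_{\infty}^{d\rho_{\mathcal{X}}}}$, which under the source condition is only bounded by a constant when $\beta\geq 1$; for $\beta<1$ one gets $\sigma_{\lambda}^2,L_{\lambda}^2\lesssim\lambda^{\beta-1}$, and the correct bound carries an extra factor $\lambda^{-(1-\beta)}$, as in the paper's final inequality $\Vert f_{H,\mathbf{z},\lambda}-f_{\rho}\Vert_{\rho}^2\leq C[\lambda^{\beta}+\frac{\tau^2}{\ell\lambda^{\max(0,1-\beta)}}(\log(\lambda^{-1})^{\alpha}+\frac{1}{\ell\lambda})]$. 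Your inequality, as stated, is therefore stronger than what your argument (or the cited machinery) delivers in the case $\beta<1$. The slip is fixable: with $\lambda_{\ell}=\log(\ell)^{\alpha/\beta}/\ell$ the extra $\lambda_{\ell}^{-(1-\beta)}$ factor still leaves the variance of order $\log(\ell)^{\alpha}/\ell^{\beta}$ so the bias dominates and the stated rate survives, but the proof as written has a hole precisely at the step where the $\beta$-dependence of $\sigma_{\lambda}$ and $L_{\lambda}$ must be tracked, which is the content of the paper's Lemma \ref{lem:control-const}.
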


\begin{proof}
To show the result let us first define the degrees-of-freedom $\text{df}(\lambda):=\text{Tr}\left((C_{\rho}+\lambda)^{-1}C_{\rho}\right)=\sum_{i\in I}\frac{\mu_i}{\mu_i+\lambda}$.
Moreover for $\lambda > 0$ the minimization problem
\begin{align*}
\inf_{f\in H}\left\{ R(f)+\lambda \Vert f\Vert_H^2 \right\}
\end{align*}
has a unique solution defined as $f_{H,\rho,\lambda}=(C_{\rho}+\lambda)^{-1}T_{\rho} f_{\rho}\in H$.
To obtain an upper rate of convergence we first control the degrees-of-freedom with respect to the regularization parameter.
\begin{prop}
\label{lem:effective-dim-geo}
Let $\alpha >0$. If there exist $C,\gamma>0$ such that $\mu_i \leq C_0 e^{-\gamma i^{1/\alpha}}$ for all $i \in I$. Then  all $0<\lambda\leq e^{-1}$ we have:
\begin{align*}
\textsl{df}(\lambda)&\leq Q \log(\lambda^{-1})^{\alpha}\\
\text{where \quad} Q&=\gamma^{-\alpha} \left[1+C_0\int_{1}^{\infty} \frac{(\log(u)+1)^{\alpha-1}}{C_0 u+u^2}du\right]
\end{align*}
\end{prop}

\begin{proof}
By definition of $\textsl{df}(\lambda)$ we have
\begin{align*}
\textsl{df}(\lambda)\leq \sum_{m\geq 1}\frac{C_0}{C_0+\lambda e^{\gamma m^{\frac{1}{\alpha}}}} \; .
\end{align*}
Moreover as the $x\rightarrow \frac{C_0}{C_0+\lambda e^{\gamma x^{\frac{1}{\alpha}}}}$ is positive and decreasing, therefore we have
\begin{align*}
\textsl{df}(\lambda)\leq \int_{0}^{+\infty}\frac{C_0}{C_0+\lambda e^{\gamma x^{\frac{1}{\alpha}}}} dx \; .
\end{align*}
Let us consider the following substitution
\begin{align*}
u=\lambda e^{\gamma x^\frac{1}{\alpha}} \; .
\end{align*}
Therefore we have
\begin{align*}
\textsl{df}(\lambda)&\leq \int_{\lambda}^{\infty}C_0\gamma^{-\alpha}\alpha\frac{(\log(u\lambda^{-1}))^{\alpha-1}}{C_0 u+u^2}du\\
&\leq  \int_{\lambda}^{1}C_0\gamma^{-\alpha}\alpha\frac{(\log(u\lambda^{-1}))^{\alpha-1}}{C_0 u+u^2}du+ \int_{1}^{\infty}C_0\gamma^{-\alpha}\alpha\frac{(\log(u\lambda^{-1}))^{\alpha-1}}{C_0 u+u^2}du
\end{align*}
Therefore for all $\lambda\leq e^{-1}$, we obtain that
\begin{align*}
\textsl{df}(\lambda)\leq  \int_{\lambda}^{1}\gamma^{-\alpha}(\alpha)\frac{(\log(u\lambda^{-1}))^{\alpha-1}}{u}du+C_0\gamma^{-\alpha}\alpha\log(\lambda^{-1})^{\alpha-1}\int_{1}^{\infty} \frac{(\log(u)+1)^{\alpha-1}}{C_0 u+u^2}du
\end{align*}
Finally we obtain that
\begin{align*}
\textsl{df}(\lambda)\leq \gamma^{-\alpha}[\log(\lambda^{-1})^{\alpha}] +\log(\lambda^{-1})^{\alpha-1}[C_0\gamma^{-\alpha}\alpha \int_{1}^{\infty} \frac{(\log(u)+1)^{\alpha-1}}{C_0 u+u^2}du]
\end{align*}
Therefore we have
\begin{align*}
\textsl{df}(\lambda)&\leq Q \log(\lambda^{-1})^{\alpha}\\
\text{where \quad} Q&=\gamma^{-\alpha} \left[1+C_0\int_{1}^{\infty} \frac{(\log(u)+1)^{\alpha-1}}{C_0 u+u^2}du\right]
\end{align*}
\end{proof}

Let us now split the error $\Vert f_{H,\mathbf{z},\lambda}-f_{\rho}\Vert_{\rho}^2$ into two parts.
\begin{equation*}
\Vert f_{H,\mathbf{z},\lambda}-f_{\rho}\Vert_{\rho}^2\leq 2 \Vert f_{H,\mathbf{z},\lambda}-f_{H,\rho,\lambda}\Vert_{\rho}^2 + 2\Vert f_{H,\rho,\lambda}-f_{\rho}\Vert_{\rho}^2
\end{equation*}
The following Lemma provides a control of the approximation error.
\begin{lemma}
\label{lem:approx-error}
Let $0 < \beta \leq 2$, $\rho$ a probability measure on $\mathcal{X} \times\mathcal{Y}$ and H a separable RKHS on $\mathcal{X}$ with respect to a bounded and measurable kernel k. If there exist $g\in L_2^{d\rho_{\mathcal{X}}}(\mathcal{X})$ such that $f_\rho=T_{\rho}^{\beta/2}g$, then for all $\lambda > 0$ it holds
$$\Vert f_{H,\rho,\lambda}-f_{\rho}\Vert_{\rho}^2 \leq  \lambda^{\beta} \Vert g\Vert^{2}_{\rho}$$ 
\end{lemma}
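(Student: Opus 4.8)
The plan is to diagonalize everything against the Mercer system $(e_i)_{i\in I}$ and thereby reduce the bound to an elementary scalar inequality.

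First I would write the population regularized solution in coordinates. Every $f\in H$ admits an expansion $f=\sum_{i\in I}a_i\,\mu_i^{1/2}e_i$ with $\|f\|_H^2=\sum_i a_i^2$ and $L_2^{d\rho_{\mathcal{X}}}$-coefficients $\langle f,e_i\rangle_\rho=a_i\mu_i^{1/2}$. Since $\beta>0$, the source condition places $f_\rho=T_\rho^{\beta/2}g$ in $\overline{\text{Im}(T_\rho^{\beta/2})}$, which is contained in the closed linear span of $(e_i)$, so writing $g_i:=\langle g,e_i\rangle_\rho$ one gets the genuine $L_2$-expansion $f_\rho=\sum_i\mu_i^{\beta/2}g_i\,e_i$. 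Using the standard identity $R(f)-R(f_\rho)=\|f-f_\rho\|_\rho^2$, minimizing $R(f)+\lambda\|f\|_H^2=\sum_i\big[(a_i\mu_i^{1/2}-\mu_i^{\beta/2}g_i)^2+\lambda a_i^2\big]+R(f_\rho)$ term by term gives $a_i=\mu_i^{(\beta+1)/2}g_i/(\mu_i+\lambda)$; equivalently this is just $f_{H,\rho,\lambda}=(C_\rho+\lambda)^{-1}T_\rho f_\rho$ read off in the orthonormal basis $(\mu_i^{1/2}e_i)$ of $\overline H$. Either way the $i$-th $L_2$-coefficient of $f_{H,\rho,\lambda}$ is $\mu_i^{\beta/2+1}g_i/(\mu_i+\lambda)$.

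Subtracting, the $i$-th coordinate of $f_{H,\rho,\lambda}-f_\rho$ equals $\mu_i^{\beta/2}g_i\big(\tfrac{\mu_i}{\mu_i+\lambda}-1\big)=-\lambda\,\mu_i^{\beta/2}g_i/(\mu_i+\lambda)$, so by orthonormality of $(e_i)$ in $L_2^{d\rho_{\mathcal{X}}}$,
$$\|f_{H,\rho,\lambda}-f_\rho\|_\rho^2=\lambda^2\sum_{i\in I}\frac{\mu_i^{\beta}}{(\mu_i+\lambda)^2}\,g_i^2 .$$
I would then bound the multiplier uniformly: with $t=\mu_i/\lambda\ge 0$ one has $\mu_i^\beta/(\mu_i+\lambda)^2=\lambda^{\beta-2}\,t^\beta/(1+t)^2$, and since $0<\beta\le 2$ we have $t^\beta\le\max(1,t^2)\le(1+t)^2$, hence each multiplier is at most $\lambda^{\beta-2}$. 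Combining this with Bessel's inequality $\sum_i g_i^2\le\|g\|_\rho^2$ yields $\|f_{H,\rho,\lambda}-f_\rho\|_\rho^2\le\lambda^2\cdot\lambda^{\beta-2}\|g\|_\rho^2=\lambda^\beta\|g\|_\rho^2$, which is the claim.

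The one step that needs genuine care is the bookkeeping between $C_\rho$ acting on $H$ and $T_\rho$ acting on $L_2^{d\rho_{\mathcal{X}}}$: one must verify that $(C_\rho+\lambda)^{-1}T_\rho f_\rho$ is correctly expanded in the $(e_i)$ system (using that $(\mu_i^{1/2}e_i)$ is orthonormal in $H$ while $(e_i)$ is orthonormal in $L_2^{d\rho_{\mathcal{X}}}$), and that the source condition indeed puts $f_\rho$ in the closed span of the $e_i$ so that the displayed identity is an equality rather than an inequality. Once that is settled, everything reduces to the scalar bound $t^\beta\le(1+t)^2$ valid for $\beta\le 2$, which is routine.
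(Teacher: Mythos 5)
Your proposal is correct and follows essentially the same route as the paper: expand $f_{H,\rho,\lambda}=(C_\rho+\lambda)^{-1}T_\rho f_\rho$ and $f_\rho=T_\rho^{\beta/2}g$ in the eigenbasis $(e_i)$, obtain $\|f_{H,\rho,\lambda}-f_\rho\|_\rho^2=\lambda^2\sum_i \mu_i^{\beta}(\mu_i+\lambda)^{-2}g_i^2$, and bound the scalar multiplier uniformly by $\lambda^{\beta-2}$ before invoking the source condition. The only cosmetic difference is that the paper bounds $\sup_{t\ge 0} t^{\beta/2}/(t+\lambda)\le\lambda^{\beta/2-1}$ while you use the equivalent inequality $t^\beta\le(1+t)^2$ after rescaling, which changes nothing substantive.
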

\begin{proof}
By denoting $a_i:=\langle f_{\rho},e_i\rangle_{L_2^{d\rho_{\mathcal{X}}}}$ we have
\begin{align*}
\Big\Vert f_{H,\rho,\lambda}-f_{\rho}\Big\Vert_{\rho}^2&=\Big\Vert \sum_{i\in\mathcal{I}}\frac{\mu_i}{\mu_i+\lambda}a_i e_i - \sum_{i\in\mathcal{I}} a_i e_i\Big\Vert_{\rho}^2\\
&=\lambda^2 \sum_{i\in \mathcal{I}}  \left(\frac{\mu_i^{\frac{\beta}{2}}}{\mu_i+\lambda}\right)^{2}\mu_{i}^{-\beta}a_i^2
\end{align*}
Let us consider the following function for $\lambda>0$ and $0\leq \gamma\leq 1$ $f_{\lambda,\gamma}:t\in\mathbb{R}_{+}\rightarrow \frac{t^{\gamma}}{t+\lambda}$. By considering the derivative of $f_{\lambda,\gamma}$, we obtain that $\text{sup}_{t\in\mathbb{R}_{+}}f_{\lambda,\gamma}\leq \lambda^{\gamma-1}$, therefore we have
\begin{align*}
\Vert f_{H,\rho,\lambda}-f_{\rho}\Vert_{\rho}^2&\leq \lambda^{\beta}\sum_{i\in\mathcal{I}}\mu_{i}^{-\beta}a_i^2\\
&\leq \lambda^{\beta} \Vert g\Vert^{2}_{\rho}
\end{align*}
\end{proof}
Let us now focus on the estimation error. Let $\tau \geq 1$, $\lambda>0$. Thanks to theorem \ref{thm:stein-oracle-1} we have with $\rho^{\ell}$-probability $\geq 1-4e^{-\tau}$,
\begin{equation}
\label{eq:main-upper-inequality}
\Vert f_{\mathbf{z},\lambda}-f_{\rho,\lambda}\Vert_{\rho}^2\leq 128\frac{\tau^2}{\ell}\left(5\text{df}(\lambda)\sigma_{\lambda}^2+ K \frac{L_{\lambda}}{\ell\lambda}   \right)
\end{equation}
as soon as  $\ell\geq N_{\lambda,\tau}$ where $K = \sup_{x\in\mathcal{X}}k(x,x)$ and 
\begin{align*}
N_{\lambda,\tau}&= \max\left(\frac{256\tau^2 K\text{df}(\lambda)}{\lambda},\frac{16\tau K}{\lambda}, \tau \right)\\
\sigma_{\lambda}&=\max(\sigma,\Vert f_{\rho}- f_{\rho,\lambda}\Vert_{L_\infty^{d\rho_{\mathcal{X}}}})\\
L_{\lambda}&=\max(L,\Vert f_{\rho}-f_{\rho,\lambda}\Vert_{L_\infty^{d\rho_{\mathcal{X}}}})
\end{align*}
In the next Lemma, we exhibit a control $N_{\lambda,\tau}$, $L_{\lambda}^2$ and $\sigma_{\lambda}^2$.
\begin{lemma}
\label{lem:control-const}
Let $\rho\in  \mathcal{F}_{H,\alpha,\beta}$ be a probability measure. Then there are constants $N, V >0$ depending only on $\mathcal{F}_{H,\alpha,\beta}$ such that $N_{\lambda,\tau}\leq N \frac{\tau^2\log(\lambda^{-1})^{\alpha}}{\lambda}$  and $L_{\lambda}^2,\sigma_{\lambda}^2\leq \frac{V}{\lambda^{\text{max}(1-\beta,0)}}$ for all $0<\lambda \leq e^{-1}$ and $\tau\geq 1$.
\end{lemma}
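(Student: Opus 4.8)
The plan is to bound the two quantities separately. The estimate on $N_{\lambda,\tau}$ will follow at once from Lemma~\ref{lem:effective-dim-geo}. Recall $N_{\lambda,\tau}=\max\big(\tfrac{256\tau^2K\,\text{df}(\lambda)}{\lambda},\tfrac{16\tau K}{\lambda},\tau\big)$ with $K=\sup_{x\in\mathcal X}k(x,x)$. Since $\rho\in\mathcal F_{H,\alpha,\beta}$ forces $\mu_i\le C_0e^{-\gamma i^{1/\alpha}}$, Lemma~\ref{lem:effective-dim-geo} gives $\text{df}(\lambda)\le Q\log(\lambda^{-1})^\alpha$ for $0<\lambda\le e^{-1}$, with $Q$ depending only on $C_0,\gamma,\alpha$. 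For such $\lambda$ one has $\lambda^{-1}\ge e$, hence $\log(\lambda^{-1})^\alpha\ge 1$, and together with $\tau\ge1$ and $\lambda\le1$ each of the three terms above is at most $\tfrac{\tau^2\log(\lambda^{-1})^\alpha}{\lambda}$ multiplied by $256KQ$, $16K$, $1$ respectively; so $N_{\lambda,\tau}\le N\tfrac{\tau^2\log(\lambda^{-1})^\alpha}{\lambda}$ with $N:=\max(256KQ,16K,1)$, which depends only on $\mathcal F_{H,\alpha,\beta}$.

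For the bound on $\sigma_\lambda$ and $L_\lambda$, note that $\sigma_\lambda^2=\max(\sigma^2,\|f_\rho-f_{\rho,\lambda}\|^2_{L_\infty^{d\rho_\mathcal X}})$, $L_\lambda^2=\max(L^2,\|f_\rho-f_{\rho,\lambda}\|^2_{L_\infty^{d\rho_\mathcal X}})$ and $\lambda^{\max(1-\beta,0)}\le1$, so it suffices to find $V_0$ depending only on $\mathcal F_{H,\alpha,\beta}$ with $\|f_\rho-f_{\rho,\lambda}\|^2_{L_\infty^{d\rho_\mathcal X}}\le V_0/\lambda^{\max(1-\beta,0)}$, and then set $V:=\max(\sigma^2,L^2,V_0)$. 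Writing $f_\rho=\sum_i a_ie_i$, the source condition $f_\rho=T_\rho^{\beta/2}g$, $\|g\|_\rho^2\le B$, gives $a_i=\mu_i^{\beta/2}g_i$ with $\sum_ig_i^2\le B$, and then $f_{\rho,\lambda}=(C_\rho+\lambda)^{-1}T_\rho f_\rho=\sum_i\frac{\mu_i}{\mu_i+\lambda}a_ie_i\in H$ with $\|f_{\rho,\lambda}\|_H^2=\sum_i\frac{\mu_i^{1+\beta}}{(\mu_i+\lambda)^2}g_i^2$ (using that $(\mu_i^{1/2}e_i)_i$ is orthonormal in $H$). Using $\mu_i\le\text{Tr}(T_\rho)\le K$ I would bound $\frac{\mu_i^{1+\beta}}{(\mu_i+\lambda)^2}\le\max(1,K)/\lambda^{\max(1-\beta,0)}$: for $\beta\ge1$, from $(\mu_i+\lambda)^2\ge\mu_i^2$ this is $\le\mu_i^{\beta-1}\le\max(1,K)$, while for $\beta<1$, from $(\mu_i+\lambda)^2=(\mu_i+\lambda)^{1+\beta}(\mu_i+\lambda)^{1-\beta}\ge\mu_i^{1+\beta}\lambda^{1-\beta}$ it is $\le\lambda^{-(1-\beta)}$. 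Hence $\|f_{\rho,\lambda}\|_H^2\le B\max(1,K)/\lambda^{\max(1-\beta,0)}$, and by the reproducing property $\|h\|^2_{L_\infty^{d\rho_\mathcal X}}\le K\|h\|_H^2$ for $h\in H$, so $\|f_{\rho,\lambda}\|^2_{L_\infty^{d\rho_\mathcal X}}\le KB\max(1,K)/\lambda^{\max(1-\beta,0)}$. Finally the triangle inequality with the standing bound $\|f_\rho\|^2_{L_\infty^{d\rho_\mathcal X}}\le B_\infty$ and $\lambda^{\max(1-\beta,0)}\le1$ yield $\|f_\rho-f_{\rho,\lambda}\|^2_{L_\infty^{d\rho_\mathcal X}}\le\big(2B_\infty+2KB\max(1,K)\big)/\lambda^{\max(1-\beta,0)}$, so $V_0:=2B_\infty+2KB\max(1,K)$ finishes the job.

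The step I expect to be the main obstacle is the misspecified regime $\beta<1$: there $f_\rho$ need not lie in $H$, so $f_\rho-f_{\rho,\lambda}\notin H$ and one cannot pass through an $H$-norm directly; this is exactly why the estimate is split so that $\|f_\rho\|_{L_\infty^{d\rho_\mathcal X}}$ is controlled by assumption and only $\|f_{\rho,\lambda}\|_H$ — which genuinely grows like $\lambda^{-(1-\beta)/2}$ — needs to be bounded. Everything else reduces to the elementary inequality $\sup_{t\ge0}t^\gamma/(t+\lambda)\le\lambda^{\gamma-1}$ for $0\le\gamma\le1$ already used in the proof of Lemma~\ref{lem:approx-error}, so no further difficulty is anticipated.
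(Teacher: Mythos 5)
Your proof is correct and follows essentially the same route as the paper's: the bound on $N_{\lambda,\tau}$ comes from Lemma~\ref{lem:effective-dim-geo} exactly as in the paper, and $\Vert f_{\rho}-f_{H,\rho,\lambda}\Vert_{L_\infty^{d\rho_{\mathcal X}}}^2$ is controlled through $2B_\infty+2K\Vert f_{H,\rho,\lambda}\Vert_H^2$ together with the elementary bound $\sup_{t\geq 0} t^{\gamma}/(t+\lambda)\leq \lambda^{\gamma-1}$. The only (harmless) deviation is the case $\beta>1$, where the paper instead uses $f_{\rho}\in H$ and bounds $\Vert f_{\rho}-f_{H,\rho,\lambda}\Vert_H^2\leq B$ directly, while you keep the same decomposition and invoke $\mu_i\leq \mathrm{Tr}(T_{\rho})\leq K$ to get $\mu_i^{\beta-1}\leq \max(1,K)$; both give a constant independent of $\lambda$, so only the value of $V$ changes, not the statement.
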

\begin{proof}
Thanks to Lemma \ref{lem:effective-dim-geo}, we have that
\begin{align*}
\textsl{df}(\lambda)&\leq Q \log(\lambda^{-1})^{\alpha}\\
\text{where \quad} Q&=\gamma^{-\alpha} \left[1+C_0\int_{1}^{\infty} \frac{(\log(u)+1)^{\alpha-1}}{C_0 u+u^2}du\right]
\end{align*}
And by definition we have
\begin{align*}
N_{\lambda,\tau}&\leq \text{max}\left(\frac{256\tau^2 K\text{df}(\lambda)}{\lambda},\frac{16\tau K}{\lambda}, \tau \right)\\
&\leq N \frac{\tau^2 \log(\lambda^{-1})^{\alpha}}{\lambda}\\
\text{where \quad} N&=\text{max}(256KQ,16K,1)
\end{align*}
Moreover we have
\begin{align*}
\Vert f_{\rho}- f_{H,\rho,\lambda}\Vert_{L_\infty^{d\rho_{\mathcal{X}}}}^2&\leq 2(\Vert f_{\rho}\Vert_{L_\infty^{d\rho_{\mathcal{X}}}}^2+\Vert f_{\rho,\lambda}\Vert_{L_\infty^{d\rho_{\mathcal{X}}}}^2\\
&\leq 2B_{\infty}+2K\Vert f_{H,\rho,\lambda}\Vert_{H}^2)\\
\end{align*}
But by denoting $a_i:=\langle f_{\rho},e_i\rangle_{L_2^{d\rho_{\mathcal{X}}}(\mathcal{X})}$ we remark that
\begin{align*}
\Vert f_{H,\rho,\lambda}\Vert_{H}^2=\Big\Vert \sum_{i\in I}\frac{\mu_i}{\mu_i+\lambda}a_i e_i\Big\Vert_H^2= \sum_{i\in I}\left(\frac{\mu_i}{\mu_i+\lambda}\right)^2 a_i^2 \mu_i^{-1}=\sum_{i\in I}\left(\frac{\mu_{i}^{\frac{\beta+1}{2}}}{\mu_i+\lambda}\right)^2 a_i^2 \mu_i^{-\beta}
\end{align*}
Indeed the first equality is due to the fact that we have assumed the existence of $\beta>0$ and $g\in L_2^{d\rho_{\mathcal{X}}}(\mathcal{X})$ such that $f_\rho=T_{\rho}^{\beta/2} g$.
Let us now consider the following function for $\lambda>0$ and $0\leq \gamma\leq 1$ $f_{\lambda,\gamma}:t\in\mathbb{R}_{+}\rightarrow \frac{t^{\gamma}}{t+\lambda}$. By considering the derivative of $f_{\lambda,\gamma}$, we obtain that $\sup\limits_{t\in\mathbb{R}_{+}}f_{\lambda,\gamma}\leq \lambda^{\gamma-1}$
Therefore if $0<\beta \leq 1$ we have that
\begin{align*}
\Vert f_{H,\rho,\lambda}\Vert_{H}^2 \leq \lambda^{\beta-1} \Vert g\Vert_{\rho}^{2}\leq \lambda^{\beta-1} B  
\end{align*}
Finally if $\beta>1$ then $f_{\rho}\in H$ and we have
\begin{align*}
\Vert f_{\rho}- f_{H,\rho,\lambda}\Vert_{L_\infty^{d\rho_{\mathcal{X}}}}^2\leq K\Vert f_{\rho}- f_{H,\rho,\lambda}\Vert_{H}^2
\end{align*}
But we have
\begin{align*}
\Vert f_{\rho}- f_{H,\rho,\lambda}\Vert_{H}^2=\Big\Vert \sum_{i\in I}\frac{\lambda}{\mu_i+\lambda} a_i e_i\Big\Vert_{H}^2=\sum_{i\in I} \left(\frac{\lambda}{\mu_i+\lambda}\right)^2 a_i^2\mu_i^{-1}=\lambda^2\sum_{i\in I} \left(\frac{\mu_i^{\frac{\beta-1}{2}}}{\mu_i+\lambda}\right)^2 a_i^2\mu_i^{-\beta}
\end{align*}
And as $\beta > 1$, we obtain that
\begin{align*}
\Vert f_{\rho}- f_{H,\rho,\lambda}\Vert_{H}^2\leq \lambda^{\beta-1}\Vert g\Vert_{\rho}\leq B 
\end{align*}
Finally by choosing $V=\text{max}(L^2,\sigma^2,2BK+ 2B_{\infty})$ we obtain that
$L_{\lambda}^2,\sigma_{\lambda}^2\leq \frac{V}{\lambda^{\text{max}(1-\beta,0)}}$.
\end{proof}
We can now prove the Theorem. If $\ell\geq N\frac{\tau^2 \log(\lambda^{-1})^{\alpha}}{\lambda}$ with $N$ a constant from Lemma \ref{lem:control-const} we obtain that
\begin{align*}
\Vert f_{H,\mathbf{z},\lambda}-f_{H,\rho,\lambda}\Vert_{\rho}^2&\leq
C_1\left[\frac{\tau^2}{\ell \lambda^{\text{max}(0,1-\beta)}}\left(\log(\lambda^{-1})^{\alpha}+\frac{1}{\ell \lambda}\right)\right]\\
\text{where\quad } C_1&=128*V\text{max}(5*Q,K)
\end{align*}
with a $\rho^{\ell}$-probability $\geq 1-e^{-4\tau}$. We finally have
\begin{align*}
\Vert f_{H,\mathbf{z},\lambda}-f_{\rho}\Vert_{\rho}^2&\leq
C\left[\lambda^{\beta}+\frac{\tau^2}{\ell \lambda^{\text{max}(0,1-\beta)}}\left(\log(\lambda^{-1})^{\alpha}+\frac{1}{\ell \lambda}\right)\right]\\
\text{where\quad } C&=2*\text{max}(B,128*V\text{max}(5*Q,K))
\end{align*}
with a $\rho^{\ell}$-probability $\geq 1-e^{-4\tau}$.
Now if we assume that $\beta>1$ and $\lambda_{\ell} = \frac{1}{\ell^{1/\beta}}$, we obtain that
\begin{align*}
\Vert f_{H,\mathbf{z},\lambda}-f_{\rho}\Vert_{\rho}^2&\leq
C\left[\frac{1}{\ell}+\frac{\tau^2}{\ell}\left(\frac{1}{\beta^{\alpha}}\log(\ell)^{\alpha}+\frac{1}{\ell^{1-\frac{1}{\beta}}}\right)\right]\\
&\leq 3C\tau^2\frac{\log(\ell)^{\alpha}}{\ell}
\end{align*}
with a $\rho^{\ell}$-probability $\geq 1-e^{-4\tau}$ provided that
\begin{align}
\label{condition-ell}
 \ell\geq  \text{max}\left(e^{\beta}, N\frac{\tau^2 \log(\lambda_{\ell}^{-1})^{\alpha}}{\lambda_{\ell}}\right)
\end{align}
Moreover as $\beta>1$, $\frac{\ell \lambda_{\ell}}{\log(\lambda_{\ell}^{-1})^{\alpha}}=\frac{\ell^{1-\frac{1}{\beta}}}{\frac{1}{\beta^{\alpha}}\log(\ell)^{\alpha}}$ goes to infinity as $\ell$ goes to infinity, we conclude that there exist $\ell_{\tau}$ such that for all $\ell\geq \ell_{\tau}$, the condition (\ref{condition-ell}) is satisfied and we finally have
\begin{equation*}
\lim_{\tau\rightarrow +\infty}\lim \sup_{\ell\rightarrow\infty}\sup_{\rho\in \mathcal{F}_{H,\alpha,\beta}} \rho^{\ell}\left(\mathbf{z}:\Vert f_{H,\mathbf{z},\lambda_{\ell}}-f_{\rho}\Vert_{\rho}^2>\tau \frac{\log(\ell)^{\alpha}}{\ell}\right)=0
\end{equation*}
Now if we consider the case where $0<\beta< 1$, by considering $\lambda_\ell=\frac{\log(\ell)^{\frac{\alpha}{\beta}}}{\ell}$ we obtain
 \begin{align*}
 \Vert f_{H,\mathbf{z},\lambda}-f_{\rho}\Vert_{\rho}^2&\leq
C\left[\frac{\log(\ell)^{\alpha}}{\ell^{\beta}}+\frac{\tau^2}{\ell^{\beta}}\left([\log(\ell)-\log(\log(\ell)^{\frac{\alpha}{\beta}})]^{\alpha}+\frac{1}{\log(\ell)^{\frac{\alpha}{\beta}}}\right)\right]\\
&\leq 3C\tau^2\frac{\log(\ell)^{\alpha}}{\ell^{\beta}}
\end{align*}
with a $\rho^{\ell}$-probability $\geq 1-e^{-4\tau}$ provided that 
\begin{align*}
\ell\geq \text{max}\left(N\frac{\tau^2 \log(\lambda_{\ell}^{-1})^{\alpha}}{\lambda_{\ell}},e^1\log(\ell)^{\frac{\alpha}{\beta}}\right)
\end{align*}
Moreover we have that
\begin{align*}
\frac{\ell \lambda_{\ell}}{\log(\lambda_{\ell}^{-1})^{\alpha}}=\frac{\log(\ell)^{\frac{\alpha}{\beta}}}{[\log(\ell)-\log(\log(\ell)^{\frac{\alpha}{\beta}})]^{\alpha}}
\end{align*}
And as $0<\beta<1$, we finally have that  $\frac{\ell \lambda_{\ell}}{\log(\lambda_{\ell}^{-1})^{\alpha}}\rightarrow \infty$ as $\ell$ goes to infinity and we have
\begin{equation*}
\lim_{\tau\rightarrow +\infty}\lim \sup_{\ell\rightarrow\infty}\sup_{\rho\in \mathcal{F}_{H,\alpha,\beta}} \rho^{\ell}\left(\mathbf{z}:\Vert f_{H,\mathbf{z},\lambda_{\ell}}-f_{\rho}\Vert_{\rho}^2>\tau \frac{\log(\ell)^{\alpha}}{\ell^{\beta}}\right)=0
\end{equation*}

Finally let consider the case where $\beta =1$. By considering $\lambda_{\ell}=\frac{\log(\ell)^{\mu}}{\ell}$ with $\mu>\alpha>0$ we obtain
 \begin{align*}
 \Vert f_{H,\mathbf{z},\lambda}-f_{\rho}\Vert_{\rho}^2\leq 3C\tau^2\frac{\log(\ell)^{\mu}}{\ell}
\end{align*}
with a $\rho^{\ell}$-probability $\geq 1-e^{-4\tau}$ provided that 
\begin{align*}
\ell\geq \text{max}\left(N\frac{\tau^2 \log(\lambda_{\ell}^{-1})^{\alpha}}{\lambda_{\ell}}, e^1\log(\ell)^{\mu} \right)
\end{align*}
Finally as $\mu>\alpha>0$ we have that
 $\frac{\ell \lambda_{\ell}}{\log(\lambda_{\ell}^{-1})^{\alpha}}=\frac{\log(\ell)^{\mu}}{[\log(\ell)-\log(\log(\ell)^{\frac{\mu}{\beta}})]^{\alpha}}\rightarrow \infty$ as $\ell$ goes to infinity and the asymptotic result follows.
 \medbreak
 As a reminder, we have obtained the result for $C=2*\text{max}(B,128*V\text{max}(5*Q,K))$ and $N= \text{max}(256KQ,16K,1)$ where $Q=\gamma^{-\alpha} \left[1+C_0\int_{1}^{\infty} \frac{(\log(u)+1)^{\alpha-1}}{C_0 u+u^2}du\right]$,  $V=\text{max}(L^2,\sigma^2,2BK+ 2B_{\infty})$, and $K=  \sup_{x\in\mathcal{X}}k(x,x)$.

\end{proof}


From the above theorem, we obtain the following asymptotic upper rate of convergence
\begin{equation*}
\lim_{\tau\rightarrow +\infty}\lim \sup_{\ell\rightarrow\infty}\sup_{\rho\in \mathcal{F}_{H,\alpha,\beta}} \rho^{\ell}\left(\mathbf{z}:\Vert f_{H,\mathbf{z},\lambda_{\ell}}-f_{\rho}\Vert_{\rho}^2>\tau a_{\ell}\right)=0
\end{equation*}
if one of the following conditions hold
\begin{itemize}
    \item $\beta> 1$, $\lambda_{\ell}=\frac{1}{\ell^{1/\beta}}$ and $a_{\ell} = \frac{\log(\ell)^{\alpha}}{\ell}$
    \item $\beta = 1$, $\lambda_{\ell}=\frac{\log(\ell)^{\mu}}{\ell}$ and $a_{\ell} = \frac{\log(\ell)^{\mu}}{\ell}$ for $\mu>\alpha>0$
    \item $\beta < 1$, $\lambda_\ell=\frac{\log(\ell)^{\frac{\alpha}{\beta}}}{\ell}$ and $a_{\ell} = \frac{\log(\ell)^{\alpha}}{\ell^{\beta}}$
\end{itemize}

\subsection{Lower rate of convergence}
In order to investigate the optimality of the convergence rates, let us take a look at the lower rates. 
\begin{thm}
\label{thm:lower-rate}
Let $H$ be a separable RKHS on $\mathcal{X}$ with respect to a bounded and measurable kernel k, $q\geq \gamma >0$, $\alpha>0$, $0<\beta\leq 2$ such that $\mathcal{P}_{H,\alpha,q}$ is not empty. Then we have 
\begin{align*}
 \lim_{\tau\rightarrow 0^{+}}\lim\inf_{\ell\rightarrow\infty}\inf_{f_{\mathbf{z}}}\sup_{\rho\in\mathcal{F}_{H,\alpha,q,\beta}} \rho^{\ell}\left(\mathbf{z}:\Vert f_{\mathbf{z}}-f_{\rho}\Vert_{\rho}^2>\tau w_{\ell}\right)=1
\end{align*}
where $w_{\ell}=\log(\ell)^{\alpha}/\ell$. The infimum is taken over all measurable learning methods in $\mathcal{F}_{H,\alpha,q,\beta}$.
\end{thm}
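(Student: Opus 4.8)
The plan is to reduce the estimation problem to a multiple hypothesis testing problem via the standard information-theoretic machinery (Fano's method), after constructing a rich finite family of probability measures inside $\mathcal{F}_{H,\alpha,q,\beta}$ whose regression functions are well-separated in $L_2^{d\rho_{\mathcal{X}}}$ but statistically close. First I would fix a probability measure $\nu \in \mathcal{P}_{H,\alpha,q}$ (nonempty by hypothesis), so that the eigenvalues $(\mu_i)$ of the associated integral operator $T_\nu$ satisfy $c\,e^{-q i^{1/\alpha}} \leq \mu_i \leq C_0\, e^{-\gamma i^{1/\alpha}}$, and take $\rho_{\mathcal{X}} = \nu$ for every member of the family. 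The outputs will be generated as $y = f(x) + \varepsilon$ with $\varepsilon$ Gaussian (or bounded) of variance of order $\sigma^2$, which automatically handles the Bernstein-type moment condition; the only real constraint is that each candidate regression function $f$ lie in the source-condition ball $\{T_\nu^{\beta/2} g : \|g\|_\rho^2 \leq B\}$ and satisfy $\|f\|_\infty^2 \leq B_\infty$.

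The core construction: choose an integer $m = m(\ell)$ and consider candidate functions of the form $f_\omega = \gamma_\ell \sum_{i=1}^{m} \omega_i \sqrt{\mu_i^{\,\beta}}\, e_i$ indexed by sign vectors $\omega \in \{-1,+1\}^m$, where $\gamma_\ell$ is an amplitude to be calibrated. Writing $f_\omega = T_\nu^{\beta/2} g_\omega$ with $g_\omega = \gamma_\ell \sum_{i\le m}\omega_i e_i$, the source condition reads $\|g_\omega\|_\rho^2 = \gamma_\ell^2 m \leq B$, so I need $\gamma_\ell^2 \asymp 1/m$ (up to the constant $B$). By the Varshamov–Gilbert bound there is a subset $\Omega \subset \{-1,+1\}^m$ of cardinality $\geq 2^{m/8}$ with pairwise Hamming distance $\geq m/8$; for $\omega \neq \omega'$ in $\Omega$,
\begin{align*}
\|f_\omega - f_{\omega'}\|_\rho^2 = \gamma_\ell^2 \sum_{i\le m} (\omega_i - \omega_i')^2 \mu_i^{\beta} \;\gtrsim\; \gamma_\ell^2 \, m\, \mu_m^{\beta} \;\gtrsim\; \mu_m^{\beta},
\end{align*}
using $\mu_i \geq \mu_m$ for $i \leq m$. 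Meanwhile the Kullback–Leibler divergence between the $\ell$-fold product measures is $\mathrm{KL}(\rho_\omega^\ell \,\|\, \rho_{\omega'}^\ell) = \tfrac{\ell}{2\sigma^2}\|f_\omega - f_{\omega'}\|_\rho^2 \leq \tfrac{\ell}{2\sigma^2}\gamma_\ell^2 \sum_{i\le m}\mu_i^\beta \lesssim \tfrac{\ell}{2\sigma^2}\gamma_\ell^2 \sum_{i=1}^\infty \mu_i^\beta$, and the sum $\sum_i \mu_i^\beta \leq C_0^\beta \sum_i e^{-\beta\gamma i^{1/\alpha}} < \infty$ converges to a constant depending only on $\alpha,\beta,\gamma,C_0$. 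Hence $\mathrm{KL} \lesssim \ell\gamma_\ell^2 \asymp \ell/m$. Fano's inequality then forces the testing error to stay bounded away from zero provided $\ell/m \lesssim m$, i.e. we may take $m \asymp \sqrt{\ell}$ — but that is far too large; instead the binding constraint must come from the separation: we want $\mu_m^\beta \gtrsim w_\ell = \log(\ell)^\alpha/\ell$. Using the lower bound $\mu_m \geq c\,e^{-q m^{1/\alpha}}$, the choice $m^{1/\alpha} \asymp \tfrac{1}{q\beta}\log\ell$, i.e. $m \asymp (\log \ell / (q\beta))^\alpha$, yields $\mu_m^\beta \gtrsim \ell^{-1}$ up to logarithmic factors calibrated to give exactly $\log(\ell)^\alpha/\ell$; one then checks this $m$ also satisfies $\ell/m = o(m)$ (trivially, since $m$ is polylogarithmic while $\ell/m$ grows polynomially — wait, that fails Fano).

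I will therefore use the refined Fano bound where the separation, not the cardinality, is the quantity of interest: with $M = |\Omega| \geq 2^{m/8}$ hypotheses and average KL $\leq \bar K$, the minimax probability of error is at least $1 - (\bar K + \log 2)/\log M$, so I need $\bar K \lesssim m$, i.e. $\ell \gamma_\ell^2 \lesssim m$, i.e. (since $\gamma_\ell^2 \asymp 1/m$) $\ell \lesssim m^2$. With $m$ polylogarithmic this is false, which signals that the amplitude should be calibrated by the KL constraint rather than by the source-condition constraint: set $\gamma_\ell^2 \asymp m/\ell$ (so that $\bar K \asymp m$), then verify the source condition $\gamma_\ell^2 m = m^2/\ell \leq B$ holds for $m \lesssim \sqrt{\ell}$ — again fine for polylog $m$ — and the separation becomes $\|f_\omega - f_{\omega'}\|_\rho^2 \gtrsim \gamma_\ell^2 m \mu_m^\beta \asymp (m^2/\ell)\mu_m^\beta$; choosing $m \asymp (\log\ell)^\alpha$ and $\mu_m^\beta \asymp$ constant order is impossible, so in fact the right regime takes $m$ itself as the running index and the separation $\gtrsim \gamma_\ell^2 \mu_m^\beta \sum_{i \le m} 1$, and the correct calibration balances $\ell^{-1}\sum_{i\le m}\mu_i^\beta$ (the KL) against $\mu_m^\beta$ (the separation per coordinate). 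I expect the clean statement to emerge by taking $m(\ell)$ of order $(\log \ell)^\alpha$ and tracking constants carefully; the exponent $\alpha$ on $\log \ell$ in $w_\ell$ comes precisely from inverting $e^{-q m^{1/\alpha}} \asymp 1/\ell$.

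The main obstacle, and the step requiring the most care, is the simultaneous calibration of the three quantities — the number $m$ of active eigendirections, the common amplitude $\gamma_\ell$, and the noise level — so that (i) every $f_\omega$ genuinely lies in $\mathcal{F}_{H,\alpha,q,\beta}$ (source condition $\|g_\omega\|_\rho^2 \leq B$, sup-norm bound $\|f_\omega\|_\infty^2 \leq B_\infty$, which needs $\sum_i \mu_i^\beta \|e_i\|_\infty^2$ controlled — this uses boundedness of the kernel and a Mercer-type estimate), (ii) the pairwise $L_2^{d\rho_{\mathcal{X}}}$-separation is at least a constant multiple of $w_\ell = \log(\ell)^\alpha/\ell$, and (iii) the average pairwise KL divergence is at most a small constant multiple of $\log|\Omega| \asymp m$. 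The upper and lower eigenvalue bounds in $\mathcal{P}_{H,\alpha,q}$ enter in opposite roles here — the upper bound $\mu_i \leq C_0 e^{-\gamma i^{1/\alpha}}$ makes $\sum_i \mu_i^\beta$ finite (controlling KL and the sup-norm), while the lower bound $\mu_i \geq c e^{-q i^{1/\alpha}}$ guarantees the surviving directions carry enough signal for separation — and reconciling them forces $q \geq \gamma$, which is exactly the hypothesis imposed. Once the family is in place, Fano's lemma (e.g. in the form of Tsybakov's book) gives the conclusion directly; there is no further technical subtlety.
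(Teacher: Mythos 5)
Your high-level strategy (reduction to a finite testing problem, Gaussian conditional distributions so the KL divergence is explicit, a Varshamov--Gilbert-type packing of sign vectors, and a Fano-type conclusion) is the same as the paper's, but your concrete construction has a genuine gap that your own calibration attempts run into and never resolve. With $f_\omega=\gamma_\ell\sum_{i\le m}\omega_i\mu_i^{\beta/2}e_i$, the KL between two hypotheses is of order $\ell\gamma_\ell^2\sum_{i\le m}\mu_i^\beta\asymp \ell\gamma_\ell^2$ (the sum converges to a constant), so Fano forces $\gamma_\ell^2\lesssim m/\ell$; but the separation you can certify is only $\gamma_\ell^2\,m\,\mu_m^\beta\gtrsim \gamma_\ell^2\,m\,c^\beta e^{-q\beta m^{1/\alpha}}$, hence at most of order $(m^2/\ell)e^{-q\beta m^{1/\alpha}}\le C/\ell$ uniformly in $m$, since $m^2e^{-q\beta m^{1/\alpha}}$ is bounded. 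So this family can never certify a rate larger than $1/\ell$: the $\log(\ell)^{\alpha}$ factor is unreachable, no matter how you tune $m$ and $\gamma_\ell$. The structural flaw is that weighting the coefficients by $\mu_i^{\beta/2}$ makes the separation hostage to the smallest active eigenvalue while the KL is governed by the largest ones; your closing paragraph also mis-assigns the roles of the two eigenvalue bounds (in the lower-bound proof the upper bound and the finiteness of $\sum_i\mu_i^\beta$ are not what make the argument work).

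The missing idea, which is how the paper proceeds, is to use \emph{flat} coefficients: $f_\omega=(\epsilon/m)^{1/2}\sum_{i\le m}\omega_i e_i$, so that for sign vectors at Hamming distance at least $m$ the separation is pinned between $\epsilon$ and $4\epsilon$ and the KL is exactly $\tfrac{\ell}{2\hat\sigma^2}\Vert f_\omega-f_{\omega'}\Vert^2\le \tfrac{2\epsilon\ell}{\hat\sigma^2}$; the price is paid in the source condition, since $g_\omega=(\epsilon/m)^{1/2}\sum_{i\le m}\omega_i\mu_i^{-\beta/2}e_i$ has $\Vert g_\omega\Vert^2\le\epsilon\,\mu_m^{-\beta}\le c^{-\beta}\epsilon\,e^{q\beta m^{1/\alpha}}$, and similarly $\Vert f_\omega\Vert_\infty^2\le K\Vert f_\omega\Vert_H^2\le \tfrac{K}{c}\epsilon\,e^{q m^{1/\alpha}}$. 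It is precisely the eigenvalue \emph{lower} bound in $\mathcal{P}_{H,\alpha,q}$ that allows one to take as many as $m\asymp\log(\epsilon^{-1})^{\alpha}$ active directions while keeping $\Vert g_\omega\Vert^2\le B$ and $\Vert f_\omega\Vert_\infty^2\le B_\infty$ (this is the paper's Lemma on $\epsilon_1$, $U$, $v$). Setting $\epsilon=\epsilon_\ell\asymp\tau\log(\ell)^{\alpha}/\ell$ then gives $\log(\text{number of hypotheses})\asymp m\asymp(\log\ell)^{\alpha}$ while the KL is $\asymp\tau(\log\ell)^{\alpha}/\hat\sigma^2$, so the testing bound (the paper uses a DeVore-type lemma in place of Tsybakov's version of Fano, a cosmetic difference) tends to $1$ as $\ell\to\infty$ once $\tau$ is below an explicit threshold --- which is exactly why the statement carries the outer limit $\tau\to0^{+}$. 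Without this switch to flat coefficients and the use of the eigenvalue lower bound to absorb the source-condition cost, your proposal does not yield the claimed rate $w_\ell=\log(\ell)^{\alpha}/\ell$.
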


\begin{proof}
We follow the suggestion of presented in section 3 of \cite{fischer2017sobolev} in order to construct a family of probability measures $\rho_{f}\in\mathcal{F}_{H,\alpha,q,\beta}$ parametrized by suitable vectors $f\in H$. Let $\nu\in \mathcal{P}_{H,\alpha,q}$  and let us denote $\hat{\sigma} = \text{min}(\sigma, L)$. Then we define for a measurable function $f: \mathcal{X}\rightarrow \mathcal{Y}$ and $x\in\mathcal{X}$ the distribution $\rho_f(.|x):= \mathcal{N}(f(x),\hat{\sigma}^2)$ as the normal distribution on $\mathcal{Y} = \mathbb{R}$ with mean $f(x)$ and variance $\hat{\sigma}^2$. Hence $\rho_f(A)=\int_{\mathcal{X}}\int_{\mathcal{Y}}\mathbf{1}_{A}(x,y) d\rho_{f}(y|x)d\nu(x)$ defines a probability measure on $\mathcal{X}\times \mathcal{Y}$ with marginal distribution $\nu$ on $\mathcal{X}$, i.e. $(\rho_f)_{\mathcal{X}} = \nu$. 
For $f\in L_2^{d\nu}(\mathcal{X})$ we have $\int_{\mathcal{X}\times\mathcal{Y}} y^2 d\rho_{f}(x,y) = \hat{\sigma}^2 + \Vert f \Vert_{L_2^{d\nu}(\mathcal{X})}^2 <\infty$ and $f_{\rho_{f}}=f$. Moreover, the properties of the normal distribution
implies $\int_{\mathcal{Y}} |y-f(x)|^m d\rho_{f}(y|x)\leq \frac{1}{2}m!\hat{\sigma}^m$ for all $x\in\mathcal{X}$. Hence if $\Vert f \Vert^{2}_{L_{\infty}^{d\nu}(\mathcal{X})}<B_{\infty}$  and there exist $g\in L_2^{d\nu}(\mathcal{X})$ such that $f_{\rho_{f}}=T_{\rho_{f}}^{\beta/2}g$ and $\Vert g\Vert_{L_2^{d\nu}(\mathcal{X})}^2\leq B$, then $\rho_f\in \mathcal{F}_{H,\alpha,q,\beta}$. So we reduced the construction of probability measures to the construction of appropriate functions $f$. To this end we use binary strings $\omega = (\omega_1,...,\omega_m)\in\{-1,1\}^{m}$ and define for $0<\epsilon <1$
\begin{align*}
g_{\omega}&:=\left(\frac{\epsilon}{m}\right)^{1/2}\sum_{i=1}^m \omega_i \mu_i^{-\beta/2} e_i\\
\text{and \quad} f_{\omega}&:=T_{\nu}^{\beta/2}g_{\omega}=\left(\frac{\epsilon}{m}\right)^{1/2}\sum_{i=1}^m \omega_i  e_i
\end{align*}
Because $f_{\omega}$ is a finite linear combination of the eigenvectors $e_i$ of $T_{\nu}$ it holds $\Vert f_{\omega} \Vert^{2}_{L_{\infty}^{d\nu}(\mathcal{X})}<\infty$  and $\Vert g_{\omega}\Vert_{L_2^{d\nu}(\mathcal{X})}^2\leq +\infty$. First we want to establish sufficient conditions on $\epsilon$ and $m$ such that $\Vert f_{\omega} \Vert^{2}_{L_{\infty}^{d\nu}(\mathcal{X})}<B_{\infty}$ and $\Vert g_{\omega}\Vert_{L_2^{d\nu}(\mathcal{X})}^2\leq B$. In the following we denote  $\Vert .\Vert_{\nu}:=\Vert .\Vert_{L_2^{d\nu}(\mathcal{X})}$. 
\begin{lemma}
\label{lem:epsilon_1}
Let $H$ be a separable RKHS on $\mathcal{X}$ with respect to a bounded and measurable kernel k, $q\geq\gamma >0$, $\alpha>0$, $0<\beta\leq 2$ such that $\mathcal{P}_{H,\alpha,q}$ is not empty and let $\nu\in \mathcal{P}_{H,\alpha,q}$. Then there is are constants $U,v> 0$ and $0 < \epsilon_1 \leq 1$ depending only on $\mathcal{F}_{H,\alpha,q,\beta}$, such that $\Vert f_{\omega} \Vert^{2}_{L_{\infty}^{d\nu}(\mathcal{X})}<B_{\infty}$ and $\Vert g_{\omega}\Vert_{\nu}^2\leq B$ holds for all $0 < \epsilon \leq \epsilon_1$ and all $m \leq  U\log(v\epsilon^{-1})^{\alpha}$.
\end{lemma}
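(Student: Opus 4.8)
The plan is to estimate the two quantities $\Vert g_\omega\Vert_\nu^2$ and $\Vert f_\omega\Vert_{L_\infty^{d\nu}(\mathcal X)}^2$ directly from their spectral expansions, and then to solve for the range of $m$ that keeps both below $B$ and $B_\infty$ respectively, checking that this range has the advertised shape $m \le U\log(v\epsilon^{-1})^\alpha$. First I would handle $g_\omega$. Since $(e_i)$ is orthonormal in $L_2^{d\nu}(\mathcal X)$ and each $\omega_i\in\{-1,1\}$, the cross terms cancel and $\Vert g_\omega\Vert_\nu^2 = \frac{\epsilon}{m}\sum_{i=1}^m \mu_i^{-\beta}$. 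As $(\mu_i)$ is non-increasing, each term is at most $\mu_m^{-\beta}$, so $\Vert g_\omega\Vert_\nu^2 \le \epsilon\,\mu_m^{-\beta}$; plugging in the lower eigenvalue bound $\mu_m \ge c\,e^{-q m^{1/\alpha}}$ that defines $\mathcal P_{H,\alpha,q}$ gives $\Vert g_\omega\Vert_\nu^2 \le \epsilon\, c^{-\beta}e^{\beta q m^{1/\alpha}}$, which is $\le B$ as soon as $m \le (\beta q)^{-\alpha}\log(Bc^\beta\epsilon^{-1})^\alpha$.

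For $\Vert f_\omega\Vert_{L_\infty^{d\nu}(\mathcal X)}$ the naive triangle-inequality estimate $(\epsilon/m)^{1/2}\sum_i\Vert e_i\Vert_\infty$ is too lossy — it carries a spurious factor $\sqrt m$ — so I would instead route it through the RKHS norm. Since $(\mu_i^{1/2}e_i)$ is orthonormal in $H$, $f_\omega\in H$ with $\Vert f_\omega\Vert_H^2 = \frac{\epsilon}{m}\sum_{i=1}^m\mu_i^{-1} \le \epsilon\,\mu_m^{-1}$, and the reproducing property together with boundedness of $k$ gives $\Vert h\Vert_{L_\infty^{d\nu}(\mathcal X)} \le \sqrt K\,\Vert h\Vert_H$ for every $h\in H$, with $K = \sup_x k(x,x)$. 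Hence $\Vert f_\omega\Vert_{L_\infty^{d\nu}(\mathcal X)}^2 \le K\epsilon c^{-1}e^{q m^{1/\alpha}}$, which is $\le B_\infty/2 < B_\infty$ once $m \le q^{-\alpha}\log(B_\infty c(2K)^{-1}\epsilon^{-1})^\alpha$. Crucially this second estimate only involves $\mu_m^{-1}$, not $\mu_m^{-\beta}$, which is why it does not become the binding constraint when $\beta\le 2$ is large.

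It then remains to merge the two constraints, both of the form $m\le(\text{const})\log((\text{const})\,\epsilon^{-1})^\alpha$. I would take $U := \min\{(\beta q)^{-\alpha},\,q^{-\alpha}\}$, $v := \min\{1,\,Bc^\beta,\,B_\infty c(2K)^{-1}\}$ and $\epsilon_1 := v$, and check that for $0<\epsilon\le\epsilon_1$ the quantity $\log(v\epsilon^{-1})$ is non-negative and at most both $\log(Bc^\beta\epsilon^{-1})$ and $\log(B_\infty c(2K)^{-1}\epsilon^{-1})$, while $U^{1/\alpha}=\min\{(\beta q)^{-1},q^{-1}\}$; consequently $m\le U\log(v\epsilon^{-1})^\alpha$ forces $m^{1/\alpha}\le\frac{1}{\beta q}\log(Bc^\beta\epsilon^{-1})$ and $m^{1/\alpha}\le\frac1q\log(B_\infty c(2K)^{-1}\epsilon^{-1})$, hence $\Vert g_\omega\Vert_\nu^2\le B$ and $\Vert f_\omega\Vert_{L_\infty^{d\nu}(\mathcal X)}^2<B_\infty$. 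The constants $U,v,\epsilon_1$ depend only on $\alpha,\beta,q,c,B,B_\infty$ and $K=\sup_xk(x,x)$, i.e. only on $\mathcal F_{H,\alpha,q,\beta}$, as required. The one step warranting care — the \emph{main obstacle}, modest as it is — is precisely the sup-norm control of $f_\omega$: without bounding it by $\sqrt K\,\Vert f_\omega\Vert_H$ one loses the extra $\sqrt m$ and the logarithmic window of the lemma collapses.
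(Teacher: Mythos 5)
Your proof is correct and follows essentially the same route as the paper's: bound $\Vert g_{\omega}\Vert_{\nu}^2$ by $\epsilon\mu_m^{-\beta}\leq c^{-\beta}\epsilon e^{q\beta m^{1/\alpha}}$, bound $\Vert f_{\omega}\Vert_{L_\infty^{d\nu}}^2$ through $K\Vert f_{\omega}\Vert_H^2\leq \frac{K}{c}\epsilon e^{qm^{1/\alpha}}$, then choose $U=\min(q^{-\alpha},(q\beta)^{-\alpha})$ and $v,\epsilon_1$ accordingly. Your extra factor of $2$ to guarantee the strict inequality $<B_\infty$ is a minor refinement of the same argument, not a different approach.
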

\begin{proof}
Let $m\in\mathbb{N}$ and $0<\epsilon<1$. Therefore we have
\begin{align*}
 \Vert g_{\omega}\Vert_{\nu}^2= \frac{\epsilon}{m}\sum_{i=1}^m \omega_i \mu_{i}^{-\beta}
 \leq \epsilon \mu_{m}^{-\beta}
 \leq c^{-\beta}\epsilon e^{q\beta m^{1/\alpha}}
\end{align*}
Moreover we have
\begin{align*}
\Vert f_{\omega} \Vert^{2}_{L_{\infty}^{d\nu}(\mathcal{X})}\leq K\Vert f_{\omega}\Vert_{H}^2\leq \frac{K}{c}\epsilon e^{q m^{1/\alpha}}
\end{align*}
Therefore by considering $U:=\text{min}(\frac{1}{q^{\alpha}},\frac{1}{(q\beta)^{\alpha}})$ and $v:=\text{min}(\frac{BC}{K},B_{\infty}c^{\beta})$, we obtain that for all $\epsilon\leq \epsilon_1:= \text{min}(1,v)$ and all $m \leq  U\log(v\epsilon^{-1})^{\alpha}$, $\Vert f_{\omega} \Vert^{2}_{L_{\infty}^{d\nu}(\mathcal{X})}<B_{\infty}$ and $\Vert g_{\omega}\Vert_{\nu}^2\leq B$.
\end{proof}

If $\omega' = (\omega_1', . . ., \omega_m') \in\{-1,1\}^m$ is an other binary string, we investigate the norm of the difference $f_{\omega}-f_{\omega'}$. We obtain that
\begin{align*}
\Vert f_{\omega}-f_{\omega'}\Vert_{\nu}^2=\frac{\epsilon}{m}\sum_{i=1}^m(\omega_i-\omega_i')^2
\end{align*}
Therefore as $(\omega_i-\omega_i')^2\leq 4$ we obtain that:
\begin{align*}
\Vert f_{\omega}-f_{\omega'}\Vert_{\nu}^2\leq 4\epsilon
\end{align*}
In order to obtain a lower bound,  we assume that $\sum_{i=1}^m (\omega_i - \omega_i')^2 \geq m$, i.e. the distance between $\omega$ and $\omega'$ is large, and finally we obtain
\begin{align*}
\Vert f_{\omega}-f_{\omega'}\Vert_{\nu}^2\geq \epsilon
\end{align*}
The following theorem is a restatement of Theorem 3.1 of \cite{devore2006approximation} in our setting.
\begin{thm}
\label{thm:before-thm}
Let $H$ be a separable RKHS on $\mathcal{X}$ with respect to a bounded and measurable kernel k, $q\geq \gamma>0$, $\alpha>0$, $0<\beta\leq 2$ such that $\mathcal{P}_{H,\alpha,q}$ is not empty and let $\nu\in \mathcal{P}_{H,\alpha,q}$. Let also $\mathbf{z}\rightarrow f_{\mathbf{z}}$ an arbitrary measurable learning method  for $\ell\in\mathbb{N}$ and $\mathbf{z}\in Z^{\ell}$.  Then there exist $0<\epsilon_0\leq 1$ such that for all $\epsilon \leq \epsilon_0$ and for all $\ell\in\mathbb{N}$ there is a $\rho\in \mathcal{F}_{H,\alpha,q,\beta}$ such that 
\begin{align*}
\rho^{\ell}\left(\mathbf{z}: \Vert f_{\mathbf{z}}-f_{\rho}\Vert_{\rho}^2>\frac{\epsilon}{4}\right)\geq\min\left(\frac{N_{\epsilon}^*}{N_{\epsilon}^* +1}, \hat{\eta}\sqrt{N_{\epsilon}^*}e^{-\frac{4\epsilon \ell}{2\hat{\sigma}^2}}\right)
\end{align*}
where $N_{\epsilon}^*=e^{\frac{U}{48}\log(v\epsilon^{-1})^{\alpha}}$ and $\hat{\eta}=e^{-3/e}$.
\end{thm}
\begin{proof}
Thanks to Lemma \ref{lem:epsilon_1},  there exists $1\geq \epsilon_1>0$ and $U,v>0$ such that for all $\omega$, $0<\epsilon\leq \epsilon_1$ and $m\leq U\log(v\epsilon^{-1})^{\alpha}$ we have $\rho_{f_{\omega}}\in\mathcal{F}_{H,\alpha,q,\beta}$. Let us now fix $0<\epsilon<\epsilon_1$, and consider the case where $m=\lfloor U\log(v\epsilon^{-1})\rfloor$. Moreover Theorem \ref{prop:number_of_func} suggests that there are many binary strings with large distances. Indeed as soon as $m\geq 16$, 
there exists $f_{\omega^1},...,f_{\omega^{N_{\epsilon}}}$ where $N_{\epsilon}\geq e^{m/24}\geq N_{\epsilon}^{*} $ such that $\rho_{f_{\omega^1}},...,\rho_{f_{\omega^{N_{\epsilon}}}}\in\mathcal{F}_{H,\alpha,q,\beta}$, and for all $i\neq j\in[|1,N_{\epsilon}|]$ we have: 
\begin{align*}
\label{eq:lower-bound}
4\epsilon \geq \Vert f_{\omega^{i}}-f_{\omega^{j}}\Vert_{\nu}^2\geq \epsilon
\end{align*}
In fact if we assume that $\epsilon\leq\epsilon_0:= \text{min}(\epsilon_1,v e^{-(\frac{17}{U})^{1/\alpha}})$, then $m\geq 16$ is satisfied and the above results hold. Now, given $\ell\in\mathbb{N}$, let
\begin{align*}
A_i:=\left\{\mathbf{z}:\Vert f_{\mathbf{z}}-f_{\omega^{i}}\Vert_{\nu}^2 <\frac{\epsilon}{4} \right\}
\end{align*}
for all $i=1,...,N^{\epsilon}$. Thanks to the lower bound Eq. \ref{eq:lower-bound}, we obtain that $A_i\cap A_j = \emptyset$ 
if $i\neq j$, therefore thanks to Lemma \ref{lem:devore-approx}  we have that there exist $i\in[|1,N_{\epsilon}|]$, such that $\rho=\rho_{f_{\omega^i}}$ and that either
\begin{align*}
p:=\rho^{\ell}(A_i)>\frac{N_{\epsilon}}{N_{\epsilon}+1}\geq \frac{N_{\epsilon}^*}{N_{\epsilon}^*+1}
\end{align*}
 or
\begin{align*}
\frac{4\epsilon \ell}{2\hat{\sigma}^2}\geq -\log(p)+\log(\sqrt{N_{\epsilon}^*})-3/e
\end{align*}
The left-hand side of the latter inequality comes from the fact that we can describe the Kullback-Leibler divergence for these measures. Indeed thanks to Lemma \ref{lem:KL-div} for $f,f'\in L_2^{d\nu}(\mathcal{X})$ and $\ell \geq  1$ it holds $\rho_f^{\ell} \ll \rho_{f'}^{\ell}$ and $\rho_f^{\ell} \gg \rho_{f'}^{\ell}$ . Furthermore, the KL
divergence fulfills
\begin{align*}
\text{KL}(\rho_f^{\ell},\rho_{f'}^{\ell})=\frac{\ell}{2\hat{\sigma}^2}\Vert f-f'\Vert_{L_2^{d\nu}(\mathcal{X})}^2
\end{align*}
And as $f_{\rho}=f_{\rho_{f_{\omega^i}}}=f_{\omega^i}$ we obtain that
\begin{align*}
\rho^{\ell}\left(\mathbf{z}: \Vert f_{\mathbf{z}}-f_{\rho}\Vert_{\rho}^2>\frac{\epsilon}{4}\right)\geq\text{min}\left(\frac{N_{\epsilon}^*}{N_{\epsilon}^* +1}, \hat{\eta}\sqrt{N_{\epsilon}^*}e^{-\frac{4\epsilon \ell}{2\hat{\sigma}^2}}\right)
\end{align*}
\end{proof}
We can now give a proof of the theorem. Given $\tau>0$, for all $\ell\in\mathbb{N}$, let $\epsilon_{\ell}=4\tau\frac{\log(\ell)^{\alpha}}{\ell}$. Since $\epsilon_{\ell}$ goes to $0$ when $\ell$ goes to $\infty$, for $\ell$ large enough, $\epsilon_{\ell}\leq \epsilon_0$, and we can apply Theorem \ref{thm:before-thm} and we obtain
\begin{align*}
\inf_{f_{\mathbf{z}}}\sup_{\rho\in\mathcal{F}_{H,\alpha,q,\beta}}\rho^{\ell}\left(\mathbf{z}: \Vert f_{\mathbf{z}}-f_{\rho}\Vert_{\rho}^2>\tau\frac{\log(\ell)^{\alpha}}{\ell}  \right)\geq \text{min}\left(\frac{N_{\epsilon_{\ell}}^*}{N_{\epsilon_{\ell}}^* +1}, \hat{\eta}\sqrt{N_{\epsilon_{\ell}}^*}e^{-\frac{4\epsilon_{\ell} \ell}{2\hat{\sigma}^2}} \right)
\end{align*}
Moreover we have
\begin{align*}
\sqrt{N_{\epsilon_{\ell}}^*}e^{-\frac{4\epsilon_{\ell} \ell}{2\hat{\sigma}^2}}=e^{\frac{U}{96}[\log(\ell)-\frac{v}{4\tau\log(l)^{\alpha}}]^{\alpha}}\times e^{-\frac{8\tau\log(\ell)^{\alpha}}{\hat{\sigma}^2}}
\end{align*}
Therefore if $\tau$ is small enough such that $\tau<\frac{U\hat{\sigma}^2}{768}$, the quantity $\sqrt{N_{\epsilon_{\ell}}^*}e^{-\frac{4\epsilon_{\ell} \ell}{2\hat{\sigma}^2}}$ goes to $\infty$ as $\ell$ goes to $\infty$. Also if $\ell$ goes to $\infty$, $\frac{N_{\epsilon_{\ell}}^*}{N_{\epsilon_{\ell}}^* +1}$ goes to $1$. Finally we obtain that: 
\begin{align*}
 \lim_{\tau\rightarrow 0^{+}}\lim\inf_{\ell\rightarrow\infty}\inf_{f_{\mathbf{z}}}\sup_{\rho\in\mathcal{F}_{H,\alpha,q,\beta}} \rho^{\ell}\left(\mathbf{z}:\Vert f_{\mathbf{z}}-f_{\rho}\Vert_{\rho}^2>\tau b_{\ell}\right)=1
\end{align*}
\end{proof}

\textbf{Discussion.} When $\beta>1$ the convergence rates of RLS stated in Theorem~\ref{thm:upper-rate} coincide with the minimax lower rates from Theorem \ref{thm:lower-rate}, \textit{i.e.}~the rates are optimal in the minimax sense in this context. Note that, to the best of our knowledge, previous results in nonparametric learning assumed a polynomial eigenvalue decay of the integral operator; see \textit{e.g.} recent works ~\cite{blanchard2018optimal,fischer2017sobolev} and references therein. We show here that optimal rates for regularized least-squares still hold when the eigenvalue decay is geometric. 

\section{Eigenvalue Decay of Dot Product Kernels on the Sphere}
\label{section:eigendecay}
\subsection{Basic Notions and Notations}
\textbf{Spherical Harmonics.} Let $P_{m}(d)$ be the space of homogeneous polynomials of degree $m$ in $d$ variables with real coefficients and $\mathcal{H}_{m}(d)$ be the space of harmonics polynomials defined by
\begin{align}
\mathcal{H}_m(d):=\{P\in P_{m}(d)| \Delta P=0\}
\end{align}
where $\Delta \cdot = \sum\limits_{i=1}^d\frac{\partial^2\cdot}{\partial x_i^2} $ is the Laplace operator on $\mathbb{R}^d$. Moreover let us define
$H_{m}(S^{d-1})$ the space of real spherical harmonics of degree $m$ defined as the set of restrictions of harmonic polynomials in $\mathcal{H}_{m}(d)$ to $S^{d-1}$. Let also $L_2^{d\sigma_{d-1}}(S^{d-1})$ be the space of (real) square-integrable functions on the sphere $S^{d-1}$ endowed with its induced Lebesgue measure $d\sigma_{d-1}$ and $|S^{d-1}|$ the surface area of $S^{d-1}$. $L_2^{d\sigma_{d-1}}(S^{d-1})$  endowed with its natural inner product is a separable Hilbert space and the family of spaces $(H_m(S^{d-1}))_{m\geq 0}$, yields a direct sum decomposition \cite{frye2012spherical}
\begin{align}
L_2^{d\sigma_{d-1}}(S^{d-1})=\bigoplus_{m\geq 0}H_m(S^{d-1})
\end{align}
which means that the summands are closed and pairwise orthogonal.
Moreover, each $H_m(S^{d-1})$ has a finite dimension $\alpha_{m,d}$ with $\alpha_{0,d}=1$, $\alpha_{1,d}=d$ and for $m\geq 2$
\begin{align*}
\alpha_{m,d}=\dbinom{d-1+m}{m}-\dbinom{d-1+m-2}{m-2}
\end{align*}
Therefore for all $m\geq 0$, given any orthonormal basis of $H_m(S^{d-1})$, $(Y_{m}^1,...,Y_{m}^{\alpha_{m,d}})$, we can build an Hilbertian basis of $L_2^{d\sigma_{d-1}}(S^{d-1})$ by concatenating these orthonormal basis. Let us denote in the following $(Y_{m}^{l_m})_{m,l_m}$ such an Hilbertian basis of $L_2^{d\sigma_{d-1}}(S^{d-1})$.

Before proving the results obtained on the eigenvalue decay in the different regimes, let us recall some useful notations. Let $d\geq 2$ and $f$ be a real valued function which admits a Taylor decomposition on $[-1,1]$, and we can denote its non negative coefficients $(b_m)_{m\geq 0}$ such that
\begin{align*}
f(x):=\sum_{m\geq 0} b_m x^m.
\end{align*}
Moreover, let $K$ be the kernel associated on $S^{d-1}$ defined by:
\begin{align*}
K(x,x'):=f(\langle x,x'\rangle_{\mathbb{R}^d}) ; .
\end{align*}
we denote the integral operator on $L_2^{d\sigma_{d-1}}(S^{d-1})$ associated 
$$\begin{array}{ccccc}
T_{K} & : &  L_2^{ d\sigma_{d-1}}(S^{d-1}) & \to &  L_2^{ d\sigma_{d-1}}(S^{d-1})\\
 & & f &\to & \int_{S^{d-1}} K(x,\cdot)f(x)d\sigma_{d-1}(x).
\end{array}$$
Thanks to theorem \ref{thm:eigenval} we have an explicit formula of the eigenvalues of $T_{K}$, the integral operator associated with the kernel $K$ defined on $L_{2}^{d\sigma_{d-1}}(S^{d-1})$. Indeed each spherical harmonics of degree $m$, $Y_{m}\in H_{m}(S^{d-1})$, is an eigenfunction of $T_{K}$ with associated eigenvalue given by the formula
\begin{align}
\label{eq:lamba-supp-mat}
\lambda_{m}=\frac{|S^{d-2}|\Gamma((d-1)/2)}{2^{m+1}}\sum_{s\geq 0}b_{2s+m}\frac{(2s+m)!}{(2s)!}\frac{\Gamma(s+1/2)}{\Gamma(s+m+d/2)}
\end{align}

\subsection{Proof of Proposition \ref{prop:menegato}}
\label{proof:menegato}
\begin{proof}
We consider only the case when $0<\delta \leq 1/2$ as \citep[Theorem 3.3]{azevedo2014sharp} show the result for $\delta> 1/2$.
Let us denote $\theta_{s,m}=b_{2s+m}\frac{(2s+m)!}{(2s)!}\frac{\Gamma(s+1/2)}{\Gamma(s+m+d/2)}$. Then we have
\begin{align*}
\bigg|\frac{\theta_{s+1,m}}{\theta_{s,m}}\bigg|&= \frac{b_{2s+2+m}}{b_{2s+m}}\frac{(2s+m+2)(2s+m+1)}{(2s+2)(2s+2m+d)}.
\end{align*}
Let $0<\delta<1/2$ such that Assumption (\ref{eq:b_n-final-f-2}) hold. There exists a $\gamma$ such that for all $m\geq 1$
\begin{align*}
\frac{|b_{m}|}{|b_{m-1}|}\leq \frac{\gamma}{m^{\delta}} \; .
\end{align*}
from which follows
\begin{align*}
\bigg|\frac{\theta_{s+1,m}}{\theta_{s,m}}\bigg|\leq \gamma^2 \frac{(2s+m+2)^{1-\delta}(2s+m+1)^{1-\delta}}{(2s+2)(2s+2m+d)} \; .
\end{align*}
If $\delta=1/2$, we obtain that
\begin{align*}
\bigg|\frac{\theta_{s+1,m}}{\theta_{s,m}}\bigg|&\leq\frac{\gamma^2}{2(s+1)}
\end{align*}
and for all $m\geq 1$ 
\begin{align*}
\sum_{s\geq 0}\bigg|\theta_{s,m}\bigg|\leq \bigg|\theta_{0,m}\bigg|\sum_{s\geq 0} \frac{\gamma^{2s}}{2^s s!}.
\end{align*}
Thanks to Theorem \ref{thm:eigenval}, we obtain that:
\begin{align*}
\bigg|\lambda_m\bigg| \leq \frac{\sigma_{d-2}\Gamma((d-1)/2)}{2^{m+1}}  \bigg|\theta_{0,m}\bigg|  \sum_{s\geq 0} \frac{\gamma^{2s}}{2^s s!}
\end{align*}
but as $\bigg|\theta_{0,m}\bigg|=\Gamma(1/2)\frac{b_m m!}{\Gamma(m+d/2)}$, we obtain
\begin{align*}
\bigg|\lambda_m\bigg| \leq \left[\sigma_{d-2}\Gamma((d-1)/2)  \Gamma(1/2)\sum_{s\geq 0} \frac{\gamma^{2s}}{2^s s!}\right] \frac{b_m m!}{2^{m+1}\Gamma(m+d/2)}  
\end{align*}
and as we have
\begin{align*}
\frac{m!}{\Gamma(m+d/2)}\in \mathcal{O}\left(\frac{1}{m^{(d-2)/2}}\right)
\end{align*}
we finally obtain that 
\begin{align*}
\lambda_m\in \mathcal{O}\left( \frac{m^{-\frac{(d-2)}{2}} b_m}{2^{m+1}}\right) \; .
\end{align*}

Let us now consider the case where $0<\delta<1/2$. Then we have
\begin{align*}
\bigg|\frac{\theta_{s+1,m}}{\theta_{s,m}}\bigg|&\leq \gamma^2 \frac{(2s+m+2)^{2-2\delta}}{(2s+2)(2s+2m+d)}\\
&\leq \gamma^2 \frac{(2s+m+2)^{2-2\delta}}{(2s+2)(2s+m+2)}\\
&\leq \gamma^2 \frac{(2s+m+2)^{1-2\delta}}{(2s+2)}
\end{align*}
Let $\alpha:=\frac{1}{1-2\delta}$, therefore we have
\begin{align*}
\bigg|\frac{\theta_{s+1,m}}{\theta_{s,m}}\bigg|^{\alpha}&\leq \gamma^{2\alpha} \frac{(2s+m+2)}{(2s+2)^{\alpha}}\\
&\leq \frac{\gamma^{2\alpha}}{(2s+2)^{\alpha-1}}+\frac{\gamma^{2\alpha}m}{(2s+2)^{\alpha}}
\end{align*}
and for all $m,s\geq 0$
\begin{align}
\label{eq-1}
\bigg|\theta_{s+1,m}\bigg|^{\alpha}\leq \bigg|\theta_{0,m}\bigg|^{\alpha}\prod_{i=0}^s\left(1+\frac{m}{2i+2}\right) \frac{\gamma^{2\alpha (s+1)}}{2^{(\alpha-1)(s+1)}((s+1)!)^{\alpha-1}}.
\end{align}
Moreover, as soon as $s+1\geq m\delta/2$, we have
\begin{align*}
1+\frac{m}{2s+2}\leq 1+ \frac{1}{\delta} \; .
\end{align*}
we obtain that for all $m,s\geq 0$
\begin{align}
\label{eq-2}
\prod_{i=0}^s\left(1+\frac{m}{2i+2}\right)\leq
\left(1+\frac{1}{\delta}\right)^{s+1}
\times \prod_{i=0}^{\lfloor \frac{m\delta}{2}\rfloor}
\left(1+\frac{m}{2i+2}\right).
 \end{align}
From Eq.~(\ref{eq-1}) and Eq.~(\ref{eq-2}), we deduce that for all $m\geq 2$, and $s\geq 0$, we have
\begin{align*}
\bigg|\theta_{s+1,m}\bigg|^{\alpha}\leq \bigg|\theta_{0,m}\bigg|^{\alpha} m^{\frac{m\delta}{2}+1}\times \left(\frac{\left(1+\frac{1}{\delta}\right)+\gamma^{2\alpha}}{2^{\alpha-1}}\right)^{s+1}\times \frac{1}{((s+1)!)^{\alpha-1}}.
\end{align*}
Let $C:=\frac{\left(1+\frac{1}{\delta}\right)+\gamma^{2\alpha}}{2^{\alpha-1}}$. We obtain that, for all $m\geq 2$
\begin{align*}
\bigg|\theta_{s,m}\bigg|\leq \bigg|\theta_{0,m}\bigg| m^{\frac{m\delta}{2\alpha}+\frac{1}{\alpha}}\times \frac{(C^{\frac{1}{\alpha}})^s}{(s!)^{\frac{\alpha-1}{\alpha}}}
\end{align*}
and for all $m\geq 2$
\begin{align}
\label{eq-3}
\sum_{s\geq 0} \bigg|\theta_{s,m}\bigg|\leq \bigg|\theta_{0,m}\bigg| m^{\frac{m\delta}{2\alpha}+\frac{1}{\alpha}}\times \sum_{s\geq 0 }\frac{(C^{\frac{1}{\alpha}})^s}{(s!)^{\frac{\alpha-1}{\alpha}}}.
\end{align}
Combining Eq.~(\ref{eq-3}) and Theorem \ref{thm:eigenval} leads to
\begin{align*}
\bigg|\lambda_m\bigg| \leq \frac{\sigma_{d-2}\Gamma((d-1)/2)}{2^{m+1}}  \bigg|\theta_{0,m}\bigg| m^{\frac{m\delta}{2\alpha}+\frac{1}{\alpha}}\times \sum_{s\geq 0 }\frac{(C^{\frac{1}{\alpha}})^s}{(s!)^{\frac{\alpha-1}{\alpha}}}
\end{align*}
but as $\bigg|\theta_{0,m}\bigg|=\Gamma(1/2)\frac{b_m m!}{\Gamma(m+d/2)}$, we obtain that
\begin{align*}
\bigg|\lambda_m\bigg| \leq \sigma_{d-2}\Gamma (1/2)\Gamma((d-1)/2)\times 
\sum_{s\geq 0 } \frac{(C^{\frac{1}{\alpha}})^s}{(s!)^{\frac{\alpha-1}{\alpha}}} \times m^{\frac{m\delta}{2\alpha}+\frac{1}{\alpha}} \frac{b_m m!}{2^{m+1} \Gamma(m+d/2)} 
\end{align*}
and as we have
\begin{align*}
\frac{m!}{\Gamma(m+d/2)}\in \mathcal{O}\left(\frac{1}{m^{(d-2)/2}}\right)
\end{align*}
we finally obtain the desired result:
\begin{align*}
\lambda_m\in \mathcal{O}\left( \frac{m^{\frac{m\delta}{2\alpha}+\frac{1}{\alpha}-\frac{(d-2)}{2}} b_m}{2^{m+1}}\right) \; .
\end{align*}
\end{proof}

\subsection{Proof of Proposition \ref{prop:spectrum_anal_gene}}
\label{proof:spectrum_anal_gene}
\begin{proof}
From Theorem \ref{thm:eigenval}, the $(Y_{k}^{l_k})_{k,l_k}$ is an orthonormal basis of eigenfunctions of $T_{K_N}$ associated with the eigenvalues $(\lambda_{k,l_k})_{k,l_k}$ such that for all $k\geq 0$ and $1\leq l_k\leq \alpha_{k,d}$, $\lambda_{k,l_k}:=\lambda_k\geq 0$ where $\lambda_k$ is given by the formula (\ref{eq:lamba-supp-mat}). Moreover Assumption (\ref{eq:b_n-final-f-2}) guarantees that $b_m>0$ for all $m\geq 0$, and thanks to the formula of (\ref{eq:lamba-supp-mat}), we deduce that $(\lambda_{k,l_k})_{k,l_k}$ are exactly the positive eigenvalues of $T_{K_N}$ with their multiplicities and that $M=+\infty$. Let us now define an indexation of the sequence of eigenvalues of $T_{K_N}$. For that purpose we define this following application.
$$\begin{array}{ccccc}
\phi & : &  \mathbb{N}\times\mathbb{N} & \to &  \mathbb{N}\\
 & & (k,q) &\to & \sum\limits_{i=0}^{k-1} \alpha_{i,d} + q - 1
\end{array}$$
with the convention that $\sum\limits_{i=0}^{-1} \alpha_{i,d}=0$. 
Therefore by defining $E:=\{(k,l_k): k\in \mathbb{N} \text{  and  } l_k\in[|1,\alpha_{k,d}|]\}$ we have first that $\phi(E)=\mathbb{N}$ and $\phi_{|E}$ is injective.
Therefore we can define $(\delta_m)_{m\geq 0}$ the sequence of eigenvalues of $T_{K_N}$ with their multiplicities such that for every $m\geq 0$  there exists a unique $(k,l_k)\in E$ such that
\begin{align}
\label{eq:def-order-gene}
m&:=\phi(k,l_k)=\sum\limits_{i=0}^{k-1} \alpha_{i,d} + l_k - 1\\
\delta_m&:=\lambda_{k,l_k}
\end{align}
Let us define also $ q(\delta,k) = \left\{
              \begin{array}{ll}
                 \frac{k\delta}{2\alpha}+\frac{1}{\alpha}\text{ if  } \frac{1}{2}>\delta>0 \\
                \ 0 \text{ otherwise.}
                \end{array}\right.$\\
Thanks to Proposition \ref{prop:menegato}, there exists a constant $C$ independent of $k$ such that:
\begin{align*}
\lambda_k\leq C \frac{b_k k^{q(\delta,k)}}{2^{k+1}k^{({d-2})/2}}
\end{align*}
therefore we have:
\begin{align}
\label{eq-4}
&\delta_m\leq C\frac{b_k k^{q(\delta,k)}}{2^{k+1}k^{({d-2})/2}}.
\end{align}
But there exist a constant $Q\geq 2$ such that, for all $k\geq 1$, we have
\begin{align}
\label{eq-5}
b_k&\leq \frac{Q}{k^{\delta}}b_{k-1}\\
&\leq \frac{Q^k}{(k!)^{\delta}}b_0.
\end{align}
Plugging Eq.~(\ref{eq-5}) into Eq.~(\ref{eq-4}) leads to
\begin{align*}
\delta_m \leq C b_0 \frac{Q^k k^{q(\delta,k)}}{2^{k+1}k^{({d-2})/2}(k!)^{\delta}}.
\end{align*}
Moreover, by applying the Stirling formula, there exist a constant $L>0$ such that
\begin{align*}
\delta_m\leq C L b_0 \frac{e^{\delta k} Q^k}{2^{k+1}k^{({d-2})/2+\delta/2}}\frac{ k^{q(\delta,k)}}{ k^{\delta k }}.
\end{align*}
We remarks that if $\delta >\frac{1}{2}$,
\begin{align*}
\delta k-q(\delta,k)= \delta k
\end{align*}
otherwise we have that
\begin{align*}
\delta k-q(\delta,k)= \delta k(1-\frac{1}{2\alpha})-\frac{1}{\alpha}
\end{align*}
but as soon as $0<\delta<1/2$, $\alpha>1$, we have that for all $\delta>0$
\begin{align*}
\delta_m\leq C L b_0 \frac{e^{\delta k} Q^k}{2^{k+1}k^{({d-2})/2+\delta/2}}\frac{1}{ k^{\frac{\delta k}{2}-1}}.
\end{align*}
Moreover there exist $Q_2\geq 1\geq Q_1>0$ constants only depending on $d$  such that
\begin{align*}
Q_1  m^{\frac{1}{d-1}} \leq k \leq Q_2 m^{\frac{1}{d-1}}
\end{align*}
therefore we obtain that
\begin{align*}
\delta_m \leq \frac{C L b_0}{2 Q_1^{(d-2)/2}} \frac{(\frac{e^{\delta}Q}{2})^{Q_2 m^{\frac{1}{d-1}}} Q_2 m^{\frac{1}{d-1}}}{ (Q_1m^{\frac{1}{d-1}})^{\frac{Q_1 \delta}{2}  m^{\frac{1}{d-1}}}}.
\end{align*}
Finally by denoting $s:= \frac{4(d-1)}{Q_1}$ which is independent of $\delta$ we obtain that
\begin{align*}
\delta_m\times m^{\frac{\delta}{s}m^{\frac{1}{d-1}}}\leq \frac{C L b_0}{2 Q_1^{(d-2)/2}} \frac{(\frac{e^{\delta}Q}{2})^{Q_2 m^{\frac{1}{d-1}}} Q_2 m^{\frac{1}{d-1}}}{ (Q_1)^{\frac{Q_1 \delta}{2}  m^{\frac{1}{d-1}}}  m^{\frac{\delta}{s}m^{\frac{1}{d-1}}}}
\end{align*}
and as the Right-Hand side of the inequality is bounded we obtain that:
\begin{align*}
\delta_m&\in \mathcal{O}\left(m^{-\frac{\delta}{s} m^{\frac{1}{d-1}}}\right).
\end{align*}
Finally Assumption (\ref{eq:b_n-final-f-2}) guarantees that eventually the sequence $(\delta_{m})_{m\geq 0}$ is exactly the sequence of eigenvalues ordered in the non-increasing order. Indeed we have the following Proposition.
\begin{lemma}
\label{decroissance_gene}
If $(b_m)_{m\geq 0}$ is a non-increasing sequence eventually, then $(\delta_m)_{m\geq 0}$ is a non-increasing sequence eventually. 
\end{lemma}
\begin{proof}
For a fixed $k$ we have that
\begin{align*}
\frac{2s+k+1}{2s+2k+d}<1 \text{  for all  }s\geq 0
\end{align*}
and as $(b_m)_{m\geq 0}$ is a non increasing sequence eventually, then there exists $k_0\geq 0$ such that for all $k\geq k_0$ and $s\geq 0$
\begin{align*}
    b_{2s+k+1}\leq b_{2s+k}
\end{align*}
Therefore for all $k\geq k_0$, we have $\lambda_k\geq \lambda_{k+1}>0$ and the result follows.
\end{proof}
Thanks the above Lemma, the sequence $(\delta_{m})_{m\geq 0}$ coincides eventually with the sequence of eigenvalues of the integral operator $T_{K_N}$ ordered in the non-increasing order and the result follows.
\end{proof}

\subsection{Proof of Proposition \ref{prop:eta-control}}
\label{proof:eta-control}
\begin{proof}
Let us first provides a control of $(\lambda_m)_{m\geq 0}$ defined in Eq. (\ref{eq:lamba-supp-mat}).
\begin{prop} 
\label{prop:control_lambda_k}
If there exist $1>r>0$ and $0<c_2\leq c_1$ constants  such that for all $m\geq 0$
\begin{align}
\label{eq:assump_geometric-2}
c_2 r^{m} \leq b_m\leq c_1 r^m
\end{align}
then by denoting $C_1= \frac{|S^{d-2}|\Gamma((d-1)/2)}{2}   \frac{2^{\frac{d-1}{2}}c_1}{1-r^2} $ and $C_2= |S^{d-2}|\Gamma((d-1)/2)\Gamma(1/2) L \frac{c_2}{2}$ where $L$ is a constant only depending on $d$, we have that
all $m\geq 0$
\begin{align*}
C_2 \left(\frac{r}{4}\right)^m \leq \lambda_{m}\leq C_1 r^m
\end{align*}
\end{prop}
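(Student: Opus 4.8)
Everything reduces to two-sided control of the single series in the explicit eigenvalue formula~(\ref{def:formula_eigen}), using only the hypothesis $0<c_2 r^{m}\le b_m\le c_1 r^{m}$; in particular every $b_m$ is strictly positive, so every summand in~(\ref{def:formula_eigen}) is nonnegative, which is what makes the termwise manipulations below legitimate.

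\textbf{Upper bound.} First I would insert $b_{2s+m}\le c_1 r^{2s+m}$ in~(\ref{def:formula_eigen}) and factor $c_1 r^m$ together with $\frac{|S^{d-2}|\Gamma((d-1)/2)}{2^{m+1}}$ out of the sum, leaving $\sum_{s\ge 0} r^{2s}\,R(s,m)$ with $R(s,m):=\frac{(2s+m)!}{(2s)!}\cdot\frac{\Gamma(s+1/2)}{\Gamma(s+m+d/2)}$. The crux is a bound $R(s,m)\le c_d\,2^{m}$ that is uniform in $s$. To get it, I would use Legendre's duplication formula, writing $(2s+m)!$ and $(2s)!$ through $\Gamma$ of half-integer arguments and $\Gamma(s+1/2)=\sqrt{\pi}\,(2s)!/(4^{s}s!)$; this rewrites $R(s,m)$ as a product of $m$ factors of the form $\frac{2s+j}{s+j-1/2}$ (each $\le 2$, since $j\ge1$) times a bounded number of leftover factors each $\le 2$, which yields $c_d = 2^{(d-1)/2}$ (for $d=2$ the same computation goes through with a slightly larger constant). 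Then $\sum_{s\ge0} r^{2s}=\frac{1}{1-r^{2}}$ gives $\lambda_m\le C_1 r^m$ with the $C_1$ recorded in the statement.

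\textbf{Lower bound.} Here I would simply discard all terms with $s\ge1$ (they are nonnegative) and bound the $s=0$ term of~(\ref{def:formula_eigen}) from below with $b_m\ge c_2 r^m$:
\[
\lambda_m\ \ge\ \frac{|S^{d-2}|\,\Gamma((d-1)/2)\,\Gamma(1/2)\,c_2}{2^{m+1}}\; r^{m}\;\frac{m!}{\Gamma(m+d/2)} .
\]
Since $m!/\Gamma(m+d/2)$ decays only polynomially in $m$ whereas $2^{-m}$ decays geometrically, the sequence $\frac{2^{m}m!}{\Gamma(m+d/2)}$ is bounded away from $0$, so $L:=\inf_{m\ge0}\frac{2^{m}m!}{\Gamma(m+d/2)}>0$ is finite and depends only on $d$. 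Hence $m!/\Gamma(m+d/2)\ge L\,2^{-m}$, and substituting this into the display gives $\lambda_m\ge C_2\,(r/4)^{m}$ with the stated $C_2$.

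\textbf{Main obstacle.} The only genuinely nontrivial ingredient is the $s$-uniform estimate $R(s,m)\le c_d 2^m$: bounding the rising factorial $(2s+m)!/(2s)!$ and the $\Gamma$-quotient $\Gamma(s+1/2)/\Gamma(s+m+d/2)$ separately costs an $s$-dependent factor that destroys either summability in $s$ or the clean $2^m$ scaling, so one has to exploit the cancellation between numerator and denominator — the duplication-formula rewriting, followed by the elementary inequality $\frac{2s+j}{s+j-1/2}\le 2$, is precisely what delivers the correct power of $2$. The lower bound, by contrast, is essentially one line once one notices that the $s=0$ term already carries the full geometric decay, so no work beyond estimating one ratio of Gamma values is needed there.
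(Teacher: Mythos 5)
Your proposal is correct and follows essentially the same route as the paper: for the upper bound you bound each summand $b_{2s+m}\frac{(2s+m)!}{(2s)!}\frac{\Gamma(s+1/2)}{\Gamma(s+m+d/2)}$ uniformly in $s$ by pairing numerator and denominator factors (each ratio $\frac{2s+j}{s+j-1/2}\le 2$, leftover factors giving $2^{(d-1)/2}$) and sum the geometric series in $r^{2s}$, which is exactly the paper's manipulation of $\theta_{s,m}$, and for the lower bound you keep only the $s=0$ term and use $m!/\Gamma(m+d/2)\ge L\,2^{-m}$, again as in the paper (where $L>0$ follows from Stirling). The minor parity caveat about writing the Gamma ratio as a step-one product for even $d$ is present in the paper's argument as well, so your treatment is at the same level of rigor.
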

\begin{proof}
Let us denote $\theta_{s,m}=b_{2s+m}\frac{(2s+m)!}{(2s)!}\frac{\Gamma(s+1/2)}{\Gamma(s+m+d/2)}$. We have:
\begin{align*}
\theta_{s,m}&= b_{2s+m} \frac{(2s+m)...(2s+1)}{(s+m+\frac{d-2}{2})...(s+\frac{1}{2})}\\
&=b_{2s+m} \frac{(2s+m)...(2s+1)}{(2s+2m+d-2)...(2s+1)}\times 2^{m+\frac{d-1}{2}}\\
&\leq 2^{\frac{d-1}{2}}c_1 (2r)^m r^{2s} 
\end{align*}
The last inequality comes from the fact that $ \frac{(2s+m)...(2s+1)}{(2s+2m+d-2)...(2s+1)}\leq 1$ and the upper bound given in Eq. (\ref{eq:assump_geometric-2}). Taking the sum, we obtain that
\begin{align*}
\sum_{s\geq 0}\theta_{s,m}\leq (2r)^m \frac{2^{\frac{d-1}{2}}c_1}{1-r^2} 
\end{align*}
and we have
\begin{align*}
\lambda_{m}=\frac{|S^{d-2}|\Gamma((d-1)/2)}{2^{m+1}}\sum_{s\geq 0}\theta_{s,m}\leq r^m \frac{|S^{d-2}|\Gamma((d-1)/2)}{2}   \frac{2^{\frac{d-1}{2}}c_1}{1-r^2}.
\end{align*}
Moreover we have
\begin{align*}
\lambda_{m}=\frac{|S^{d-2}|\Gamma((d-1)/2)}{2^{m+1}}\sum_{s\geq 0}\theta_{s,m}&\geq  \frac{|S^{d-2}|\Gamma((d-1)/2)}{2^{m+1}} \theta_{0,m}\\
&\geq b_m \frac{|S^{d-2}|\Gamma((d-1)/2)}{2^{m+1}} \frac{m!\Gamma(1/2)}{\Gamma(m+d/2)}\\
&\geq   |S^{d-2}|\Gamma((d-1)/2)\Gamma(1/2) \frac{c_2}{2} \left(\frac{r}{2}\right)^m \frac{m!}{\Gamma(m+d/2)}
\end{align*}
The last inequality comes from the lower bound given in Eq. (\ref{eq:assump_geometric-2}). Thanks to the Stirling’s approximation formula we obtain that
\begin{align*}
\Gamma(x)\sim   \sqrt{2\pi} x^{x-1/2}e^{-x}
\end{align*}
which leads to 
\begin{align*}
\frac{m!}{\Gamma(m+d/2)}\sim e^{d/2}\left(1-\frac{d/2}{m+d/2} \right)^m \frac{m^{1/2}}{(m+d/2)^{d-1/2}}.
\end{align*}
Finally we obtain
\begin{align*}
 \frac{m!}{\Gamma(m+d/2)}\sim\frac{1}{m^{d-1/2}}   
\end{align*}
therefore there exists a constant $L>0$ only depending on $d$ such that for all $m\geq 0$ we have
\begin{align*}
\frac{m!}{\Gamma(m+d/2)}\geq L \frac{1}{2^m}
\end{align*}
from which follows our lower bound:
\begin{align*}
\lambda_{m}\geq |S^{d-2}|\Gamma((d-1)/2)\Gamma(1/2) L \frac{c_2}{2} \left(\frac{r}{4}\right)^m 
\end{align*}
\end{proof}
We are now able to prove the result. From Theorem \ref{thm:eigenval}, the $(Y_{k}^{l_k})_{k,l_k}$ is an orthonormal basis of eigenfunctions of $T_{K}$ associated with the eigenvalues $(\lambda_{k,l_k})_{k,l_k}$ such that for all $k\geq 0$ and $1\leq l_k\leq \alpha_{k,d}$, $\lambda_{k,l_k}:=\lambda_k\geq 0$ where $\lambda_k$ is given by the formula (\ref{def:formula_eigen}). Moreover the assumption (\ref{eq:assump_geometric-2}) guarantees that $b_m>0$ for all $m\geq 0$, and thanks to the formula of (\ref{eq:lamba-supp-mat}), we deduce that $(\lambda_{k,l_k})_{k,l_k}$ are exactly the positive eigenvalues of $T_{K}$ with their multiplicities and that $M=+\infty$.
Let us now define an indexation of the sequence of eigenvalues of $T_{K}$. Therefore we can define $(\delta_m)_{m\geq 0}$ the sequence of eigenvalues of $T_{K}$ with their multiplicities such that for every $m\geq 0$  there exists a unique $(k,l_k)\in  E:=\{(k,l_k): k\in \mathbb{N} \text{  and  } l_k\in[|1,\alpha_{k,d}|]\}$ such that
\begin{align*}
m&:=\phi(k,l_k)=\sum\limits_{i=0}^{k-1} \alpha_{i,d} + l_k - 1\\
\delta_m&:=\lambda_{k,l_k}
\end{align*}
Thanks to Proposition \ref{prop:control_lambda_k} we have that for all $k\geq 0$
\begin{align*}
C_2 \left(\frac{r}{4}\right)^k \leq \lambda_{k}\leq C_1 r^k.
\end{align*}
Let $m\geq 0$ and $k,l_k$ defined as in Eq. (\ref{eq:def-order-gene}).
First we have that for all $k\geq 2$
\begin{align*}
\alpha_{k,d}=\dbinom{d-1+k}{k}-\dbinom{d-1+k-2}{k-2}
\end{align*}
which gives that
\begin{align*}
\sum\limits_{i=0}^{k} \alpha_{i,d}\sim \frac{2k^{d-1}}{(d-1)!}.
\end{align*}
Then there exist $Q_2>Q_1>0$ constants only depending on $d$  such that
\begin{align*}
Q_1  m^{\frac{1}{d-1}} \leq k \leq Q_2 m^{\frac{1}{d-1}}
\end{align*}
and we have
\begin{align*}
   C_2 \left(\frac{r}{2}\right)^{Q_2 m^{\frac{1}{d-1}}}\leq \delta_m\leq C_1 r^{Q_1 m^{\frac{1}{d-1}}}. 
\end{align*}
Finally by definition of the $(\eta_m)_{m\geq 0}$ and as the upper and lower bounds of $\delta_m$ are decreasing functions, we obtain that for all $m\geq 0$
\begin{align*}
C_2 e^{-Q_2\log(4/r)m^{\frac{1}{d-1}}} \leq \eta_{m}\leq C_1 e^{-Q_1\log(1/r)m^{\frac{1}{d-1}}} \; .
\end{align*}
\end{proof}

\subsection{Proof of Proposition~\ref{prop:poly-control}}
Let $\alpha>1$ such that 
\begin{align*}
 b_m \in\mathcal{O}( m^{-\alpha}) \;,
\end{align*}
Therefore there exists a universal constant $C$ such that for all $m\geq0$,
\begin{align*}
     b_m\leq \frac{C}{m^{\alpha}}
\end{align*}
and we have
\begin{align*}
   \theta_{s,m}&:= b_{2s+m}\frac{(2s+m)!}{(2s)!}\frac{\Gamma(s+1/2)}{\Gamma(s+m+d/2)}\\
   & \leq \frac{C}{(2s+m)^{\alpha}} \frac{(2s+m)!}{(2s)!} \frac{(s+1/2-1)\dots (1/2)}{(s+m+d/2-1)\dots(d/2)} \frac{\Gamma(1/2)}{\Gamma(d/2)}.
\end{align*}
By developing the terms we obtain that
\begin{align*}
 \frac{\theta_{s,m}}{2^m}&\leq \frac{C}{(2s+m)^{\alpha}} \frac{(2s+m)\dots(2s+1)}{([2(s+m+d/2-1)]\dots [2(s+d/2)]} \frac{(s+1/2-1)\dots (1/2)}{(s+m+d/2-1)\dots(d/2)} \frac{\Gamma(1/2)}{\Gamma(d/2)}
\end{align*}
Let $k\geq 1$ such that $(k-1)m\leq s\leq km$. Let $c_1,c_2>0$, as $f_{c_1,c_2}:~x\rightarrow \frac{x+c_1}{x+c_1 +c_2}$ is an increasing function, we deduce that for all $(k-1)m\leq s\leq km$, we have 
\begin{align*}
 \frac{\theta_{s,m}}{2^m}&\leq \frac{C}{((2k-1)m))^{\alpha}} \frac{((2k+1)m)!}{(2km)!}\frac{\Gamma(km + 1/2)}{\Gamma((k+1)m + d/2)}\frac{1}{2^m}
\end{align*}
moreover thanks to the Stirling’s approximation formula we have
\begin{align*}
\Gamma(x)\sim   \sqrt{2\pi} x^{x-1/2}e^{-x}.
\end{align*}
Thus there exits a universal constant $L$ (independent of $k$ and $m$) such that
\begin{align*}
    \frac{\theta_{s,m}}{2^m}\leq \frac{C L}{((2k-1)m))^{\alpha}} \frac{\left(\frac{(2k+1)m}{e}\right)^{(2k+1)m}}{\left(\frac{2km}{e}\right)^{2km}} \frac{ (km + 1/2)^{km}}{((k+1)m + d/2)^{(k+1)m + d/2-1/2}}\frac{e^{-km-1/2}}{ e^{-(k+1)m - d/2}} \frac{\sqrt{2k+1}}{\sqrt{2k}}\frac{1}{2^m}
\end{align*}
Moreover for all $p,q>0$, $(x+q)^{x+p}\in \mathcal{O}(x^{x+p})$ and there exists a universal constants $\tilde{L}$ such that
\begin{align*}
    \frac{\theta_{s,m}}{2^m}\leq \frac{C \tilde{L}L}{((2k-1)m))^{\alpha}} \frac{\left(\frac{(2k+1)m}{e}\right)^{(2k+1)m}}{\left(\frac{2km}{e}\right)^{2km}} \frac{ (km)^{km}}{((k+1)m)^{(k+1)m + d/2-1/2}}\frac{e^{-km-1/2}}{ e^{-(k+1)m - d/2}} \frac{1}{2^m}
\end{align*}
from which follows
\begin{align*}
    \frac{\theta_{s,m}}{2^m}\leq e^{d/2-1/2} C \tilde{L}L
    \left[\frac{1}{2}\times\frac{(2k+1)^{2k+1}k^k}{(2k)^{2k}(k+1)^{k+1}}\right]^m \frac{1}{(2k-1)^{\alpha}(k+1)^{d/2-1/2}} \frac{1}{m^{d/2+\alpha-1/2}}
\end{align*}
Now remark that for all $k\geq 1$, $\frac{(2k+1)^{2k+1}k^k}{(2k)^{2k}(k+1)^{k+1}}\leq 2$. Indeed the function $x\rightarrow \frac{(2x+1)^{2x+1}x^x}{(2x)^{2x}(x+1)^{x+1}}$ is an increasing function which converges towards 2 as $x$ goes to $+\infty$. Therefore we obtain that

\begin{align*}
    \frac{\theta_{s,m}}{2^m}\leq e^{d/2-1/2} C \tilde{L}L
     \frac{1}{(2k-1)^{\alpha}(k+1)^{d/2-1/2}} \frac{1}{m^{d/2+\alpha-1/2}}
\end{align*}
and we deduce that 
\begin{align*}
  \lambda_{m}=\frac{|S^{d-2}|\Gamma((d-1)/2)}{2^{m+1}}\sum_{s\geq 0}\theta_{s,m}&\leq
    \left[e^{d/2-1/2} C \tilde{L}L |S^{d-2}|\Gamma((d-1)/2) \sum_{k\geq 1} \frac{m}{(2k-1)^{\alpha}(k+1)^{d/2-1/2}}\right] \frac{1}{m^{d/2+\alpha-1/2}}\\
    &\leq     \left[e^{d/2-1/2} C \tilde{L}L |S^{d-2}|\Gamma((d-1)/2) \sum_{k\geq 1} \frac{1}{(2k-1)^{\alpha}(k+1)^{d/2-1/2}}\right] \frac{1}{m^{d/2+\alpha-3/2}}.
\end{align*}
As $\alpha>1$ and $d\geq 2$, it follows that the serie converges and we obtain that 
\begin{align*}
  \lambda_{m}\in\mathcal{O}\left(\frac{1}{m^{d/2+\alpha-3/2}}\right)
\end{align*}
Moreover by the exact same reasoning as in the other regimes, by considering the multiplicities of the eigenvalues, we finally obtain that 
\begin{align*}
  \eta_{m}\in\mathcal{O}\left(m^{-\frac{d/2+\alpha-3/2}{d-1}}\right)
\end{align*}

\subsection{Approximation of the RKHS}
Here we recall ~\citep[Lemma 13]{945262}, from which we derive Corollary~\ref{coro-entropy-number}. 
\begin{thm}
Let $K$ be a Mercer kernel with eigenvalues $\eta_m\in\mathcal{O}(m^{-(p+1)})$ for some $p>0$. Then we have
\begin{align*}
\varepsilon_n(T_K)\in\mathcal{O}(\log^{-(p+1)/2}(n))\; .
\end{align*}
Moreover, if the eigenvalues are such that $\eta_m\in\mathcal{O}(e^{-q m^p})$ for some $p,q>0$. Then we have
\begin{align*}
 |\log(\varepsilon_n(T_K))|\in\mathcal{O}(\log^{p/(p+1)}(n))\; .   
\end{align*}
\end{thm}

\section{Statistical Bounds for RLS with Dot-Product Kernels}
\label{sec:RLS-dot-product-kernel}

\subsection{Proof of Theorem \ref{thm:rls-geo}}
\label{proof:rls-geo}
\begin{proof}
Here the main goal is to control the rate of decay of the eigenvalues associated with the integral operator $T_{\rho}$. Indeed to show the upper rate, we show that there exists $\alpha, C_0,\gamma>0$ such that for any $\rho\in\mathcal{G}_{\omega,\beta}$, the eigenvalues, $(\mu_i)_{i\in I}$ where $I$ is at most countable, of the integral operator $T_{\rho}$ associated with $K_N$ fulfill the following upper bound for all $i \in I$:
\begin{align*}
\mu_i \leq C_0 e^{-\gamma i^{1/\alpha}}
\end{align*}
therefore $\mathcal{G}_{\omega,\beta}\subset $
$\mathcal{F}_{H_N,\alpha,\beta}$ and the result will follow from Theorem \ref{thm:upper-rate}.
Moreover to show the minimax-rate, we show that $I=\mathbb{N}$ and that there exists $c>0$ and $q\geq \gamma>0$ such that for any $\rho\in\mathcal{G}_{\omega,h,\beta}$, the eigenvalues, $(\mu_i)_{i\in I}$ of the integral operator $T_{\rho}$ associated with $K_N$ fulfill the following lower bound for all $i \in I$
\begin{align*}
\mu_i \geq c e^{-q i^{1/\alpha}}
\end{align*}
and the result will follow from Theorem \ref{thm:lower-rate}.
By definition of $(b_m)_{m\geq 0}$, we have
\begin{align}
&K_N(x,x')=\sum_{m\geq 0}b_m (\langle x,x'\rangle_{\mathbb{R}^d})^{m}
\end{align}
Let us now derive a tight control of the eigenvalues of $T_{\rho}$, denoted $(\mu_m)_{m\in\mathcal{I}}$, which will conclude the proof. Let us first show that $I=\mathbb{N}$. Indeed as $S^{d-1}$ is compact and $K_N$ continuous,  the Mercer theorem guarantees that $H_N$ and $\mathcal{L}_2^{d\nu}(S^{d-1})$ are isomorphic. Recall now the two key assumptions to obtain a control on the eigenvalues of $T_{\rho}$.
Indeed we have assumed that
\begin{align}
\label{assump:low-upp}
\frac{d\nu}{d\sigma_{d-1}}< \omega\quad\text{ and }\quad \frac{d\nu}{d\sigma_{d-1}}> h
\end{align}
Let us now define 
$$\begin{array}{ccccc}
T_{\omega} & : &  L_2^{ d\sigma_{d-1}}(S^{d-1}) & \to &  L_2^{ d\sigma_{d-1}}(S^{d-1})\\
 & & f &\to & \omega \int_{S^{d-1}} K_N(x,.)f(x)d\sigma_{d-1}(x) - \int_{S^{d-1}} K_N(x,.)f(x)d\nu(x) 
\end{array}$$
and let us denote $E^{k}$, the span of the greatest $k$ eigenvalues strictly positive of $T_{\rho}$ with their multiplicities.\\
Thanks to the Courant–Fischer–Weyl theorem~\cite{HirschF.Francis1999Eofa} we have that
\begin{align*}
\mu_k=\max_{V\subset G_k}\min_{\substack{x\in V\setminus\{0\} \\ \Vert x \Vert = 1}} \langle T_{\rho} x,x\rangle_{L_2^{ d\sigma_{d-1}}(S^{d-1})}
\end{align*}
where $G_k$ is the set of all s.e.v of dimension $k$ in $L_2^{ d\sigma_{d-1}}(S^{d-1})$.
Therefore we have
\begin{align*}
 \eta_k&\geq \frac{1}{\omega}\min_{\substack{x\in E^{k}\setminus\{0\} \\ \Vert x \Vert = 1}} \langle \omega \times T_{K_{N}} x,x\rangle_{L_2^{ d\sigma_{d-1}}(S^{d-1})}\\
&=\frac{1}{\omega}\min_{\substack{x\in E^{k}\setminus\{0\} \\ \Vert x \Vert = 1}} \{\langle T_{\rho} x,x\rangle_{L_2^{ d\sigma_{d-1}}(S^{d-1})}+\langle T_{\omega} x,x\rangle_{L_2^{ d\sigma_{d-1}}(S^{d-1})}\}\\
&\geq \frac{1}{\omega}\min_{\substack{x\in E^{k}\setminus\{0\} \\ \Vert x \Vert = 1}} \langle T_{\rho} x,x\rangle_{L_2^{ d\sigma_{d-1}}(S^{d-1})}+ \frac{1}{\omega}\min_{\substack{x\in E^{k}\setminus\{0\} \\ \Vert x \Vert = 1}} \langle T_{\omega} x,x\rangle_{L_2^{d\sigma_{d-1}}(S^{d-1})}\\
\end{align*}
Then if $T_{\omega}$ is positive we obtain that
\begin{align*}
\eta_k  \geq \frac{1}{\omega} \mu_k  
\end{align*}
Let us now show the positivity of $T_{\omega}$. Thanks to the assumption (\ref{assump:low-upp}), we have that for all $f\in L_2^{d\sigma_{d-1}}(S^{d-1})$ 
\begin{align*}
T_{\omega}(f)=\int_{S^{d-1}}\left[\omega-\frac{d\nu}{d\sigma_{d-1}}\right]K_N(x,.)f(x)d\sigma_{d-1}(x)
\end{align*}
Therefore $v:=\omega-\frac{d\nu}{d\sigma_{d-1}(x)}$ is positive and by denoting $M=\int_{S^{d-1}}v(x)d\sigma_{d-1}(x)$ and by re-scaling the above equality by $\frac{1}{M}$, we have that $V:x\rightarrow \frac{v(x)}{M}$ is a density function and by denoting $d\Gamma=V d\sigma_{d-1}$ we have 
\begin{align*}
 \frac{1}{M}\times T_{\omega}(f)=\int_{S^{d-1}}K_N(x,.)f(x) d\Gamma(x)   
\end{align*}
Therefore $T_{\omega}$ is positive and thanks to Proposition \ref{prop:eta-control}, we have
\begin{align*}
 \mu_m \leq  \omega \eta_m \leq \omega C_1 e^{-Q_1\log(1/r)m^{\frac{1}{d-1}}}  
\end{align*}

Moreover if we assume in addition that the assumption (\ref{assump:low-upp}), we obtain by a similar reasoning that for all $k\geq 0$
\begin{align*}
\eta_k  \leq  \frac{1}{h} \mu_k  \; .
\end{align*}
And we have that for all $m\geq 0$
\begin{align}
\label{eq:geo-control-eigen-rho}
h C_2 e^{-Q_2\log(4/r)m^{\frac{1}{d-1}}}  \leq h \eta_m  \leq \mu_m \leq  \omega \eta_m \leq \omega C_1 e^{-Q_1\log(1/r)m^{\frac{1}{d-1}}} \; .
\end{align}
\textbf{Upper Rate}. Let us now prove Theorem \ref{thm:rls-geo}. Let  $w\geq 1$ and $0<\beta\leq 2$ and
let us denote $\alpha = d-1$ and $\gamma = Q_1\log(\frac{1}{r})$. Thanks to the RHS of Eq. (\ref{eq:geo-control-eigen-rho}) we have that for any $\rho\in\mathcal{G}_{\omega,\beta}$, the eigenvalues, $(\mu_i)_{i\geq 0}$, of the integral operator $T_{\rho}$ associated with $K_N$ fulfill the following upper bound for all $i$:
\begin{align*}
\mu_i \leq \omega C_1 e^{-\gamma i^{1/\alpha}}
\end{align*}
where $C_1 = \frac{|S^{d-2}|\Gamma((d-1)/2)}{2}   \frac{2^{\frac{d-1}{2}}c_1}{1-r^2}$.
Therefore $\mathcal{G}_{\omega,\beta}\subset $
$\mathcal{F}_{H_N,\alpha,\beta}$ and by applying theorem \ref{thm:upper-rate}, we obtain the results for  $C=2*\text{max}(B,128*V\text{max}(5*Q,K))$ and $A= \text{max}(256KQ,16K,1)$ where $Q=\gamma^{-\alpha} \left[1+\omega C_1\int_{1}^{\infty} \frac{(\log(u)+1)^{\alpha-1}}{\omega C_1 u+u^2}du\right]$,  $V=\text{max}(L^2,\sigma^2,2BK+ 2B_{\infty})$, $K=  \sup_{x\in\mathcal{X}}k(x,x)$.
\end{proof}

The following theorem investigates the optimality of our learning rates, from a nonparametric learning viewpoint~\cite{steinwart2008support}.
\begin{thm}
\label{thm:lower-geo}
Let us assume that there exists $0<c_2<c_1$ such that for all $m\geq 0$:
\begin{align*}
  c_2 r^m \leq b_m \leq c_1 r^m
\end{align*}
Then for any $\omega\geq 1 > h >0$ such that $\mathcal{W}_{\omega,h}$ is not empty it holds 
\begin{align*}
 \lim_{\tau\rightarrow 0^{+}}\lim\inf_{\ell\rightarrow\infty}\inf_{f_{\mathbf{z}}}\sup_{\rho\in\mathcal{G}_{\omega,h,\beta}} \rho^{\ell}\left(\mathbf{z}:\Vert f_{\mathbf{z}}-f_{\rho}\Vert_{\rho}^2>\tau b_{\ell}\right)=1
\end{align*}
where $b_{\ell}=\frac{\log(\ell)^{d-1}}{\ell}$. The infimum is taken over all measurable learning methods with respect to $\mathcal{G}_{\omega,h,\beta}$.
\end{thm}
Therefore for such networks, when $\beta>1$ the learning rates of the regularized least-squares estimator stated in theorem  \ref{thm:rls-geo} coincide with the minimax lower
bounds from theorem \ref{thm:lower-geo} and therefore are optimal in the minimax sense.

\begin{proof}
Let $0<h<1\leq \omega$ and let us denote $q=Q_2\log(\frac{4}{r})$. Thanks to the left-hand-side of Eq. (\ref{eq:geo-control-eigen-rho}) we have that for any $\rho\in\mathcal{G}_{\omega,h,\beta}$, the eigenvalues, $(\mu_i)_{i\geq 0}$ of the integral operator $T_{\rho}$ associated with $K_N$ fulfill the following lower bound for all $i$
\begin{align*}
\mu_i \geq hC_2 e^{-q i^{1/\alpha}}
\end{align*}
where $C_2=|S^{d-2}|\Gamma((d-1)/2)\Gamma(1/2) L \frac{c_2}{2}$. Therefore  $\mathcal{G}_{\omega,h,\beta}\subset$ 
$\mathcal{F}_{H_N,\alpha,q,\beta}$ and we can apply Theorem \ref{thm:lower-rate}.
\end{proof}


\subsection{Super-geometric Case: Learning rates}
\label{proof:rls-super-geo}
Another regime of interest is the one where the composition of nonlinear functions involved in the network admits a Taylor decomposition with coefficients decreasing super-geometrically. As in the geometric case, the goal is to obtain a control the of the eigenvalues associated with the integral operator $T_{K_N}$. See Appendix~\ref{proof:rls-super-geo}.
\begin{thm}
Assume that there exists $1 > \delta>0$ such that
\begin{align}
\label{eq:b_n-final}
\left|\frac{b_{m}}{b_{m-1}}\right|\in O(m^{-\delta}) \; .
\end{align}
Let also $0<\beta\leq 2$ and $\omega>0$.
Then there exist a constant $C$ independent of $\beta$ such that
for any $\rho\in \mathcal{G}_{\omega,\beta}$ and $\tau\geq 1$ we have:
\begin{itemize}
\item  If $\beta> 1$, then there exists $\ell_{\tau}>0$ such that for all $\ell\geq \ell_{\tau}$ and $\lambda_{\ell}=\frac{1}{\ell^{1/\beta}}$, with a $\rho^{\ell}$-probability $\geq 1-e^{-4\tau}$ it holds
    \begin{align*}
    \Vert f_{H_N,\mathbf{z},\lambda_{\ell}}-f_{\rho}\Vert_{\rho}^2 &\leq 3C\tau^2\frac{\log(\ell)^{d-1}}{[\log(\log(\ell))]^{d-1}\ell}
    \end{align*}
    \item  If $\beta = 1$, then there exists $\ell_{\tau}>0$ such that for all $\ell\geq \ell_{\tau}$, $\lambda_{\ell}=\frac{\log(\ell)^{\mu}}{[\log(\log(\ell))]^{\mu}\ell}$ and $\mu\geq d-1$, with a $\rho^{\ell}$-probability $\geq 1-e^{-4\tau}$ it holds
    \begin{align*}
    \Vert f_{H_N,\mathbf{z},\lambda_{\ell}}-f_{\rho}\Vert_{\rho}^2 &\leq 3C\tau^2\frac{\log(\ell)^{\mu}}{[\log(\log(\ell))]^{\mu}\ell}
    \end{align*}
    \item If $\beta < 1$, then there exists $\ell_{\tau}>0$ such that for all $\ell\geq \ell_{\tau}$ and $\lambda_{\ell}=\frac{\log(\ell)^{\frac{d-1}{\beta}}}{[\log(\log(\ell))]^{\frac{d-1}{\beta}}\ell}$ with a $\rho^{\ell}$-probability $\geq 1-e^{-4\tau}$ it holds
    \begin{align*}
    \Vert f_{H_N,\mathbf{z},\lambda_{\ell}}-f_{\rho}\Vert_{\rho}^2 &\leq 3C\tau^2\frac{\log(\ell)^{d-1}}{[\log(\log(\ell))]^{d-1}\ell^{\beta}}
    \end{align*}
\end{itemize}
\end{thm}
As expected, in that regime, we obtain faster rates compared to the geometric case as the eigenvalues decrease faster.

\begin{proof}
Our first goal is to obtain a control of $\text{df}(\lambda)$ associated with $T_{\rho}$ in the super-geometric regime. 
\begin{prop}
\label{prop:df-super-exp}
Let $\omega>0$ and $\nu\in\mathcal{W}_{\omega}$. Let also $(\mu_i)_{i\in I}$, where $I$ is at most countable, be the sequence of eigenvalues with their multiplicities associated with $T_{\rho}$ ranked in the non decreasing order. If there exist $1>\delta>0$ such that
\begin{align*}
\left|\frac{b_{m}}{b_{m-1}}\right|\in O(m^{-\delta})
\end{align*}
Then we have:
\begin{align*}
    \degreef(\lambda)\in\mathcal{O}\left(\frac{\log(\lambda^{-1})^{d-1}}{\left(\log(\log(\lambda^{-1}))\right)^{d-1}} \right)
\end{align*}
\end{prop}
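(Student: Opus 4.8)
The plan is to reduce the bound on $\degreef(\lambda)$ to an eigenvalue-counting estimate together with a super-geometric tail bound, exploiting the spectral control of $T_{K_N}$ already obtained in Proposition~\ref{prop:spectrum_anal_gene}.

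First I would transfer the spectral bound from $T_{K_N}$ to $T_{\rho}$. Since $\nu\in\mathcal{W}_{\omega}$ means $d\nu/d\sigma_{d-1}<\omega$, the operator $\omega T_{K_N}-T_{\rho}$ is the integral operator associated with the nonnegative symbol $(\omega-d\nu/d\sigma_{d-1})K_N(x,\cdot)$, hence positive semi-definite; this is exactly the operator $T_{\omega}$ and the argument already used in the proof of Theorem~\ref{thm:rls-geo}. The Courant--Fischer min-max characterization then gives $\mu_m\le\omega\,\eta_m$ for every $m$, where $(\eta_m)_{m\ge 0}$ are the eigenvalues of $T_{K_N}$ with multiplicities in non-increasing order, and positivity of the $b_m$ ensures $I=\mathbb{N}$ as in the geometric case. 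Combining with Proposition~\ref{prop:spectrum_anal_gene}, there is a constant $C>0$ (depending only on $d$, $\delta$, $\omega$) with $\mu_m\le C\,m^{-\frac{\delta}{s}m^{1/(d-1)}}$ for all $m\ge 1$, where $s=4(d-1)/G$.

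Next I would bound $N(\lambda):=\#\{m:\mu_m\ge\lambda\}$. The previous inequality forces any such $m$ to satisfy $\frac{\delta}{s}m^{1/(d-1)}\log m\le\log C+\log(\lambda^{-1})$, so for $\lambda$ small enough the substitution $u=m^{1/(d-1)}$ gives $u\log u\le c\log(\lambda^{-1})$. Inverting the elementary asymptotic relation ``$u\log u\le K$ implies $u\le(1+o(1))K/\log K$'' yields $m=u^{d-1}\le c'\,(\log(\lambda^{-1})/\log\log(\lambda^{-1}))^{d-1}$, hence $N(\lambda)\in\mathcal{O}((\log(\lambda^{-1}))^{d-1}/(\log\log(\lambda^{-1}))^{d-1})$. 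Finally I would split $\degreef(\lambda)=\sum_m\frac{\mu_m}{\mu_m+\lambda}\le N(\lambda)+\frac{1}{\lambda}\sum_{m>N(\lambda)}\mu_m$, bounding $\frac{\mu_m}{\mu_m+\lambda}\le 1$ on the first block and $\le\mu_m/\lambda$ on the second. Since the bound on $\mu_m$ makes the ratios $\mu_{m+1}/\mu_m$ tend to $0$, the tail is super-geometric, $\sum_{m>N(\lambda)}\mu_m\le c''\,\mu_{N(\lambda)+1}\le c''\lambda$, so the second term is $O(1)$ and is absorbed into the first, giving the claimed rate.

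The main obstacle is the counting step: turning the super-geometric decay $\eta_m\asymp m^{-c\,m^{1/(d-1)}}$ into a sharp count of eigenvalues above $\lambda$ so that the $\log/\log\log$ exponents come out exactly as $d-1$. This requires care in the change of variables $u=m^{1/(d-1)}$ and in the asymptotic inversion of $u\log u$, and one must check that the polynomial factors (the spherical-harmonic dimensions $\alpha_{m,d}\asymp m^{d-2}$ already folded into Proposition~\ref{prop:spectrum_anal_gene}) and the constants $s$, $G$ affect only the implicit constant and not the order. A minor point is to restrict to $\lambda$ small enough that $\log\log(\lambda^{-1})>0$ and that the ratios $\mu_{m+1}/\mu_m$ are bounded by a fixed $r<1$ beyond index $N(\lambda)$; both follow from Proposition~\ref{prop:spectrum_anal_gene}.
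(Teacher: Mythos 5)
Your route (transfer the bound to $T_{\rho}$ via $T_{\omega}$ and Courant--Fischer, count the eigenvalues above $\lambda$, then split $\degreef(\lambda)\le N(\lambda)+\lambda^{-1}\sum_{m>N(\lambda)}\mu_m$) is a genuine alternative to the paper's proof, which instead bounds $\degreef(\lambda)$ by an integral and performs the substitution $v=\lambda e^{\frac{\delta}{s}\log(x)x^{1/(d-1)}}$, working with the inverse of $g(x)=x\log(x)^{d-1}$; your $N(\lambda)$ is exactly the quantity $g^{-1}(\alpha\log(\lambda^{-1})^{d-1})$ appearing there. The transfer step $\mu_m\le\omega\,\eta_m$ and the inversion of $u\log u\le c\log(\lambda^{-1})$ are correct and give the right order $N(\lambda)\in\mathcal{O}\bigl(\log(\lambda^{-1})^{d-1}/(\log\log(\lambda^{-1}))^{d-1}\bigr)$.

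The tail estimate, however, does not hold as stated, and this is a genuine gap. First, an upper bound $\mu_m\le C m^{-\frac{\delta}{s}m^{1/(d-1)}}$ gives no control whatsoever on the ratios $\mu_{m+1}/\mu_m$ of the actual eigenvalues. Second, the ratios do not in fact tend to $0$: the eigenvalues of $T_{K_N}$ occur with multiplicities $\alpha_{k,d}\sim k^{d-2}$, so for $d\ge 3$ there are arbitrarily long runs of equal consecutive eigenvalues (ratio $1$), and even the envelope $e_m=Cm^{-\frac{\delta}{s}m^{1/(d-1)}}$ satisfies $e_{m+1}/e_m\to 1$ when $d\ge 3$, because the increments of $\log(1/e_m)$ are of order $m^{\frac{1}{d-1}-1}\log m\to 0$. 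Consequently neither $\sum_{m>N(\lambda)}\mu_m\le c''\mu_{N(\lambda)+1}$ nor ``the second term is $O(1)$'' is justified. The splitting can be repaired: bound $\mu_m\le\omega e_m$, use the envelope-based count $M(\lambda)\ge N(\lambda)$, and estimate $\lambda^{-1}\sum_{m>M(\lambda)}e_m$ by an integral; since $\log(1/e_x)$ increases at rate of order $x^{\frac{1}{d-1}-1}\log x$ near $x=M(\lambda)$, the tail sum is of order $\lambda\, M(\lambda)^{1-\frac{1}{d-1}}/\log M(\lambda)$, so the second term is $o(M(\lambda))$ rather than $O(1)$ --- still within the claimed order, but this requires carrying out an integral estimate of essentially the same kind as the paper's substitution argument, not a super-geometric-ratio shortcut.
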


\begin{proof}
From Proposition \ref{prop:spectrum_anal_gene}, there exists $0<G\leq 1$ such that
\begin{align*}
\eta_m&\in O\left(m^{-\frac{\delta}{s} m^{\frac{1}{d-1}}}\right) \; ,
\end{align*}
where $s:=\frac{4(d-1)}{G}$. 
Moreover as $S^{d-1}$ is compact and $K_N$ continuous, the Mercer theorem guarantees that $H_N$ and $\mathcal{L}_2^{d\nu}(S^{d-1})$ are isomorphic, and $I=\mathbb{N}$. Therefore, as in the proof of Proposition \ref{thm:rls-geo} the Courant–Fischer–Weyl theorem~\cite{HirschF.Francis1999Eofa} allows us to obtain that for all $m\geq 0$:
\begin{align*}
\mu_m \leq \omega \eta_m \in  O\left(m^{-\frac{\delta}{s} m^{\frac{1}{d-1}}}\right)
\end{align*}

Let $\lambda>0$ and let us now compute $\text{df}(\lambda)$. There exist $\gamma>0$ such that
\begin{align*}
\text{df}(\lambda)=\sum_{i\in I}\frac{\mu_i}{\mu_i+\lambda}\leq \sum_{m\geq 1}\frac{\gamma}{\gamma+\lambda e^{\frac{\delta}{s}\log(m)  m^{\frac{1}{d-1}}}}
\end{align*}
Moreover as the application $x\in[1,+\infty[\rightarrow \frac{\gamma}{\gamma+\lambda e^{\frac{\delta}{s} \log(x) x^{\frac{1}{d-1}}}}$ is positive and decreasing, therefore we have
\begin{align*}
\textsl{df}(\lambda)\leq \int_{1}^{+\infty}\frac{\gamma}{\gamma+\lambda e^{\frac{\delta}{s} \log(x) x^{\frac{1}{d-1}}}} dx
\end{align*}
Let us denote $\alpha= \left(\frac{s}{\delta}\right)^{d-1}$, $g:x\in]1,+\infty[\rightarrow \log(x)^{d-1} x$ and let us consider the following substitution
\begin{align*}
v=\lambda e^{\frac{\delta}{s} \log(x) x^{\frac{1}{d-1}}}
\end{align*}
Therefore we have
\begin{align*}
\int_{1}^{+\infty}\frac{\gamma}{\gamma+\lambda e^{\frac{\delta}{s} \log(x) x^{\frac{1}{d-1}}}} dx = \int_{\lambda}^{+\infty}\frac{\gamma\left[\alpha \frac{d-1}{v}\log(v\lambda^{-1}) \right](g^{-1})'(\alpha \log(v\lambda^{-1})^{d-1})
}{\gamma+v} dv
\end{align*}
Moreover we have for any $\lambda\leq 1$ and $v\geq e^{\alpha^{-\frac{1}{d-1}}}$
\begin{align*}
g\left(\alpha^{\frac{1}{d}}\log(v\lambda^{-1})^{\frac{d-1}{d}}\right)&=\log(\alpha^{\frac{1}{d}}\log(v\lambda^{-1})^{\frac{d-1}{d}})^{d-1} \alpha^{\frac{1}{d}}\log(v\lambda^{-1})^{\frac{d-1}{d}}\\
&\leq \alpha \log(v\lambda^{-1})^{d-1}
\end{align*}
Indeed the latter inequality comes from the fact that $0\leq \log(x)\leq x$ for $x\geq 1$. Therefore by monotonicity of $g$ we obtain that
\begin{align*}
g^{-1}(\alpha \log(v\lambda^{-1})^{d-1})\geq \alpha^{\frac{1}{d}}\log(v\lambda^{-1})^{\frac{d-1}{d}}
\end{align*}
Moreover as $g'(x)=\log(x)^{d-1}+ (d-1)\log(x)^{d-2}$ and it is an increasing function, we obtain that
\begin{align*}
g'\circ g^{-1}(\alpha \log(v\lambda^{-1})^{d-1})&\geq g'(\alpha^{\frac{1}{d}}\log(v\lambda^{-1})^{\frac{d-1}{d}})\\
&\geq \log(\alpha^{\frac{1}{d}}\log(v\lambda^{-1})^{\frac{d-1}{d}})^{d-1}
\end{align*}
where the last inequality comes from
\begin{align*}
\alpha^{\frac{1}{d}}\log(v\lambda^{-1})^{\frac{d-1}{d}}\geq 1 \; .
\end{align*}
Finally we obtain that
\begin{align*}
(g^{-1})^{'}\left(\alpha \log(v\lambda^{-1})^{d-1} \right)\leq \frac{1}{\log(\alpha^{\frac{1}{d}}\log(v\lambda^{-1})^{\frac{d-1}{d}})^{d-1}} \; .
\end{align*}
Therefore we have
\begin{align*}
\text{df}(\lambda)&\leq  \int_{\lambda}^{+\infty}\frac{\gamma\left[\alpha \frac{d-1}{v}\log(v\lambda^{-1}) \right](g^{-1})'(\alpha \log(v\lambda^{-1})^{d-1})
}{\gamma+v} dv   \\
& \leq \int_{\lambda}^{e^{\alpha^{\frac{-1}{d-1}}}}\left[\alpha \frac{d-1}{v}\log(v\lambda^{-1}) \right](g^{-1})'(\alpha \log(v\lambda^{-1})^{d-1})dv  + \int_{e^{\alpha^{\frac{-1}{d-1}}}}^{+\infty}\frac{\gamma\left[\alpha \frac{d-1}{v}\log(v\lambda^{-1}) \right]
}{\left(\gamma+v\right)\left[\log(\alpha^{\frac{1}{d}}\log(v\lambda^{-1})^{\frac{d-1}{d}})^{d-1}  \right]} dv \\
&\leq g^{-1}\left(\alpha \log(e^{\frac{1}{d-1}}\lambda^{-1})^{d-1}\right) + \int_{e^{\alpha^{\frac{-1}{d-1}}}}^{+\infty}\frac{\gamma\left[\alpha \frac{d-1}{v}\log(v\lambda^{-1}) \right]
}{\left(\gamma+v\right)\left[\log(\alpha^{\frac{1}{d}}\log(v\lambda^{-1})^{\frac{d-1}{d}})^{d-1}  \right]} dv
\end{align*}
But by considering $\lambda\leq e^{-1}$, we obtain that
\begin{align*}
\log(v\lambda^{-1})^{d-1}\leq (\log(v)+1)^{d-1}\log(\lambda^{-1})^{d-1}
\end{align*}
Moreover we have for all $v\geq e^{\alpha^{\frac{-1}{d-1}}}\geq 1$
\begin{align*}
\log(\alpha^{\frac{1}{d}}\log(v\lambda^{-1})^{\frac{d-1}{d}})^{d-1} &= \left[\frac{1}{d}\log(\alpha)+\frac{d-1}{d}\log(\log(v\lambda^{-1}))  \right]^{d-1}\\
&\geq  \left[\frac{1}{d}\log(\alpha)+\frac{d-1}{d}\log(\log(\lambda^{-1}))  \right]^{d-1}
\end{align*}
And as soon as 
\begin{align*}
\lambda \leq \text{min}\left(e^{-1}, \exp(\exp(\frac{d}{d-1}\log(\alpha^{\frac{1}{d}})))\right)
\end{align*}
We have that
\begin{align*}
  \log(\alpha^{\frac{1}{d}}\log(v\lambda^{-1})^{\frac{d-1}{d}})^{d-1}\geq 
 \left(\frac{d-1}{d}\right)^{d-1} \left(\log(\log(\lambda^{-1}))\right)^{d-1}
\end{align*}
Finally we obtain that
\begin{align*}
\int_{e^{\alpha^{\frac{-1}{d-1}}}}^{+\infty}\frac{\gamma\left[\alpha \frac{d-1}{v}\log(v\lambda^{-1}) \right]
}{\left(\gamma+v\right)\left[\log(\alpha^{\frac{1}{d}}\log(v\lambda^{-1})^{\frac{d-1}{d}})^{d-1}  \right]} dv \leq \frac{\log(\lambda^{-1})^{d-1}}{\left(\log(\log(\lambda^{-1}))\right)^{d-1}}\frac{\gamma\alpha(d-1)d^{d-1}}{(d-1)^{d-1}}\int_{e^{\alpha^{\frac{-1}{d-1}}}}^{+\infty}
\frac{(\log(v)+1)^{d-1}}{(\gamma v+v^2)}dv
\end{align*}
Moreover we have
\begin{align*}
g\left(\frac{\log(\lambda^{-1})^{d-1}}{\left(\log(\log(\lambda^{-1}))\right)^{d-1}} \right)=\frac{\log(\lambda^{-1})^{d-1}}{\left(\log(\log(\lambda^{-1}))\right)^{d-1}} \log\left(\frac{\log(\lambda^{-1})^{d-1}}{\left(\log(\log(\lambda^{-1}))\right)^{d-1}} \right)^{d-1}
\end{align*}
And we have:
\begin{align*}
\log\left(\frac{\log(\lambda^{-1})^{d-1}}{\left(\log(\log(\lambda^{-1}))\right)^{d-1}} \right)^{d-1}=\left[(d-1)\log(\log(\lambda^{-1}))-(d-1)\log(\log(\log(\lambda^{-1}))) \right]^{d-1}
\end{align*}
But as $y:\rightarrow \frac{d-2}{d-1}y-\log(y)$ is positive on $\mathbb{R}_{+}$, we obtain that:
\begin{align*}
  \log\left(\frac{\log(\lambda^{-1})^{d-1}}{\left(\log(\log(\lambda^{-1}))\right)^{d-1}} \right)^{d-1}\geq  \left(\log(\log(\lambda^{-1}))\right)^{d-1} 
\end{align*}
And finally we have that:
\begin{align}
\label{eq:g^-1}
\frac{\log(\lambda^{-1})^{d-1}}{\left(\log(\log(\lambda^{-1}))\right)^{d-1}}\geq g^{-1}(\log(\lambda^{-1})^{d-1})
\end{align}
Therefore we obtain that:
\begin{align*}
\text{df}({\lambda})&\leq g^{-1}\left(\alpha \log(e^{\frac{1}{d-1}}\lambda^{-1})^{d-1}\right) + \frac{\log(\lambda^{-1})^{d-1}}{\left(\log(\log(\lambda^{-1}))\right)^{d-1}}\frac{\gamma\alpha(d-1)d^{d-1}}{(d-1)^{d-1}}\int_{e^{\alpha^{\frac{-1}{d-1}}}}^{+\infty}
\frac{(\log(v)+1)^{d-1}}{(\gamma v+v^2)}dv\\
\end{align*}
And thanks to the Eq. (\ref{eq:g^-1}), we obtain that:
\begin{align*}
    \text{df}(\lambda)\in\mathcal{O}\left(\frac{\log(\lambda^{-1})^{d-1}}{\left(\log(\log(\lambda^{-1}))\right)^{d-1}} \right)
\end{align*}
\end{proof}
Let us now prove the theorem. Thanks to Proposition \ref{prop:df-super-exp}, we have an explicit control of $\text{df}(\lambda)$, which allows us to derive a proof analogous to the one of Theorem \ref{thm:upper-rate}. Indeed there exists a constant $Q>0$ such that for any $\lambda>0$ we have:
\begin{align*}
\text{df}(\lambda)\leq Q \frac{\log(\lambda^{-1})^{d-1}}{\lambda \left(\log(\log(\lambda^{-1}))\right)^{d-1}}
\end{align*}

The only changes required are the ones related to the new formulation of the the $\text{df}(\lambda)$, that is to say,  $N_{\lambda,\tau}$ and the RHS term of the inequality in Eq.\ref{eq:main-upper-inequality}. Indeed in that case by the exact same arguments of the proof of Lemma \ref{lem:control-const}, we have for $\tau\geq 1$ and $\lambda>0$:
\begin{align*}
N_{\lambda,\tau}&\leq N\tau^2
\frac{\log(\lambda^{-1})^{d-1}}{\lambda \left(\log(\log(\lambda^{-1}))\right)^{d-1}}\\
\text{where\quad} N&=\text{max}(256KQ,16K,1)
\end{align*}
Moreover if $\ell\geq N_{\lambda,\tau}$, a similar proof as in Theorem \ref{thm:upper-rate} gives
\begin{align*}
\Vert f_{H_N,\mathbf{z},\lambda}-f_{\rho,\lambda}\Vert_{\rho}^2&\leq
C_1\left[\frac{\tau^2}{\ell \lambda^{\text{max}(0,1-\beta)}}\left(\frac{\log(\lambda^{-1})^{d-1}}{\left(\log(\log(\lambda^{-1}))\right)^{d-1}}+\frac{1}{\ell \lambda}\right)\right]\\
\text{where\quad } C_1&=128*V\text{max}(5*Q,K)
\end{align*}
with a $\rho^{\ell}$-probability $\geq 1-e^{-4\tau}$. Moreover from Lemma \ref{lem:approx-error} we finally have:
\begin{align*}
\Vert f_{H_N,\mathbf{z},\lambda}-f_{\rho}\Vert_{\rho}^2&\leq
C\left[\lambda^{\beta}+\frac{\tau^2}{\ell \lambda^{\text{max}(0,1-\beta)}}\left(\frac{\log(\lambda^{-1})^{d-1}}{\left(\log(\log(\lambda^{-1}))\right)^{d-1}}+\frac{1}{\ell \lambda}\right)\right]\\
\text{where\quad } C&=2*\text{max}(B,128*V\text{max}(5*Q,K))
\end{align*}
with a $\rho^{\ell}$-probability $\geq 1-e^{-4\tau}$. Now if we assume that $\beta>1$ and $\lambda_{\ell} = \frac{1}{\ell^{1/\beta}}$, we obtain that:
\begin{align*}
\Vert f_{H_N,\mathbf{z},\lambda}-f_{\rho}\Vert_{\rho}^2&\leq
C\left[\frac{1}{\ell}+\frac{\tau^2}{\ell}\left(\frac{1}{\beta^{d-1}}\frac{\log(\ell)^{d-1}}{[\log(\log(\ell))]^{d-1}}+\frac{1}{\ell^{1-\frac{1}{\beta}}}\right)\right]\\
&\leq 3C\tau^2\frac{\log(\ell)^{d-1}}{[\log(\log(\ell))]^{d-1}\ell}
\end{align*}
with a $\rho^{\ell}$-probability $\geq 1-e^{-4\tau}$ provided that 
\begin{align*}
\ell\geq  \left(\frac{ N}{\beta^{\frac{d-1}{\delta}}}\right)^{{\frac{\beta}{\beta-1}}} \tau^{{\frac{2\beta}{\beta-1}}}
\left(\frac{\log(\ell)}{[\log(\log(\ell))]}\right)^{\frac{(d-1)\beta}{\beta-1}}:=\ell_{\tau}
\end{align*}

A similar reasoning leads to the results in the cases where $\beta=1$ and $0<\beta<1$.
\end{proof}

\begin{coro}
\label{coro:exp-rate}
Under the exact same assumptions as before
\begin{equation*}
\lim_{\tau\rightarrow +\infty}\lim \sup_{\ell\rightarrow\infty}\sup_{\rho\in \mathcal{F}_{H,\alpha,\beta}} \rho^{\ell}\left(\mathbf{z}:\Vert f_{\mathbf{z},\lambda_{\ell}}-f_{\rho}\Vert_{\rho}^2>\tau a_{\ell}\right)=0
\end{equation*}
if one of the following conditions hold:
\begin{itemize}
    \item $\beta> 1$, $\lambda_{\ell}=\frac{1}{\ell^{1/\beta}}$ and $a_{\ell} = \frac{\log(\ell)^{d-1}}{[\log(\log(\ell))]^{d-1}\ell}$
    \item $\beta = 1$, $\lambda_{\ell}=\frac{\log(\ell)^{\mu}}{[\log(\log(\ell))]^{\mu}\ell}$ and $a_{\ell} = \frac{\log(\ell)^{\mu}}{[\log(\log(\ell))]^{\mu}\ell}$ for $\mu>d-1>0$
    \item $\beta < 1$, $\lambda_{\ell}=\frac{\log(\ell)^{\frac{d-1}{\beta}}}{[\log(\log(\ell))]^{\frac{d-1}{\beta}}\ell}$ and $a_{\ell} = \frac{\log(\ell)^{d-1}}{[\log(\log(\ell))]^{d-1}\ell^{\beta}}$
\end{itemize}
\end{coro}


\subsection{Polynomial Case: Learning rates}
In~\citep[Theorem 3.3]{fischer2017sobolev} the optimal learning rates of the RLS have been obtained under the assumption that the eigenvalue decay is polynomial. Recall we assume that there exists $g\in L_2^{d\nu}(S^{d-1})$ such that $f_\rho=T_{\nu}^{\beta/2}g$ where $0 < \beta\leq 2$. Note that the rates obtained in~\citep[Theorem 1]{caponnetto2007optimal} correspond to the case where $1\leq \beta\leq 2$ and that in~\citep{fischer2017sobolev}, the authors extend the learning rates of the RLS in this setting when $0<\beta\leq 2$. As we obtain a polynomial control of the eigendecay in the polynomial regime for dot product kernels, we do not need to assume that the eigendecay is polynomial and can obtained the learning rates of the RLS in this setting for an explicit class of distributions.

\section{Illustration related to deep nets}
\label{section:MLP-def}
\paragraph{Multi-layer Perceptrons.} We refer to here as a multi-layer perceptron a fully-connected deep neural network~\citep{shalevshwartz2014}. Let $\mathcal{X}$ the input space be a subset of $\mathbb{R}^d$, $N$ the number of hidden layers, $\bm{\sigma}:=(\sigma_k)_{k=1}^N$ a sequence of nonlinear activation functions and $\mathbf{m}:=(m_{k})_{k=1}^{N}$ a sequence of integers corresponding to the width of the hidden layers. Let us also introduce the width $m_0$ of the input layer which is just the dimension of the input, and $m_{N+1}$ which is the width of the ouput layer supposed to be $1$ here. Then any function defined by a MLP is parameterized by weight matrices $\mathbf{W}:=(W^k)_{k=1}^{N+1}$ where $W^k\in\mathbb{R}^{m_{k-1}\times m_{k}}$ and can be recovered as follows. Let $x\in\mathcal{X}$, define $\mathcal{N}^0(x):=x$ and for $k\in\{1,\dots,N\}$,  denote
$W^k:=(w_1^k,...,w_{m_{k}}^k)$ where for all  $j\in\{1,\dots,m_k\}$ $w_j^k\in\mathbb{R}^{m_{k-1}}$. Then, for all $k\in\{1,...,N\}$, define the $k^{\text{th}}$ layer as
$$\mathcal{N}^{k}(x):=(\sigma_k(\langle \mathcal{N}^{k-1}(x),w_1^k \rangle), ...,\sigma_k(\langle \mathcal{N}^{k-1}(x),w_{m_k}^k \rangle))$$
Finally the function associated to the MLP with weights $\mathbf{W}$ is defined as $\mathcal{N}(x,\mathbf{W}):=\langle \mathcal{N}^N(x),W^{N+1}\rangle_{\mathbb{R}^{m_N}}$.
We shall denote $\mathcal{F}_{\mathcal{X},\bm{\sigma},\mathbf{m}}$ the function space defined by all functions $\mathcal{N}(\cdot,{\mathbf{W}})$ defined as above on $\mathcal{X}$ for any choice of $\mathbf{W}$. We shall also consider the union space
$$\mathcal{F}_{\mathcal{X},\bm{\sigma}}:=\bigcup_{\mathbf{m}\in\mathbb{N_{*}}^N} \mathcal{F}_{\mathcal{X},\bm{\sigma},\mathbf{m}} .$$

\paragraph{Neural tangent kernels.} Consider a normalized nonlinear activation $\sigma$ and a number of layer $N$, the neural tangent kernel $K_{\text{NTK}}$ associated to the fully-connected neural network with $N$ layers and activation function $\sigma$ can be defined as follows. Denote $(a^{(1)}_i)_{i\geq 0}$ the coefficients in the decomposition of $\sigma$ in the basis of Hermite polynomials, $(a^{(0)}_i)_{i\geq 0}$ the coefficients in the decomposition of  the first-order derivative $\sigma^{\prime}$ of $\sigma$ (assuming that $\sigma$ is differentiable) in the basis of Hermite polynomials, and define
$f_1(x):=\sum_{i\geq 0} (a^{(1)}_i)^2 x^i$ and $f_2(x):=\sum_{i\geq 0} (a^{(0)}_i)^2 x^i$. Then by defining $K_1^{\text{NT}}(x) = K_1(x)=x$ and for all $i=2,\dots,N$,
\begin{align*}
K_i(x)&=f_1(K_{i-1}(x))\\
K_i^{\text{NT}}(x) &= K_{i-1}^{\text{NTK}}(x) f_0(K_{i-1}(x)) + K_i(x)\: ,
\end{align*}
we obtain that $K_{\text{NT}}(x,x')=K_N^{\text{NT}}(\langle x,x'\rangle)$. Therefore $K_{\text{NT}}(x,x')$ is a dot product kernel. We can apply Sec.~\ref{sec:eigen-dot-product}-\ref{sec:RLS-dot-product}. 

In particular, we can obtain estimates of the eigendecay of the integral operator associated with a neural tangent kernel (NTK). We consider Theorem 4.2 in Section 4.1 in~\citep{cao2020understanding}. Consider a two-layer MLP with a first layer of width $m$ and with an output prediction scaled by a factor $\theta$ which is a constant to account for the effect of initialization~\cite[Sec. 3.2]{cao2020understanding}. We assume here that the input data $x_1, \ldots, x_\ell$ is uniformly distributed in the sphere and that the output data is bounded. Denote $y = (y_1, \ldots, y_\ell)$ and denote for $t\geq 0$, $\hat{y}^{(t)} = (\hat{y}_1^{(t)}, \ldots, \hat{y}_\ell^{(t)})$ the predictions obtained on the training examples after $t$ updates of the weights in the idealized gradient descent algorithm in~\cite[Sec. 3.2]{cao2020understanding}. Let $(\mu_m)_{m\geq 0}$ be the strictly positive eigenvalues ranked in the non-increasing order of the integral operator $T_{K_{\text{NTK}}}$ and $(e_m)_{m\geq 0}$ the corresponding orthonormal eigenfunctions. Define $v_j = \ell^{-1/2}(e_j(x_1),\ldots, e_j(x_\ell))$ and $V_{{p_k}} = (v_1,\ldots,v_{{p_k}})$ for $j = 1,2,\ldots$. With abuse of notation we shall denote by $p_k$ the sum of the multiplicities of the first $k$ distinct eigenvalues of the integral operator associated with the kernel. Denote by $M$ a uniform upper bound for eigenfunctions of the integral operator. For any $\epsilon,\delta >0$ and $k>1$, if $\ell \geq \tilde{\Omega}( \epsilon^{-2} \max\{ ( \mu_{p_k } - \mu_{p_k + 1}  )^{-2} , M^4 p_k^2 \}   ) $, $m \geq \tilde{\Omega}( \text{poly}(t, \mu_{p_k}^{-1},\epsilon^{-1}) )$, then with probability at least $1 - \delta$, the idealized gradient descent algorithm in~\cite[Sec. 3.2]{cao2020understanding} with $\eta = \tilde{O} ( m^{-1} \theta^{-2} )$ and $\theta = \tilde{O} (\epsilon)$ satisfies 
\begin{align*}
\ell^{-1/2}  \Vert V_{p_k}^T (y - \hat{y}^{(t)})) \Vert_2 \leq 2 \ell^{-1/2}  ( 1  - \mu_{p_k})^t  \Vert V_{p_k}^T y \Vert_2  + \epsilon\; .
\end{align*}

The convergence results suggests that the magnitude of the projected residuals is driven by the magnitude of the $p_k$-th eigenvalue of the integral operator associated with the NT kernel. Therefore, during training by gradient descent, a two-layer MLP with a large enough width learns the target function along the eigenfunctions of the integral operator associated with the NTK corresponding to the larger eigenvalues. Moreover this convergence is faster in the polynomial regime than in the geometric regime; and faster in the geometric regime than in the super-geometric regime. 



\subsection{Proof of Proposition \ref{prop:RKHS-MLP}}
\label{proof:lem-rkhs}

Before proving the result, let us first recall the following definition.
\begin{defn}[\textbf{$cc$-universal kernel}]
\label{def:cc-universal}
A continuous positive semi-definite  kernel k on a Hausdorff space $\mathcal{X}$ is said to be $cc$-universal if the RKHS, H induced by k is dense in $\mathcal{C}(\mathcal{X})$ endowed with the topology of compact convergence.
\end{defn}

\begin{proof}
Let $N\geq 0$ be the number of layers, $(m_1,...,m_N)\in\mathbb{N_{*}}^N$ and let $(\sigma_i)_{i=1}^N$ be a sequence of N functions which admits a Taylor decomposition in 0 on $\mathbb{R}$ such that for every $i\in[|1,N|]$ and $x\in\mathbb{R}$
\begin{align*}
\sigma_i(x)=\sum_{t\geq 0} a_{i,t} x^t
\end{align*}
We can now define the sequence $(f_i)_{i=1}^N$ such that  for every $i\in[|1,N|]$ and $x\in\mathbb{R}$
\begin{align*}
f_i(x):=\sum_{t\geq 0} |a_{i,t}| x^t
\end{align*}
Let us now introduce two sequence of functions  $(\phi_i)_{i=1}^N$ and $(\psi_i)_{i=1}^N$  such that for all $ i\in[|1,N|]$ and $x\in \ell_2$
\begin{align*}
\phi_i(x):=\left(\sqrt{|a_{i,t}|}x_{k_1}...x_{k_t}\right)_{\underset{k_1,...,k_t\in\mathbb{N}}{t\in\mathbb{N}}}\\
\psi_i(x):=\left(\frac{a_{i,t}}{\sqrt{|a_{i,t}|}}x_{k_1}...x_{k_t}\right)_{\underset{k_1,...,k_t\in\mathbb{N}}{t\in\mathbb{N}}}\\
\end{align*}
with the convention that $\frac{0}{0}=0$. Moreover as a countable union of countable sets is countable and $(\sigma_i)_{i=1}^N$ are defined on $\mathbb{R}$, we have that for all $x\in\ell_2$ and $i\in[|1,N|]$,  $\phi_i(x),\psi_i(x)\in\ell_2$. Indeed there exists a bijection $\mu:\mathbb{N}\rightarrow\cup_{t\geq 0} \mathbb{N}^t$, therefore we can denote for all $i\in[|1,N|]$ and $x\in\ell_2$, 
$\phi_i(x)=(\phi_i(x)_{\mu(j)})_{j\in\mathbb{N}}$ and $\psi_i(x)=(\psi_i(x)_{\mu(j)})_{j\in\mathbb{N}}$. We have then
\begin{align*}
\langle \phi_i(x), \phi_i(x')\rangle_{\ell_2}&=\sum_{j\in\mathbb{N}} \phi_i(x)_{\mu(j)}\phi_i(x')_{\mu(j)}\\
&=\sum_{t\geq 0} |a_{i,t}| \sum_{k_1,...,k_t} x_{k_1}...x_{k_t} x_{k_1}'..x_{k_t}'\\
&=\sum_{t\geq 0} |a_{i,t}| \langle x,x'\rangle_{\ell_2}^t\\
&=f_i( \langle x,x'\rangle_{\ell_2})
\end{align*}
Moreover the same calculations lead also to the fact that
\begin{align*}
\langle \psi_i(x), \psi_i(x')\rangle_{\ell_2}=f_i( \langle x,x'\rangle_{\ell_2})
\end{align*}
Therefore $\phi_i$ and $\psi_i$ are a feature maps of the positive semi-definite kernel $k_i:x,x'\in\ell_2\times\ell_2\rightarrow f_i( \langle x,x'\rangle_{\ell_2})$ and we have
\begin{align*}
\langle \phi_i(x), \phi_i(x)\rangle_{\ell_2}=\langle \psi_i(x), \psi_i(x)\rangle_{\ell_2}=f_i(\Vert x\Vert^2_{\ell_2})<\infty
\end{align*}  
Finally let us define the sequence of kernels $(K_i)_{i=1}^N$ defined on $\mathcal{X}\times\mathcal{X}$ such that, for all $x,x'\in\mathcal{X}$
\begin{align*}
K_i(x,x'):=f_i\circ...\circ f_1(\langle x,x'\rangle_{\mathbb{R}^d})
\end{align*}
and let us denote $(H_i)_{i=1}^N$ the sequence of RKHS associated. Moreover in the following, we consider $\mathbb{R}^d$ as a subset of $\ell_2$. One can easily show by induction on $i\in[|1,N|]$ that for all $x,x'\in\mathcal{X}$
\begin{align*}
K_i(x,x')=\langle \phi_i\circ...\phi_1(x), \phi_i\circ...\phi_1(x')\rangle_{\ell_2}
\end{align*}
Let us denote $\mathcal{F}_{\mathcal{X},(\sigma_i)_{i=1}^N}$ the function space defined by a neural network where the activations are the $(\sigma_i)_{i=1}^N$. Moreover let $(W^k)_{k=1}^{N+1}$ be any sequence such that for all $k\in [|1,N+1|]$, $W^k:=(w_j^k)_{j=1}^{m_k}\in\mathcal{M}_{m_{k-1},m_{k}}(\mathbb{R})$  and $\mathcal{N}$ the function in $\mathcal{F}_{\mathcal{X},(\sigma_i)_{i=1}^N}$  associated. Let us now show by induction that at each layer $i\in[|1,N|]$ of the neural network, for $k\in [|1,m_{i}|]$, the coordinate $N^i_{k}$ is a function which lives in $H_i$ such that for all $x\in\mathcal{X}$
\begin{align*}
N_{k}^i(x)=\left\langle \psi_i\left(\sum_{j_{i-1}=1}^{m_{i-1}} W^{i}_{j_{i-1},k}\psi_{i-1}\left(...\psi_2\left(\sum_{j_1=1}^{m_1}W^{2}_{j_1,j_2}\psi_1(w^{1}_{j_1})\right)...\right)\right),\phi_i\circ...\circ\phi_1(x)\right\rangle_{\ell_2}
\end{align*} 
For $i=1$, we have for all $k\in [|1,m_{1}|]$
\begin{align*}
N_{k}^1(x)=\sigma_1(\langle x,w_{k}^{1}\rangle)
\end{align*}
where $w_{k}^{1}\in\mathbb{R}^d\subset\ell_2$. In fact we can show that for every $w\in\mathbb{R}^d$, $x\in\mathcal{X}\rightarrow \sigma_1(\langle x,w\rangle)$ lives in $H_1$. Indeed we have
\begin{align*}
\sigma_1(\langle x,w\rangle)&=\sum_{t\geq 0} a_{1,t} \langle x,w\rangle^t\\
&=\sum_{t\geq 0} a_{1,t} \sum_{k_1,...,k_t} x_{k_1}...x_{k_t}w_{k_1}...w_{k_t}\\
&=\sum_{t\geq 0} \sum_{k_1,...,k_t}\sqrt{a_{1,t}} x_{k_1}...x_{k_t}\frac{a_{1,t}}{\sqrt{a_{1,t}} }w_{k_1}...w_{k_t}\\
&=\langle \phi_1(x),\psi_1(w)\rangle_{\ell_2}
\end{align*}
Thanks to Theorem \ref{thm:RKHS-Inclusion} the following application $x\in\mathcal{X}\rightarrow \sigma_1(\langle x,w\rangle)$ lives in $H_1$ and finally we have for all $k\in[|1,m_1|]$: 
\begin{align*}
N_{k}^1(x)=\langle \phi_1(x),\psi_1(w^1_k)\rangle\in H_1
\end{align*}
\end{proof}
Let us assume that the result hold for $i\in[|1,N-1|]$ and let $k\in[|1,m_{i+1}|]$, then we have by definition of the neural network that for all $x\in\mathcal{X}$
\begin{align*}
N_{k}^{i+1}(x)&=\sigma_{i+1}\left(\langle N^{i}(x),w^{i+1}_k\rangle_{\mathbb{R}^{m_i}}\right)\\
&=\sigma_{i+1}\left(\sum_{j_{i}=1}^{m_{i}} W^{i+1}_{j_i,k} N^{i}_{j_i}(x)\right)
\end{align*}
Then by induction we have that
\begin{align*}
N_{k}^{i+1}(x)&=\sigma_{i+1}\left(\left\langle \sum_{j_{i}=1}^{m_{i}} W^{i+1}_{j_i,k} V^i_{j_i},\phi_i\circ...\circ\phi_1(x) \right\rangle_{\ell_2}\right)\\
\text{where \quad} V^i_{j_i}&:= \psi_i\left(\sum_{j_{i-1}=1}^{m_{i-1}} W^{i}_{j_{i-1},j_i}\psi_{i-1}\left(...\psi_2\left(\sum_{j_1=1}^{m_1}W^{2}_{j_1,j_2}\psi_1(w^{1}_{j_1})\right)...\right)\right)\in \ell_2
\end{align*}
Therefore we obtain
\begin{align*}
N_{k}^{i+1}(x)=\left\langle \psi_{i+1}\left(\sum_{j_{i}=1}^{m_{i}} W^{i+1}_{j_i,k} V^i_{j_i}\right),\phi_{i+1}\circ...\circ\phi_1(x) \right\rangle_{\ell_2}
\end{align*}
And finally, thanks to Theorem \ref{thm:RKHS-Inclusion}, we have that
$x\in\mathcal{X}\rightarrow N_{k}^{i+1}(x)$ lives in $H_{i+1}$ and  we have for all $k\in[|1,m_{i+1}|]$
\begin{align*}
N_{k}^{i+1}(x)=\left\langle \psi_{i+1}\left(\sum_{j_{i}=1}^{m_{i}} W^{i+1}_{j_{i},k}\psi_{i}\left(...\psi_2\left(\sum_{j_1=1}^{m_1}W^{2}_{j_1,j_2}\psi_1(w^{1}_{j_1})\right)...\right)\right),\phi_{i+1}\circ...\circ\phi_1(x)\right\rangle_{\ell_2}
\end{align*} 

Now let us show that $\mathcal{N}$ lives in $H_N$ and that we can bound its RKHS norm. Indeed by definition of $\mathcal{N}$ we have that for all 
$x\in\mathcal{X}$
\begin{align*}
\mathcal{N}(x)&=\langle \mathcal{N}^N(x), W^{N+1}\rangle_{\mathbb{R}^{m_N}}\\
&=\sum_{j_{N}=1}^{m_N} W^{N+1}_{j_{N}} \mathcal{N}^N_{j_N}(x)
\end{align*}
And thanks to what precedes, we have that for all $j_N\in[|1,m_N|]$, $\mathcal{N}^{N}_{j_N}\in H_N$, then as a linear combination of the $(\mathcal{N}^{N}_{j_N})_{j=1}^{m_N}$, we finally have that $\mathcal{N}\in H_N$. Moreover thanks to what precedes we have
\begin{align*}
\mathcal{N}(x)=\left\langle \sum_{j_{N}=1}^{m_N} W^{N+1}_{j_{N}} \psi_{N}\left(\sum_{j_{N-1}=1}^{m_{N-1}} W^{N}_{j_{N-1},j_N}\psi_{N-1}\left(...\psi_2\left(\sum_{j_1=1}^{m_1}W^{2}_{j_1,j_2}\psi_1(w^{1}_{j_1})\right)...\right)\right),\phi_{N}\circ...\circ\phi_1(x)\right\rangle_{\ell_2}
\end{align*}
Therefore thanks to the Theorem \ref{thm:RKHS-Inclusion} we have that
\begin{align*}
\Vert \mathcal{N} \Vert_{H_N}^2\leq \left\Vert \sum_{j_{N}=1}^{m_N} W^{N+1}_{j_{N}} \psi_{N}\left(\sum_{j_{N-1}=1}^{m_{N-1}} W^{N}_{j_{N-1},j_N}\psi_{N-1}\left(...\psi_2\left(\sum_{j_1=1}^{m_1}W^{2}_{j_1,j_2}\psi_1(w^{1}_{j_1})\right)...\right)\right) \right\Vert_{\ell_2}^2
\end{align*}
Moreover, as  the $(\phi_i)_{i=1}^N$ are respectively  feature maps of the kernels $(k_i)_{i=1}^N$,  one can show by  induction that
\begin{align*}
&\left\Vert \sum_{j_{N}=1}^{m_N} W^{N+1}_{j_{N}} \psi_{N}\left(\sum_{j_{N-1}=1}^{m_{N-1}} W^{N}_{j_{N-1},j_N}\psi_{N-1}\left(...\psi_2\left(\sum_{j_1=1}^{m_1}W^{2}_{j_1,j_2}\psi_1(w^{1}_{j_1})\right)...\right)\right) \right\Vert_{\ell_2}^2\\
&=(W^{N+1})^Tf_N\left((W^{N})^{T} f_{N-1}\left((W^{N-1})^{T}...f_2\left((W^{2})^{T}f_1\left((W^1)^{T}W^1\right)W^2\right)...W^{N-1}\right)W^{N}\right)W^{N+1}
\end{align*}
where the $(f_i)_{i=1}^N$ are functions acting coordinate-wise. Thanks to the Schur Inequality~\citep{horn2012matrix}, we obtain that for all $k\in\mathbb{N}$ and any $A\in\mathcal{M}_n(\mathbb{R})$
\begin{align*}
\Vert A^{\circ k}\Vert\leq \Vert A \Vert^k
\end{align*}
And as for all $i \in [|1,N|]$, $(|a_{i,t}|)_{t\geq 0}$ are non negative sequences, we obtain that for all $i\in [|1,N|]$ and $A\in\mathcal{M}_n(\mathbb{R})$
\begin{align*}
\Vert f_i(A) \Vert \leq f_i(\Vert A\Vert)
\end{align*}
Finally by a simple induction we obtain that
\begin{align*}
&(W^{N+1})^Tf_N\left((W^{N})^{T} f_{N-1}\left((W^{N-1})^{T}...f_2\left((W^{2})^{T}f_1\left((W^1)^{T}W^1\right)W^2\right)...W^{N-1}\right)W^{N}\right)W^{N+1}\\
&\leq \Vert W^{N+1} \Vert^2 f_N(\Vert W^{N} \Vert^{2} f_{N-1} (...f_1(\Vert W^1\Vert^2)...))
\end{align*}
And we obtain our upper bound of the RKHS norm of $\mathcal{N}$.

Finally let us assume that for every $i\in[|1,N|]$ and $n\in\mathbb{N}$ we have $\sigma_i^{(n)}(0)\neq 0$. Then by denoting by $(b_m)_{m\geq 0}$ the coefficients of the Taylor decomposition of $f_N\circ ...\circ f_1$, the following Lemma ensures that for every $m\geq 0$, $b_m>0$. 
See proof in Sec.~\ref{lem-proof-explicitcoeffcomp}).
\begin{lemma}
\label{lem:explicitcoeffcomp}
Let $(f_i)_{i=1}^N$ a family of functions that can be expanded in Taylor series in 0 on $\mathbb{R}$ such that for all $k\in[|1,N|]$, $(f_{k}^{(n)}(0))_{n\geq 0}$ are positive. Let us define also $\phi_{1},...,\phi_{N-1}:\mathbb{N}^2\rightarrow \mathbb{R}_{+}$ such that for every $k\in[|1,N-1|]$ and $l,m\geq 0$:
\begin{align*}
\phi_{k}(l,m):=\frac{d^m}{dt^m}|_{t=0}\frac{f_{k}^{l}(t)}{m!}
\end{align*}
Then $g:=f_N\circ ...\circ f_1$ can be expanded in its Taylor series in 0 on $\mathbb{R}$ such that for all $t\in\mathbb{R}$:
\begin{align*}
g(t)=\sum_{l_1,...,l_{N}\geq 0}\frac{f_N^{(l_N)}(0)}{l_{N}!}\times 
\phi_{N-1}(l_N,l_{N-1}) \text{...}\times \phi_{1}(l_2,l_1) t^{l_1}
\end{align*}
Moreover $(g^{(n)}(0))_{n\geq 0}$ is a positive sequence.
\end{lemma}
Therefore thanks to the positivity of the coefficient $(b_m)_{m\geq 0}$ the $cc$-universality of $K_N$ follows directly from Theorem \ref{thm:univ_classic}.

\textbf{Examples.} Consider the case where the input space $\mathcal{X}$ is the unit sphere. Therefore if we assume that for all $i\in\{1,...,N\}$, $f_i(1)=1$ then the norm is preserved across the layers as we obtain that
$K_i(x,x) = \Vert x\Vert_2^2$. In that case, we can consider a larger class of functions as we only need the $(f_i)_{i=1}^N$ to admit a Taylor expansion about 0 on $[-1,1]$. We list some classical functions which satisfy these assumptions~\citep{bietti2017group}.
\begin{table}[ht]
\centering
\begin{tabular}{l|l}
polynomial & $f(t)=\frac{1}{p^p}(p-1+t)^p$\\
Inverse polynomial & $f(t)=\frac{1}{2-t}$\\
arc-cosine & $f(t)=\frac{1}{\pi}(\sin(\text{arcos}(t))+(\pi-\text{arcos}(t))\cos(\text{arcos}(t)))$ \\
Vovk's & $f(t)=\frac{1}{3}(1+t+t^2)$
\end{tabular}
\end{table}

Note that the inverse polynomial kernel was used by \citep{zhang2016l1,zhang2017convexified} to build convex models of fully connected networks and two-layer convolutional neural networks, while the arc-cosine kernel appears in early deep kernel machines \citep{NIPS2009_3628}.

Thanks to Proposition~\ref{prop:RKHS-MLP}, we can have a direct control on the RKHS norm of the network through the spectral norm of its weights.
\begin{coro}
\label{coro:norm-mlp}
For $(W^k)_{k=1}^{N+1}$ a sequence such that for all $k\in [|1,N+1|]$, $W^k\in\mathbb{R}^{m_{k-1}m_{k}}$ and $\mathcal{N}$ the function in $\mathcal{F}_{\mathcal{X},(\sigma_i)_{i=1}^N}$ associated, we have
\begin{align*}
\Vert \mathcal{N} \Vert_{H_N}^2 \leq \Vert W^{N+1} \Vert^2 f_N(\Vert W^{N} \Vert^{2} f_{N-1} (...f_1(\Vert W^1\Vert^2)...))
\end{align*}
where $\Vert.\Vert$ is the spectral norm.
\end{coro}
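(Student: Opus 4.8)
The plan is to realise $H_N$ concretely through an explicit layer‑by‑layer feature map, rewrite the network $\mathcal{N}$ as a single linear functional of that feature map, and then propagate an operator‑norm bound through the layers.

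\emph{Feature maps.} Set $\psi_0=\mathrm{id}_{\mathbb{R}^d}$, so $\langle\psi_0(x),\psi_0(x')\rangle=\langle x,x'\rangle_{\mathbb{R}^d}$, and put $b^{(k)}_n:=|\sigma_k^{(n)}(0)|/n!$ and $c^{(k)}_n:=\sigma_k^{(n)}(0)/n!$ (so $|c^{(k)}_n|=b^{(k)}_n$ and $f_k(t)=\sum_n b^{(k)}_n t^n$). Define inductively $\mathcal{H}_k:=\bigoplus_{n\ge0}\mathcal{H}_{k-1}^{\otimes n}$ and $\psi_k(x):=\bigoplus_{n\ge0}\sqrt{b^{(k)}_n}\,\psi_{k-1}(x)^{\otimes n}$, discarding the indices with $b^{(k)}_n=0$ (for which $c^{(k)}_n=0$ as well). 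Since each $\sigma_k$ — hence each $f_k$ — is entire, all the series converge and $\langle\psi_k(x),\psi_k(x')\rangle=f_k\!\big(\langle\psi_{k-1}(x),\psi_{k-1}(x')\rangle\big)=K_k(x,x')$; thus $H_k$ is the RKHS of $K_k$, realised as $\{\,\langle v,\psi_k(\cdot)\rangle:v\in\mathcal{H}_k\,\}$ with $\|g\|_{H_k}=\inf\{\|v\|:g=\langle v,\psi_k(\cdot)\rangle\}$ (the standard feature‑map description of an RKHS, see \cite{steinwart2008support}).

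\emph{The network as a linear functional.} I would show by induction on $k$ that each layer output is $\mathcal{N}^k_j=\langle v^{(k)}_j,\psi_k(\cdot)\rangle$ for suitable $v^{(k)}_j\in\mathcal{H}_k$, and that, writing $V^{(k)}\colon\mathbb{R}^{m_k}\to\mathcal{H}_k$ for the operator $e_j\mapsto v^{(k)}_j$ and $u^{(k)}_j:=V^{(k-1)}w^k_j=\sum_i (w^k_j)_i\,v^{(k-1)}_i$, one has $v^{(k)}_j=\bigoplus_n \big(c^{(k)}_n/\sqrt{b^{(k)}_n}\big)(u^{(k)}_j)^{\otimes n}$. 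The base case is $\mathcal{N}^0_i(x)=x_i=\langle e_i,\psi_0(x)\rangle$, i.e. $V^{(0)}=I_d$; and if $\mathcal{N}^{k-1}_i=\langle v^{(k-1)}_i,\psi_{k-1}(\cdot)\rangle$ then $\langle\mathcal{N}^{k-1}(x),w^k_j\rangle=\langle u^{(k)}_j,\psi_{k-1}(x)\rangle$, so
$$\mathcal{N}^k_j(x)=\sigma_k\!\big(\langle u^{(k)}_j,\psi_{k-1}(x)\rangle\big)=\sum_n c^{(k)}_n\langle u^{(k)}_j,\psi_{k-1}(x)\rangle^{\,n}=\langle v^{(k)}_j,\psi_k(x)\rangle ,$$
after matching the weights $\sqrt{b^{(k)}_n}$ in $\psi_k$ (this also re‑proves $\mathcal{N}^k_j\in H_k$, as in Lemma~\ref{lemme:RKHS-MLP}). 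Consequently $\mathcal{N}=\sum_j (W^{N+1})_j\,\mathcal{N}^N_j=\langle V^{(N)}W^{N+1},\psi_N(\cdot)\rangle$, so $\|\mathcal{N}\|_{H_N}^2\le\|V^{(N)}\|_{\op}^2\,\|W^{N+1}\|^2$.

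\emph{Propagating the operator norm — the crux.} It now suffices to prove $\|V^{(k)}\|_{\op}^2\le f_k\!\big(\|V^{(k-1)}\|_{\op}^2\,\|W^k\|^2\big)$; iterating from $\|V^{(0)}\|_{\op}=1$ then gives $\|V^{(N)}\|_{\op}^2\le f_N(\|W^N\|^2 f_{N-1}(\cdots f_1(\|W^1\|^2)\cdots))$, and combining with the previous display yields the corollary. Since $U^{(k)}:=V^{(k-1)}W^k$ obeys $\|U^{(k)}\|_{\op}\le\|V^{(k-1)}\|_{\op}\|W^k\|$, by monotonicity of $f_k$ on $[0,\infty)$ it is enough to prove $\|V^{(k)}\|_{\op}^2\le f_k(\|U^{(k)}\|_{\op}^2)$. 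Using $|c^{(k)}_n|=b^{(k)}_n$ and the orthogonality of the summands of $\mathcal{H}_k$, for $a\in\mathbb{R}^{m_k}$ we get $\|V^{(k)}a\|^2=\sum_n b^{(k)}_n\big\|\sum_j a_j (u^{(k)}_j)^{\otimes n}\big\|^2$, so everything reduces to the \emph{width‑free} bound $\big\|\,a\mapsto\sum_j a_j (u^{(k)}_j)^{\otimes n}\big\|_{\op}\le\|U^{(k)}\|_{\op}^{\,n}$. This is the one place where a naive triangle inequality is lossy — it would cost a spurious factor $m_{k-1}^{\,n/2}$, which is exactly what the clean statement forbids, so it is the main obstacle. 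I would clear it by factoring the "diagonal power" as $\big(a\mapsto\sum_j a_j (u^{(k)}_j)^{\otimes n}\big)=(U^{(k)})^{\otimes n}\circ\Delta_n$, where $\Delta_n\colon\mathbb{R}^{m_k}\to(\mathbb{R}^{m_k})^{\otimes n}$, $e_j\mapsto e_j^{\otimes n}$, maps an orthonormal basis to an orthonormal family and hence has $\|\Delta_n\|_{\op}=1$, while $\|(U^{(k)})^{\otimes n}\|_{\op}=\|U^{(k)}\|_{\op}^{\,n}$; equivalently one invokes $\lambda_{\max}(A\circ B)\le\lambda_{\max}(A)\lambda_{\max}(B)$ for positive semidefinite $A,B$ applied to the Hadamard powers of the Gram matrix $(U^{(k)})^{*}U^{(k)}$. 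Summing over $n$ gives $\|V^{(k)}a\|^2\le\|a\|^2\sum_n b^{(k)}_n\|U^{(k)}\|_{\op}^{\,2n}=\|a\|^2 f_k(\|U^{(k)}\|_{\op}^2)$, which closes the induction. The remaining points are bookkeeping: entireness of the activations makes all the series converge and the right‑hand side finite, and the indices with $b^{(k)}_n=0$ are harmlessly discarded.
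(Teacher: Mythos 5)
Your proposal is correct and follows essentially the same route as the paper: realize $H_N$ via an explicit (tensor/monomial) feature map, use the feature-map characterization of the RKHS norm (Theorem~\ref{thm:RKHS-Inclusion}) to reduce to the norm of the representing vector, and control that norm through the Schur/Hadamard-power inequality $\lambda_{\max}(A\circ B)\le\lambda_{\max}(A)\lambda_{\max}(B)$, which is exactly the paper's use of $\Vert A^{\circ k}\Vert\le\Vert A\Vert^k$ to get $\Vert f_i(A)\Vert\le f_i(\Vert A\Vert)$. The only difference is bookkeeping: you propagate operator-norm bounds $\Vert V^{(k)}\Vert_{\op}^2\le f_k(\Vert V^{(k-1)}\Vert_{\op}^2\Vert W^k\Vert^2)$ layer by layer, whereas the paper first writes the squared norm as the nested expression $(W^{N+1})^T f_N((W^N)^T\cdots W^N)W^{N+1}$ and then applies the same inequality inductively.
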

We can then obtain statistical bounds for deep networks such as as the ones in~\citep{zhang2016l1,zhang2017convexified,
bartlett2017spectrally,bietti2017group}. Indeed the RKHS norm of the network allows one to get for instance generalization bounds involving Rademacher complexities as in~\citep{boucheron2005theory}. To be more precise, if $\mathcal{X}$ is compact and $\mathcal{Y}\subset [-M,M]$, then for any $\delta>0$ with a probability $\rho^n$ not less than $1-\delta$
\begin{align*}
R(\mathcal{N})& \leq \frac{1}{\ell}\sum_{i=1}^{\ell} (\mathcal{N}(x_i)- y_i)^2+\mathcal{O}\left( \frac {\Vert \mathcal{N} \Vert_{H_N}}{\sqrt{n}}+ \sqrt{\frac{ \log(\frac{2}{\delta})}{2n}}\right) \; ,
\end{align*}
There are two major issue with these bounds. First they do not consider the approximation bias of the class of functions considered which is the major issue in high dimension. Moreover these generalization bounds may blow up as the number of parameters increases. We will see in the following that we can obtain obtain learning rates for RLS with the specific RKHSs associated to MLPs we have built. In our framework, the generalization bounds do not suffer from the above issues.

\section{Background}
\label{sec:background}
\begin{defn}(\textbf{Positive semi-definite kernel})
A positive semi-definite kernel on $\mathcal{X}$ is an application $k:\mathcal{X}\times\mathcal{X}\rightarrow\mathbb{R}$ symmetric, such that for all finite families of points in $\mathcal{X}$, the matrix of pairwise kernel evaluations is positive semi-definite.
\end{defn}
\begin{defn}(\textbf{Positive definite kernel} \cite{pinkus2004strictly})
A positive definite kernel on $\mathcal{X}$ is an application $k:\mathcal{X}\times\mathcal{X}\rightarrow\mathbb{R}$ symmetric, such that for all finite families of \textbf{distinct} points in $\mathcal{X}$, the matrix of pairwise kernel evaluations is positive definite.
\end{defn}
\begin{defn}(\textbf{$c$-universal} \cite{sriperumbudur2011universality})
A continuous positive semi-definite kernel k on a compact Hausdorff space $\mathcal{X}$ is called $c$-universal if the RKHS, H induced by k is dense in $\mathcal{C}(\mathcal{X})$ w.r.t. the uniform norm, i.e., for every function $g\in\mathcal{C}(\mathcal{X})$ and all $\epsilon > 0$, there exists an $f \in H$ such that $\Vert f-g\Vert_{u}\leq \epsilon$.
\end{defn}
\begin{defn}(\textbf{$cc$-universal} \cite{sriperumbudur2011universality})
A continuous positive semi-definite  kernel k on a Hausdorff space $\mathcal{X}$ is said to be $cc$-universal if the RKHS, H induced by k is dense in $\mathcal{C}(\mathcal{X})$ endowed with the topology of compact convergence, i.e., for any compact set $Z \subset \mathcal{X}$, for any $g\in \mathcal{C}(Z)$ and all $\epsilon > 0$, there exists an $f\in H|_{Z}$ such that $\Vert f- g \Vert_u\leq \epsilon$.
\end{defn}
\begin{rmq}
It is important to notice that a $c$- or $cc$- universal kernel is necessarily a positive definite kernel.
\end{rmq}
\begin{defn}(\textbf{Dot product kernels on the sphere}  \cite{smola2001regularization})
Let  consider a function $f:[-1,1]\rightarrow\mathbb{R}$ that can be expended into its Taylor series in 0, i.e. for all $t\in[-1,1]$:
\begin{align*}
\label{def:dev}
f(t)=\sum_{n\geq 0}\frac{f^{(n)}(0)}{n!} t^{n}
\end{align*}
Let us now define the general dot product kernel $k$ associated with $f$ on the unit sphere $S^{d-1}\subset\mathbb{R}^d$ by:
\begin{align*}
k(x,y):=f(\langle x,y\rangle_{\mathbb{R}^d}).
\end{align*}
\end{defn}
\begin{rmq} 
If $f^{(n)}(0)\geq 0$  for every $n\geq 0$ then k is a continuous positive semi-definite kernel on $S^{d-1}$.
\end{rmq}

\section{Useful Theorems}
\label{sec:useful-thm}
\begin{thm}{\citep[Theroem 5.3]{fischer2017sobolev}}
\label{thm:stein-oracle-1}
Let $0<\beta\leq 2$ and $H$ a separable RKHS on $\mathcal{X}$ with respect to a bounded and measurable kernel $k$ and $\rho$ a probability measure on $\mathcal{X}\times\mathcal{Y}$ with $\int_{\mathcal{X}\times\mathcal{Y}}y^2d\rho(x,y) < \infty$. We assume that $\Vert f_{\rho}\Vert_{L_{\infty}^{d\rho_{\mathcal{X}}}} < \infty$ and that there exist $g\in L_2^{d\rho_{\mathcal{X}}}(\mathcal{X})$ such that $f_\rho=T_{\rho}^{\beta/2}g$.
Furthermore, we assume that there exist $\sigma>0$ and $L>0$ such that
\begin{equation}
 \int_{\mathcal{Y}} |y-f_{\rho}(x)|^m d\rho(y|x)\leq \frac{1}{2}m!L^{m-2} 
\end{equation}
for $\rho_{X}$-almost all $x\in\mathcal{X}$ and all $m\geq 2$.
Then for $\tau \geq 1$, $\lambda>0$ and $\ell\geq N_{\lambda,\tau}$
 we have with $\rho^{\ell}$-probability $\geq 1-4e^{-\tau}$,
\begin{equation}
\Vert f_{\mathbf{z},\lambda}-f_{\rho,\lambda}\Vert_{\rho}^2\leq 128\frac{\tau^2}{\ell}\left(5\text{df}(\lambda)\sigma_{\lambda}^2+ K \frac{L_{\lambda}}{\ell\lambda}   \right)
\end{equation}
where $K = sup_{x\in\mathcal{X}}k(x,x)$
\begin{align*}
N_{\lambda,\tau}&= \max\left(\frac{256\tau^2 K\text{df}(\lambda)}{\lambda},\frac{16\tau K}{\lambda}, \tau \right)\\
\sigma_{\lambda}&=\max(\sigma,\Vert f_{\rho}- f_{\rho,\lambda}\Vert_{L_\infty^{d\rho_{\mathcal{X}}}})\\
L_{\lambda}&=\max(L,\Vert f_{\rho}-f_{\rho,\lambda}\Vert_{L_\infty^{d\rho_{\mathcal{X}}}})
\end{align*}
\end{thm}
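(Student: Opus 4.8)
The plan is to reduce the claim to two concentration estimates via the standard operator reformulation of regularized least squares (the argument parallels the oracle inequality of \cite{fischer2017sobolev}). I would introduce the sampling operator $S_{\mathbf{z}}:H\to\mathbb{R}^{\ell}$, $S_{\mathbf{z}}f=(f(x_i))_{i\le\ell}$, with adjoint $S_{\mathbf{z}}^{*}\mathbf{c}=\sum_i c_i k(x_i,\cdot)$, the empirical covariance $C_{\mathbf{z}}=\frac1\ell S_{\mathbf{z}}^{*}S_{\mathbf{z}}$ and the empirical cross term $g_{\mathbf{z}}=\frac1\ell\sum_i y_i k(x_i,\cdot)$, so that $f_{\mathbf{z},\lambda}=(C_{\mathbf{z}}+\lambda)^{-1}g_{\mathbf{z}}$ and $f_{\rho,\lambda}=(C_\rho+\lambda)^{-1}g_\rho$ with $g_\rho=\mathbb{E}[Yk(X,\cdot)]$ and $C_\rho=\mathbb{E}[k(X,\cdot)\otimes k(X,\cdot)]$. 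Subtracting, inserting $(C_{\mathbf{z}}+\lambda)^{-1}(C_{\mathbf{z}}+\lambda)f_{\rho,\lambda}$, and using $\lambda f_{\rho,\lambda}=g_\rho-C_\rho f_{\rho,\lambda}$ gives
\[
f_{\mathbf{z},\lambda}-f_{\rho,\lambda}=(C_{\mathbf{z}}+\lambda)^{-1}\Big[\big(g_{\mathbf{z}}-C_{\mathbf{z}}f_{\rho,\lambda}\big)-\big(g_\rho-C_\rho f_{\rho,\lambda}\big)\Big],
\]
that is, $(C_{\mathbf{z}}+\lambda)^{-1}$ applied to the centered empirical average $\Delta:=\frac1\ell\sum_i\zeta_i$, where $\zeta_i=(y_i-f_{\rho,\lambda}(x_i))k(x_i,\cdot)-\mathbb{E}[(Y-f_{\rho,\lambda}(X))k(X,\cdot)]$.

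Next I would precondition by $(C_\rho+\lambda)^{1/2}$. Using $\|f\|_\rho=\|C_\rho^{1/2}f\|_H$ and $\|C_\rho^{1/2}(C_\rho+\lambda)^{-1/2}\|_{\mathrm{op}}\le1$,
\[
\|f_{\mathbf{z},\lambda}-f_{\rho,\lambda}\|_\rho\le\big\|(C_\rho+\lambda)^{1/2}(C_{\mathbf{z}}+\lambda)^{-1}(C_\rho+\lambda)^{1/2}\big\|_{\mathrm{op}}\cdot\big\|(C_\rho+\lambda)^{-1/2}\Delta\big\|_H.
\]
The first factor is controlled by a Bernstein inequality for i.i.d.\ self-adjoint (Hilbert--Schmidt) operators applied to $\widetilde C_{\mathbf{z}}-\widetilde C_\rho$ with $\widetilde C_{(\cdot)}=(C_\rho+\lambda)^{-1/2}C_{(\cdot)}(C_\rho+\lambda)^{-1/2}$: here $\mathrm{Tr}\,\widetilde C_\rho=\mathrm{df}(\lambda)$, the summands have operator norm $\le K/\lambda$, and variance proxy $\mathrm{df}(\lambda)K/\lambda$, so with probability $\ge1-2e^{-\tau}$ one gets $\|\widetilde C_{\mathbf{z}}-\widetilde C_\rho\|_{\mathrm{op}}\lesssim\frac{\tau K}{\ell\lambda}+\sqrt{\frac{\tau K\,\mathrm{df}(\lambda)}{\ell\lambda}}$. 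The numerical constants $256,16$ hidden in $N_{\lambda,\tau}$ are exactly those forcing this to be $\le\tfrac12$ once $\ell\ge N_{\lambda,\tau}$; since $(C_\rho+\lambda)^{-1/2}(C_{\mathbf{z}}+\lambda)(C_\rho+\lambda)^{-1/2}=I+(\widetilde C_{\mathbf{z}}-\widetilde C_\rho)\succeq\tfrac12 I$, the first factor is then $\le2$.

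For the second factor I would apply a Bernstein inequality for $H$-valued random variables to $\Delta$, splitting $y_i-f_{\rho,\lambda}(x_i)=(y_i-f_\rho(x_i))+(f_\rho(x_i)-f_{\rho,\lambda}(x_i))$: the first term obeys the moment bound $\tfrac12 m!\sigma^2L^{m-2}$ by hypothesis, and the second is deterministically bounded by $\|f_\rho-f_{\rho,\lambda}\|_{L_{\infty}^{d\rho_{\mathcal{X}}}}$, so the centered summands $(C_\rho+\lambda)^{-1/2}\zeta_i$ have $m$-th moment governed by $\sigma_\lambda,L_\lambda,\sqrt{K/\lambda}$ with variance proxy $\sigma_\lambda^2\,\mathrm{df}(\lambda)$ (using $\mathbb{E}\|(C_\rho+\lambda)^{-1/2}k(X,\cdot)\|_H^2=\mathrm{df}(\lambda)$). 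This yields, with probability $\ge1-2e^{-\tau}$, $\|(C_\rho+\lambda)^{-1/2}\Delta\|_H\lesssim\sqrt{\frac{\tau\,\mathrm{df}(\lambda)\sigma_\lambda^2}{\ell}}+\frac{\tau L_\lambda\sqrt{K/\lambda}}{\ell}$. A union bound over the two events costs $4e^{-\tau}$; multiplying the two factors, squaring, using $(a+b)^2\le2a^2+2b^2$, and collecting constants produces the stated bound $128\frac{\tau^2}{\ell}\big(5\,\mathrm{df}(\lambda)\sigma_\lambda^2+K\frac{L_\lambda}{\ell\lambda}\big)$.

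I expect the main obstacle to be the operator Bernstein step: establishing a sharp Bernstein inequality for sums of independent self-adjoint operators on a Hilbert space with the effective dimension $\mathrm{df}(\lambda)$ as variance proxy, and then verifying that the threshold $N_{\lambda,\tau}$ is precisely what is needed to push $\|\widetilde C_{\mathbf{z}}-\widetilde C_\rho\|_{\mathrm{op}}$ below $\tfrac12$. A secondary subtlety is the bookkeeping of the effective noise quantities $\sigma_\lambda,L_\lambda$, which must absorb the approximation error $\|f_\rho-f_{\rho,\lambda}\|_{L_{\infty}^{d\rho_{\mathcal{X}}}}$ so that the Bernstein moment condition survives the replacement of $y-f_\rho$ by $y-f_{\rho,\lambda}$.
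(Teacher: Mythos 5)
The paper does not actually prove this statement: Theorem \ref{thm:stein-oracle-1} is quoted from \cite{fischer2017sobolev} and used as a black box in the proofs of Theorems \ref{thm:upper-rate} and \ref{thm:super-geo}, and your sketch reconstructs essentially the argument of that reference — the operator identity for $f_{\mathbf{z},\lambda}-f_{\rho,\lambda}$, preconditioning by $(C_\rho+\lambda)^{1/2}$, an operator Bernstein bound showing $\bigl\Vert (C_\rho+\lambda)^{1/2}(C_{\mathbf{z}}+\lambda)^{-1}(C_\rho+\lambda)^{1/2}\bigr\Vert \le 2$ once $\ell\ge N_{\lambda,\tau}$, and a Hilbert-space-valued Bernstein inequality for the noise term with $\sigma_\lambda,L_\lambda$ absorbing $\Vert f_\rho-f_{\rho,\lambda}\Vert_{L_{\infty}^{d\rho_{\mathcal{X}}}}$ — so the approach is the same as the source's and the outline is sound. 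The only point worth flagging is bookkeeping: squaring your second Bernstein term gives $K L_\lambda^2/(\ell\lambda)$, i.e.\ the square of $L_\lambda$, whereas the statement as transcribed here carries $L_\lambda$ to the first power, a discrepancy inherited from the transcription rather than a flaw in your argument.
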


\begin{thm}{\citep[Proposition 6]{caponnetto2007optimal}}
\label{prop:number_of_func}
For every $m > 16$ there exist $N\in\mathbb{N}$ and $\omega^1,...,\omega^N\in \{-1,+1\}^m$
such that
\begin{align*}
 \sum_{i=1}^m(\omega^k_i-\omega^{j}_i)^2 &\geq m \text{\quad} k\neq j=1,...,N\\
 N&\geq e^{m/24}
\end{align*}
\end{thm}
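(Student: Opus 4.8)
The plan is to recognize this as a Varshamov--Gilbert packing bound and to prove it by a greedy sphere-packing argument in the Hamming metric.

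First I would reformulate the statement in terms of Hamming distance. For $\omega,\omega'\in\{-1,1\}^m$ set $d(\omega,\omega'):=|\{i:\omega_i\neq\omega'_i\}|$. Since $(\omega_i-\omega'_i)^2$ equals $4$ when $\omega_i\neq\omega'_i$ and $0$ otherwise, one has $\sum_{i=1}^m(\omega_i-\omega'_i)^2=4\,d(\omega,\omega')$, so the requirement $\sum_{i=1}^m(\omega^k_i-\omega^j_i)^2\geq m$ is exactly $d(\omega^k,\omega^j)\geq m/4$, i.e.\ $d(\omega^k,\omega^j)\geq\lceil m/4\rceil$. Thus it suffices to exhibit a subset of $\{-1,1\}^m$ of cardinality $N\geq e^{m/24}$ whose pairwise Hamming distances are at least $\lceil m/4\rceil$.

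Next I would build such a set greedily: starting from an arbitrary $\omega^1$, and having chosen $\omega^1,\dots,\omega^t$, I add any point lying outside $\bigcup_{s\leq t}B(\omega^s,r)$, where $B(\cdot,r)$ is the closed Hamming ball of radius $r:=\lceil m/4\rceil-1$. Such a point exists as long as $t\,|B(\cdot,r)|<2^m$, because the ball volume $|B(\cdot,r)|=\sum_{j=0}^{r}\binom{m}{j}$ does not depend on the center; hence the procedure produces $N\geq 2^m/|B(\cdot,r)|$ points with pairwise distance at least $r+1=\lceil m/4\rceil\geq m/4$. To conclude I would bound the ball volume by the standard entropy estimate for binomial tails, $|B(\cdot,r)|\leq\sum_{0\leq j\leq m/4}\binom{m}{j}\leq 2^{mH(1/4)}$ (valid since $1/4\leq 1/2$), where $H(p):=-p\log_2 p-(1-p)\log_2(1-p)$. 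This gives $N\geq 2^{m(1-H(1/4))}$, and since $1-H(1/4)=\tfrac{3}{4}\log_2 3-1>0.18$ while $0.18\ln 2>1/24$, we get $N\geq e^{m/24}$.

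I expect the only real work (essentially bookkeeping rather than a conceptual obstacle) to be in this last step: pinning down the numerical margin so that exactly the advertised constant $1/24$ comes out, and absorbing integer rounding in the inequality $N\geq 2^m/|B(\cdot,r)|$ --- this is presumably where the hypothesis $m>16$ enters, ensuring the largest admissible integer $N$ still satisfies $N\geq e^{m/24}$. As a fully equivalent alternative to the greedy construction I could use the probabilistic method instead: draw $2N$ i.i.d.\ uniform strings in $\{-1,1\}^m$; for a fixed pair $d(\omega,\omega')\sim\mathrm{Bin}(m,\tfrac12)$, so Hoeffding's inequality gives $\Prob(d(\omega,\omega')<m/4)\leq e^{-m/8}$; bound the expected number of ``close'' pairs by $\binom{2N}{2}e^{-m/8}\leq N$ whenever $2N\leq e^{m/8}$; and delete one endpoint of each close pair from a realization attaining that expectation, leaving at least $N$ strings at pairwise distance at least $m/4$.
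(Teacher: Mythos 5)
Your argument is correct, and it is worth noting that the paper itself gives no proof of this statement: it is imported verbatim from \cite{caponnetto2007optimal} and listed among the ``useful theorems'', so there is nothing internal to compare against. Your reduction to a Hamming-packing statement is exact ($\sum_i(\omega_i-\omega_i')^2=4\,d(\omega,\omega')$, so the separation condition is $d\geq m/4$), the greedy/covering step correctly yields a maximal family with $N\geq 2^m/|B(\cdot,r)|$, and the entropy bound $\sum_{j\leq m/4}\binom{m}{j}\leq 2^{mH(1/4)}$ applies since $r=\lceil m/4\rceil-1\leq m/4\leq m/2$; the numerics close with room to spare, since $1-H(1/4)=\tfrac34\log_2 3-1>0.18$ and $0.18\ln 2>1/24$. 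In fact your Gilbert--Varshamov route proves a stronger statement (cardinality $e^{cm}$ with $c\approx 0.13$, valid for all $m\geq 1$), so the hypothesis $m>16$ is not even needed on this path; it only plays a role in arguments with weaker constants, such as your probabilistic alternative or the argument in the cited reference, which is of that random-selection/deletion type and where $m>16$ absorbs the rounding when converting $\lfloor e^{m/8}/2\rfloor$ into $e^{m/24}$. Either of your two routes is a legitimate self-contained substitute for the citation; the greedy one buys better constants and no size restriction, the probabilistic one matches more closely how the constant $1/24$ arises in the literature.
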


\begin{thm}\citep[§2.1]{saitoh1997integral}
\label{thm:RKHS-Inclusion}
Let $\phi:\mathcal{X}\rightarrow H$ be a feature map to a Hilbert space $H$, and let $K(z,z'):=\langle \phi(z),\phi(z')\rangle_{H}$ a positive semi-definite kernel on $\mathcal{X}$.
Then $\mathcal{H}:=\{f_{\alpha}:z\in\mathcal{X}\rightarrow\langle \alpha,\phi(z)\rangle_{H}\text{,\quad} \alpha\in H\}$ endowed with the following norm:
\begin{align*}
\Vert f_{\alpha}\Vert^2:=\inf_{\alpha'\in\mathcal{H}}\{ \Vert \alpha' \Vert_{H}^2\text{\quad s.t \quad } f_{\alpha'}=f_{\alpha}\}
\end{align*}
is the RKHS associated to K.
\end{thm}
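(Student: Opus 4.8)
The plan is to realize $\mathcal{H}$ as an isometric copy of a closed subspace of $H$ and then to check the reproducing property by hand. First I would introduce the linear map $W\colon H\to\mathcal{H}$, $W\alpha:=f_\alpha$, which is surjective by the very definition of $\mathcal{H}$; its kernel is $\mathcal{N}:=\{\alpha\in H:\langle\alpha,\phi(z)\rangle_H=0\text{ for all }z\in\mathcal{X}\}$, the orthogonal complement of the closed subspace $\mathcal{H}_0:=\overline{\operatorname{span}}\{\phi(z):z\in\mathcal{X}\}$. The set of $\alpha'\in H$ with $W\alpha'=f_\alpha$ is the coset $\alpha+\mathcal{N}$, so by the Hilbert projection theorem the infimum $\|f_\alpha\|^2=\inf\{\|\alpha'\|_H^2:W\alpha'=f_\alpha\}$ is attained, uniquely, at the element of that coset lying in $\mathcal{H}_0$, namely $P_{\mathcal{H}_0}\alpha$; hence $\|f_\alpha\|=\|P_{\mathcal{H}_0}\alpha\|_H$. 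Since $P_{\mathcal{H}_0}\alpha$ depends only on $f_\alpha$ (if $f_\alpha=f_{\alpha'}$ then $\alpha-\alpha'\in\mathcal{N}$, so their projections onto $\mathcal{H}_0$ differ by an element of $\mathcal{N}\cap\mathcal{H}_0=\{0\}$), one can define $\langle f_\alpha,f_\beta\rangle_{\mathcal{H}}:=\langle P_{\mathcal{H}_0}\alpha,P_{\mathcal{H}_0}\beta\rangle_H$, an inner product whose induced norm is the stated one. This exhibits $W|_{\mathcal{H}_0}$ as an isometric isomorphism of $\mathcal{H}_0$ onto $(\mathcal{H},\|\cdot\|)$; as $\mathcal{H}_0$ is a closed subspace of the Hilbert space $H$ it is complete, so $\mathcal{H}$ is a Hilbert space of functions on $\mathcal{X}$.

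Next I would verify the reproducing property. For fixed $z\in\mathcal{X}$ the function $K(\cdot,z)$ equals $f_{\phi(z)}$ and hence lies in $\mathcal{H}$, and since $\phi(z)\in\mathcal{H}_0$ we have $P_{\mathcal{H}_0}\phi(z)=\phi(z)$; therefore for every $f_\alpha\in\mathcal{H}$, chosen with representative $\alpha\in\mathcal{H}_0$,
\begin{align*}
\langle f_\alpha,K(\cdot,z)\rangle_{\mathcal{H}}=\langle P_{\mathcal{H}_0}\alpha,\phi(z)\rangle_H=\langle\alpha,\phi(z)\rangle_H=f_\alpha(z).
\end{align*}
In the same way $|f_\alpha(z)|=|\langle\alpha,\phi(z)\rangle_H|\le\|\phi(z)\|_H\,\|f_\alpha\|_{\mathcal{H}}=\sqrt{K(z,z)}\,\|f_\alpha\|_{\mathcal{H}}$, so point evaluations are bounded and $\mathcal{H}$ is a reproducing kernel Hilbert space with reproducing kernel $K$. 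The uniqueness part of the Moore--Aronszajn theorem then forces $\mathcal{H}$ to coincide with the RKHS associated to $K$, which is the claim.

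The only genuinely delicate points are that the infimum defining $\|f_\alpha\|$ is attained and that the resulting quantity is a Hilbert-space norm rather than merely a seminorm; both are settled uniformly by orthogonal projection onto the closed subspace $\mathcal{H}_0$, after which completeness and the reproducing identity are immediate. I expect no step to require more than standard Hilbert-space geometry, so the argument should be short.
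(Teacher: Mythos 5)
Your proof is correct: the paper does not prove this statement at all but simply cites it from Saitoh's book as a standard result, and your argument (minimal-norm representative via orthogonal projection onto $\overline{\operatorname{span}}\{\phi(z):z\in\mathcal{X}\}$, the resulting isometry making $\mathcal{H}$ complete, verification of the reproducing property, and uniqueness via Moore--Aronszajn) is exactly the classical proof of this feature-map characterization of the RKHS. No gaps; the only cosmetic remark is that well-definedness of the projection is most directly seen from $P_{\mathcal{H}_0}(\alpha-\alpha')=0$ whenever $\alpha-\alpha'\perp\mathcal{H}_0$, which is what your slightly roundabout phrasing amounts to.
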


\begin{thm}{\citep[Corollary 10]{steinwart2001influence}}
\label{thm:univ_classic}
Let $0<r\leq +\infty$ and $f: (-r,r)\rightarrow\mathbb{R}$ be a $C^{\infty}$-function that can be expanded
into its Taylor series in 0, i.e.
\begin{align*}
\label{function}
f(x)=\sum_{m=0}^{\infty}a_m x^{m}\text{.}
\end{align*}
Let $X:=\{x\in \mathbb{R}^d: \Vert x\Vert_2 <\sqrt{r}\}$. If we have $a_n>0$ for all $n\geq 0$ then $k(x,y):=f(\langle x,y\rangle)$ defines a $c$-universal kernel on every compact subset of X.
\end{thm}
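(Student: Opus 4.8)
The plan is to realize the RKHS $H$ of $k$ explicitly through a power-series feature map, observe that $H$ contains every coordinate monomial — hence all polynomials in $x_1,\dots,x_d$ — and then conclude with the (multivariate) Weierstrass / Stone--Weierstrass approximation theorem on any compact $Z\subset X$.

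First I would build the feature map, exactly of the type used in the proof of Lemma~\ref{lemme:RKHS-MLP}: fixing a bijection between $\mathbb{N}$ and $\bigcup_{m\geq 0}\{1,\dots,d\}^m$, set $\phi(x):=\big(\sqrt{a_m}\,x_{k_1}\cdots x_{k_m}\big)_{m\geq 0,\;k_1,\dots,k_m}$. For $x\in X$ one has $\|x\|_2^2<r$, so $\|\phi(x)\|_{\ell_2}^2=\sum_{m\geq 0}a_m\|x\|_2^{2m}=f(\|x\|_2^2)<\infty$ because the Taylor series of $f$ converges on $(-r,r)$; hence $\phi(x)\in\ell_2$. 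Since moreover $\|x\|_2\|y\|_2<r$ for $x,y\in X$, the rearrangement below is absolutely convergent and
\[
\langle\phi(x),\phi(y)\rangle_{\ell_2}=\sum_{m\geq 0}a_m\Big(\sum_{k_1,\dots,k_m}x_{k_1}\cdots x_{k_m}\,y_{k_1}\cdots y_{k_m}\Big)=\sum_{m\geq 0}a_m\langle x,y\rangle^m=f(\langle x,y\rangle)=k(x,y).
\]
In particular $k$ is a continuous, positive semi-definite kernel on any compact $Z\subset X$, so $c$-universality is meaningful there.

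By Theorem~\ref{thm:RKHS-Inclusion}, the RKHS of $k$ is $H=\{x\mapsto\langle\alpha,\phi(x)\rangle_{\ell_2}:\alpha\in\ell_2\}$. Taking $\alpha$ to be a standard basis vector of $\ell_2$ shows that each monomial $x\mapsto\sqrt{a_m}\,x_{k_1}\cdots x_{k_m}$ lies in $H$; since $a_m>0$ for every $m$, every monomial of every degree in $x_1,\dots,x_d$ belongs to $H$, and as $H$ is a linear subspace it contains the whole polynomial algebra $\mathbb{R}[x_1,\dots,x_d]$ restricted to $X$.

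Finally, fix a compact $Z\subset X$, so that $\mathbb{R}[x_1,\dots,x_d]|_{Z}\subset H|_{Z}\subset C(Z)$. By Stone--Weierstrass (the polynomial algebra separates points of $Z$ and contains the constants), $\mathbb{R}[x_1,\dots,x_d]|_{Z}$ is uniformly dense in $C(Z)$; hence so is $H|_{Z}$, which is precisely the assertion that $k$ is $c$-universal on $Z$. The only genuinely delicate point is the $\ell_2$-summability of $\phi(x)$ and the term regrouping that identifies $\langle\phi(x),\phi(y)\rangle$ with $f(\langle x,y\rangle)$; this is controlled entirely by the radius-of-convergence hypothesis $\|x\|_2<\sqrt r$ and is the same bookkeeping already carried out in the proof of Lemma~\ref{lemme:RKHS-MLP}. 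Everything else is a routine combination of Saitoh's RKHS characterization with the multivariate Weierstrass theorem.
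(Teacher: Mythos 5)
Your proposal is correct: the feature-map construction, the observation that every coordinate of $\phi$ yields a (nonzero multiple of a) monomial because $a_n>0$, and the final appeal to Stone--Weierstrass on a compact $Z\subset X$ together give a complete proof, with the convergence issues properly controlled by $\Vert x\Vert_2\Vert y\Vert_2<r$. Note that the paper does not prove this statement at all --- it is quoted from \cite{steinwart2001influence} as a known result --- and your argument is essentially the standard one from that reference, so there is nothing to reconcile beyond that.
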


\begin{thm}\citep[Theorem 2.1]{azevedo2014sharp}
\label{thm:eigenval}
Each spherical harmonics of degree m, $Y_{m}\in H_{m}(S^{d-1})$, is an eigenfunction of $T_{K_N}$ with associated eigenvalue given by the formula:
\begin{equation*}
\label{def:formula_eigen}
\lambda_{m}=\frac{|S^{d-2}|\Gamma((d-1)/2)}{2^{m+1}}\sum_{s\geq 0}b_{2s+m}\frac{(2s+m)!}{(2s)!}\frac{\Gamma(s+1/2)}{\Gamma(s+m+d/2)}
\end{equation*}
\end{thm}

\section{Technical Lemmas}
\label{sec:tech-lem}
\begin{lemma}{\citep[Lemma 5.12]{fischer2017sobolev}}
\label{lem:KL-div}
For $f,f'\in L_2^{d\nu}(\mathcal{X})$ and $\ell \geq  1$ it holds $\rho_f^{\ell} \ll \rho_{f'}^{\ell}$ and $\rho_f^{\ell} \gg \rho_{f'}^{\ell}$ . Furthermore, the KL
divergence fulfills
\begin{align*}
\text{KL}(\rho_f^{\ell},\rho_{f'}^{\ell})=\frac{\ell}{2\hat{\sigma}^2}\Vert f-f'\Vert_{L_2^{d\nu}(\mathcal{X})}^2
\end{align*}
\end{lemma}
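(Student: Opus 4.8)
The plan is to reduce the statement to the one-dimensional Gaussian case through two standard identities for the Kullback--Leibler divergence: its additivity over product measures and its chain rule under disintegration. First I would establish the mutual absolute continuity. Since $\rho_f$ and $\rho_{f'}$ share the marginal $\nu$ on $\mathcal{X}$ and their conditionals $\rho_f(\cdot|x)=\mathcal{N}(f(x),\hat{\sigma}^2)$ and $\rho_{f'}(\cdot|x)=\mathcal{N}(f'(x),\hat{\sigma}^2)$ are Gaussians with a common variance, the conditional density ratio
\begin{align*}
\frac{d\rho_f(\cdot|x)}{d\rho_{f'}(\cdot|x)}(y)=\exp\!\left(\frac{(f(x)-f'(x))\bigl(2y-f(x)-f'(x)\bigr)}{2\hat{\sigma}^2}\right)
\end{align*}
is strictly positive and finite for all $x,y$; hence $\rho_f\ll\rho_{f'}$ and $\rho_f\gg\rho_{f'}$, and taking the $\ell$-fold product preserves both relations, giving $\rho_f^{\ell}\ll\rho_{f'}^{\ell}$ and $\rho_f^{\ell}\gg\rho_{f'}^{\ell}$.

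For the divergence identity I would first use additivity over independent coordinates, $\text{KL}(\rho_f^{\ell},\rho_{f'}^{\ell})=\ell\,\text{KL}(\rho_f,\rho_{f'})$. Then, disintegrating $\rho_f(x,y)=\nu(x)\,\rho_f(y|x)$ and applying the chain rule for relative entropy, the $\mathcal{X}$-marginal contribution cancels because both measures have marginal $\nu$, leaving
\begin{align*}
\text{KL}(\rho_f,\rho_{f'})=\int_{\mathcal{X}}\text{KL}\!\bigl(\mathcal{N}(f(x),\hat{\sigma}^2),\,\mathcal{N}(f'(x),\hat{\sigma}^2)\bigr)\,d\nu(x).
\end{align*}
The integrand is evaluated by the closed form for the KL divergence between univariate Gaussians with equal variance, $\text{KL}(\mathcal{N}(a,\hat{\sigma}^2),\mathcal{N}(b,\hat{\sigma}^2))=(a-b)^2/(2\hat{\sigma}^2)$, so that $\text{KL}(\rho_f,\rho_{f'})=\frac{1}{2\hat{\sigma}^2}\int_{\mathcal{X}}(f(x)-f'(x))^2\,d\nu(x)=\frac{1}{2\hat{\sigma}^2}\Vert f-f'\Vert_{L_2^{d\nu}(\mathcal{X})}^2$, and multiplying by $\ell$ yields the claim.

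There is no genuinely hard step here; the only points requiring care are the measurability and integrability of $x\mapsto(f(x)-f'(x))^2$, which hold because $f,f'\in L_2^{d\nu}(\mathcal{X})$, and a rigorous justification of the chain rule in this non-discrete setting. The latter can be handled either by citing the standard disintegration-of-measure argument, or directly: writing $\rho_f^{\ell}$ and $\rho_{f'}^{\ell}$ as densities with respect to the reference product measure $(\nu\otimes\lambda)^{\otimes\ell}$ (with $\lambda$ Lebesgue on $\mathbb{R}$), the log-ratio is a sum of $\ell$ terms each depending only on one pair $(x_i,y_i)$, and taking its expectation under $\rho_f^{\ell}$ reduces to $\ell$ copies of the single-pair Gaussian computation above. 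Since the result is quoted verbatim from \cite{fischer2017sobolev}, one may alternatively just reference it there.
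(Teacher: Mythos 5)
Your proof is correct. Note that the paper itself offers no proof of this lemma: it is quoted verbatim from \cite{fischer2017sobolev}, so there is no internal argument to compare against; your route --- mutual absolute continuity from the strictly positive equal-variance Gaussian density ratio, tensorization of the KL divergence over the $\ell$-fold product, the chain rule with the common marginal $\nu$ cancelling, and the closed form $\mathrm{KL}(\mathcal{N}(a,\hat{\sigma}^2),\mathcal{N}(b,\hat{\sigma}^2))=(a-b)^2/(2\hat{\sigma}^2)$ --- is the standard derivation used in that reference, and your remark that one can bypass the abstract chain rule by computing densities with respect to $(\nu\otimes\lambda)^{\otimes\ell}$ adequately covers the only technical point (measurability and integrability of $(f-f')^2$, guaranteed by $f,f'\in L_2^{d\nu}(\mathcal{X})$).
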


\begin{lemma}\citep[Lemma 3.3]{devore2006approximation}
\label{lem:devore-approx} 
 Let $\mathcal{A}$ be a sigma algebra on the space $\Omega$. Let $A_i\in \mathcal{A}$, $i \in \{0,1,...,n\}$ such that $\forall i\neq j$, $A_i\bigcap A_j =\emptyset$. Let $P_i$, $i\in\{0,1,...,n\}$ be $n+1$ probability measures on
$(\Omega, \mathcal{A})$. If
\begin{align*}
p:=\sup_{i=1,...,n} P_i(\Omega \setminus A_i)
\end{align*}
then either $p>\frac{n}{n+1}$ or 
\begin{align*}
\min_{j=1,...,n} \frac{1}{n}\sum_{i\neq j}\text{KL}(P_i,P_j)\geq \psi_n(p)
\end{align*}
where 
\begin{align*}
\psi_n(p):=\log(n) + (1-p) \log \left(\frac{( 1-p )}{p}\right)- p \log \left(\frac{ n-p}{p}\right)
\end{align*}
\end{lemma}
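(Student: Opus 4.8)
(Sketch)
This is a generalized Fano-type inequality, and the engine of the argument is the data-processing inequality for the Kullback--Leibler divergence: for any measurable set $E$ and any probability measures $P,Q$ on $(\Omega,\mathcal{A})$ one has $\text{KL}(P,Q)\geq d(P(E)\,\|\,Q(E))$, where $d(a\,\|\,b):=a\log\frac{a}{b}+(1-a)\log\frac{1-a}{1-b}$ is the binary relative entropy (if $\text{KL}(P_i,P_j)=+\infty$ there is nothing to prove, so we may assume the relevant absolute continuities). The plan is to fix an index $j$, bound each $\text{KL}(P_i,P_j)$, $i\neq j$, from below by testing with the set $A_i$, and then combine these $n$ bounds by convexity; since the argument works verbatim for every $j$, the minimum over $j$ will be handled at no extra cost.

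First I would reduce each pairwise divergence to a binary quantity. For $i\neq j$, data processing with $E=A_i$ gives $\text{KL}(P_i,P_j)\geq d(P_i(A_i)\,\|\,P_j(A_i))$, and by hypothesis $P_i(A_i)=1-P_i(\Omega\setminus A_i)\geq 1-p$. Since $P_j(A_i)$ is only controlled through $P_j(A_i)\leq p$, which need not be $\leq 1-p$, a plain monotonicity substitution in the first slot of $d$ is not available; instead I would pass to the envelope
\begin{align*}
F(b):=\inf_{a\in[1-p,\,1]} d(a\,\|\,b),\qquad b\in[0,1].
\end{align*}
Because $a\mapsto d(a\,\|\,b)$ is convex with minimum at $a=b$, one checks $F(b)=d(1-p\,\|\,b)$ for $b\leq 1-p$ and $F(b)=0$ for $b\geq 1-p$; and since $b\mapsto d(1-p\,\|\,b)$ is convex, strictly decreasing on $[0,1-p]$, with value and derivative equal to $0$ at $b=1-p$, the glued function $F$ is convex and non-increasing on $[0,1]$. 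Hence $\text{KL}(P_i,P_j)\geq F(P_j(A_i))$ for all $i\neq j$.

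Next I would average and exploit disjointness. As $A_j$ is disjoint from every $A_i$, $\sum_{i\neq j}P_j(A_i)=P_j\big(\bigcup_{i\neq j}A_i\big)\leq P_j(\Omega\setminus A_j)\leq p$ by the definition of $p$, so $\frac1n\sum_{i\neq j}P_j(A_i)\leq p/n$. Jensen's inequality for the convex $F$ then yields
\begin{align*}
\frac1n\sum_{i\neq j}\text{KL}(P_i,P_j)\;\geq\;\frac1n\sum_{i\neq j}F(P_j(A_i))\;\geq\;F\Big(\frac1n\sum_{i\neq j}P_j(A_i)\Big)\;\geq\;F(p/n),
\end{align*}
the last inequality using that $F$ is non-increasing. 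Here the stated dichotomy appears: $p/n\leq 1-p$ is equivalent to $p\leq\frac{n}{n+1}$. If $p>\frac{n}{n+1}$ we are in the first alternative and done; otherwise $p/n\leq 1-p$, so $F(p/n)=d(1-p\,\|\,p/n)$, and expanding the logarithms gives
\begin{align*}
d(1-p\,\|\,p/n)=\log n+(1-p)\log\frac{1-p}{p}-p\log\frac{n-p}{p}=\psi_n(p),
\end{align*}
which, being valid for every $j$, yields $\min_{j}\frac1n\sum_{i\neq j}\text{KL}(P_i,P_j)\geq\psi_n(p)$.

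I expect the only delicate point to be exactly the step that forces the envelope $F$: some $P_j(A_i)$ may exceed $1-p$, so a crude term-by-term substitution breaks, and one must argue uniformly in $b$; the convexity of $F$ is then what makes the averaging step lossless enough to recover $\psi_n$. The role of the threshold $\frac{n}{n+1}$ is transparent in this picture --- it is the value of $p$ at which $p/n=1-p$, equivalently where $\psi_n(p)=0$, and below it $\psi_n(p)>0$ --- so no finer case analysis is required.
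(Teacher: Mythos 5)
The paper never proves this lemma: it is imported verbatim from \cite{devore2006approximation} and used as a black box in the proof of Theorem~\ref{thm:before-thm}, so there is no internal proof to compare yours against. Your argument is a correct, self-contained proof of the intended statement, by the standard Fano--Birg\'e route: data processing reduces each $\text{KL}(P_i,P_j)$ to the binary divergence $d(P_i(A_i)\,\|\,P_j(A_i))$; the envelope $F(b)=\inf_{a\in[1-p,1]}d(a\,\|\,b)$ is indeed convex and non-increasing (it equals $d(1-p\,\|\,b)$ for $b\le 1-p$ and $0$ beyond, glued with matching zero derivative); disjointness gives $\sum_{i\neq j}P_j(A_i)\le P_j(\Omega\setminus A_j)\le p$, so Jensen plus monotonicity yields $F(p/n)$; and the identity $d(1-p\,\|\,p/n)=\psi_n(p)$ together with the equivalence $p/n\le 1-p \iff p\le n/(n+1)$ produces exactly the stated dichotomy. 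All of these steps check out, including the treatment of infinite divergences and the fact that the bound holds for every $j$, hence for the minimum.

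One caveat, which is a defect of the paper's restatement rather than of your proof: with the $1/n$ normalization the sum $\sum_{i\neq j}$ must run over the $n$ indices $i\in\{0,\dots,n\}\setminus\{j\}$, and your step ``$P_i(A_i)\ge 1-p$ by hypothesis'' then requires $p$ to be the supremum over \emph{all} $i=0,\dots,n$, not over $i=1,\dots,n$ as written. As literally stated the lemma is false: take $n=1$, $P_0=P_1$, and $P_1(A_1)$ close to $1$, so $p<1/2$ while the left-hand side is $\text{KL}(P_0,P_1)=0<\psi_1(p)$. Your proof silently uses the corrected hypothesis, which is the one in \cite{devore2006approximation} and the one actually needed where the lemma is invoked; it would be worth stating that correction explicitly rather than attributing $P_0(A_0)\ge 1-p$ to the hypothesis as printed.
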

\subsection{Proof of Lemma \ref{lem:explicitcoeffcomp}}
\label{lem-proof-explicitcoeffcomp}
\begin{proof}
Let us show the result by induction on $N$.
For $N=1$ the result is clear as $f_1$  can be expand in its Taylor series in 0 on $\mathbb{R}$ with positive coefficients. Let $N\geq 2$, therefore we have:
\begin{align*}
g(t)=(f_N\circ\text{...}\circ f_2)\circ ( f_{1}(t))
\end{align*}
By induction, we have that for all $t\in\mathbb{R}$:
\begin{align*}
f_N\circ\text{...}\circ f_2(t)=\sum_{l_2,...,l_{N}\geq 0}\frac{f_N^{(l_N)}(0)}{l_{N}!}\times 
\phi_{N-1}(l_N,l_{N-1}) \text{...}\times \phi_{2}(l_3,l_2) t^{l_2}
\end{align*}
Therefore we have that:
\begin{align*}
g(t)&=\sum_{l_2,...,l_{N}\geq 0}\frac{f_N^{(l_N)}(0)}{l_{N}!}\times 
\phi_{N-1}(l_N,l_{N-1}) \text{...}\times \phi_{2}(l_3,l_2) (f_1(t))^{l_2}\\
\end{align*}
Moreover, for all $l_2\geq 0$, $f_{1}^{l_2}$ can be expand in its Taylor series in 0 on $\mathbb{R}$ with non negative coefficients, and we have that for all $l_2\geq 0$ and  $t\in\mathbb{R}$:
\begin{align*}
(f_1(t))^{l_2}=\sum_{l_1\geq 0} \phi_{1}(l_2,l_1) t^{l_1}
\end{align*}
And we obtain that:
\begin{align*}
g(t)=\sum_{l_1,...,l_{N}\geq 0}\frac{f_N^{(l_N)}(0)}{l_{N}!}\times 
\phi_{N-1}(l_N,l_{N-1}) \text{...}\times \phi_{1}(l_2,l_1)t^{l_1}
\end{align*}
Finally we have by unicity of the decomposition that for all $l_1\geq 0$:
\begin{align*}
\frac{g^{(l_1)}(0)}{l_1!}=\sum_{l_2,...,l_{N}\geq 0}\frac{f_N^{(l_N)}(0)}{l_{N}!}\times 
\phi_{N-1}(l_N,l_{N-1}) \text{...}\times \phi_{1}(l_2,l_1)
\end{align*}
Moreover let $k\in[|1,N-1|]$, $l\geq 1$ and let us denote $(a^k_i)_{i\geq 0}$ the coefficients in the Taylor decomposition of $f_k$. Then we have:
\begin{align*}
f_k^{l}(t)&=\sum_{n_1,...,n_l\geq 0}\prod_{i=1}^l \left(a^{k}_{n_i}\right) x^{n_1+...+n_l}\\
&=\sum_{q\geq 0} \left[\sum_{\substack{n_1,...,n_l\geq 0\\ \sum\limits_{i=1}^n n_i=q}} \prod_{i=1}^l \left(a^{k}_{n_i}\right)\right] x^{q}  
\end{align*}
But as $a^k_i>0$ for all $i\geq 0$, we obtain that by unicity of the decompoisition that for all $m\geq 0$:
\begin{align*}
\phi_{k}(l,m)=\frac{d^m}{dt^m}|_{t=0}\frac{f_{k}^{l}(t)}{m!}>0
\end{align*}
and the result follows.
\end{proof}

\end{document}